\documentclass[twoside,11pt]{article}

%

%
%
%

\usepackage{jmlr2e}
\usepackage{hyperref,url}            
\usepackage{booktabs,nicefrac,microtype}      
\usepackage{times,graphicx,subfigure}
\usepackage{algorithm,algorithmic}
\usepackage{amsmath}\allowdisplaybreaks
\usepackage{amsfonts,amssymb}
\usepackage{lastpage}
\jmlrheading{24}{2023}{1-\pageref{LastPage}}{10/22; Revised
7/23}{9/23}{22-1210}{Haishan Ye, Luo Luo, Ziang Zhou, and Tong Zhang}
\ShortHeadings{Mudag}{Ye, Luo, Zhou, and Zhang}
\usepackage{multirow}
\usepackage{tablefootnote}


\newcommand{\cO}{\mathcal{O}}
\newcommand{\RR}{\mathbb{R}}

\newcommand{\bbs}{\bar{s}}

\newcommand{\bbg}{\bar{g}}
\newcommand{\bbG}{\bar{G}}

\newcommand{\TB}{\mathbb{T}}

\usepackage{bbm}

\newcommand{\xb}{\mathbf{x}}

\newcommand{\bA}{\mathbf{A}}

\newcommand{\zb}{\mathbf{z}}
\newcommand{\yb}{\mathbf{y}}
\newcommand{\bby}{\bar{y}}
\newcommand{\bbx}{\bar{x}}
\newcommand{\bs}{\mathbf{s}}
\newcommand{\bv}{\mathbf{v}}

\newcommand{\bbv}{\bar{v}}

\newcommand{\proximal}{{\rm{\bf prox}}}

\newcommand{\norm}[1]{\left\|#1\right\|}
\newcommand{\dotprod}[1]{\left\langle #1\right\rangle}

\newcommand{\argmin}{\mathop{\mathrm{argmin}}}

\usepackage[dvipsnames]{xcolor}

\def\mudag{\texttt{Mudag}}
\def\dapg{\texttt{ProxMudag}}
\newcommand{\geng}{\tilde{\nabla}}



\firstpageno{1}

\begin{document}
	
	\title{Multi-Consensus Decentralized Accelerated Gradient Descent}
	
	\author{\name Haishan Ye  \email yehaishan@xjtu.edu.cn \\
		\addr Center for Intelligent Decision-Making and Machine Learning\\
		School of Management\\
		Xi'an Jiaotong University\\
		Xi'an, China
		\AND
		\name Luo Luo\thanks{Corresponding author} \email luoluo@fudan.edu.cn\\
		\addr School of Data Science \\
		Fudan University \\
		Shanghai, China
		\AND
		\name Ziang Zhou \email 20071642r@connect.polyu.hk\\
		\addr Department of Computing \\
        The Hong Kong Polytechnic University \\
		Hong Kong, China
		\AND
		\name Tong Zhang \email tongzhang@tongzhang-ml.org \\
		\addr  Computer Science \& Mathematics \\
		The Hong Kong University of Science and Technology  \\
		Hong Kong, China}
	\editor{Ohad Shamir}
	
	\maketitle
	
	\begin{abstract}%
		This paper considers the decentralized convex optimization problem, which has a wide range of applications in large-scale machine learning, sensor networks, and control theory. 
		We propose novel algorithms that achieve optimal computation complexity and near optimal communication complexity. 
		Our theoretical results give affirmative answers to the open problem on whether there exists an algorithm that can achieve a communication complexity (nearly) matching the lower bound depending on the global condition number instead of the local one. 
		Furthermore, the linear convergence of our algorithms only depends on the strong convexity of global objective and it does \emph{not} require the local functions to be convex.  
		The design of our methods relies on a novel integration of well-known techniques
		including Nesterov's acceleration, multi-consensus and gradient-tracking. 
		Empirical studies show the outperformance of our methods for machine learning applications. 
	\end{abstract}
	
	\begin{keywords}
		consensus optimization, decentralized algorithm, accelerated gradient descent, gradient tracking, composite optimization
	\end{keywords}
	
	\section{Introduction}
	
	In this paper, we consider the decentralized optimization problem,
	where the objective function is composed of $m$ local functions
	$f_i(x)$ that are located on $m$ different agents.  
	The agents form a connected and undirected network 
	and each of them only accesses its local function and communicates with its neighbors. 
	All of the agents target to cooperatively solve the convex optimization problem
	\begin{equation}\label{eq:prob}
		\min_{x\in\RR^d} h(x)\triangleq f(x)+r(x) \quad \mbox{with}\quad f(x)\triangleq \frac{1}{m}\sum_{i=1}^m f_i(x),
	\end{equation}
	where $f(x)$ is $L$-smooth and $\mu$-strongly convex,  $r(x)$ is convex but may be non-differentiable. 
	Many machine learning models have the form~\eqref{eq:prob} such as logistic regression and elastic net regression.
	Decentralized optimization has been widely studied and applied in many applications such as large-scale machine learning \citep{tsianos2012consensus,kairouz2019advances}, automatic control \citep{bullo2009distributed,lopes2008diffusion}, wireless communication \citep{ribeiro2010ergodic}, and sensor networks \citep{rabbat2004distributed,khan2009diland}.
	
	Many decentralized optimization algorithms have been proposed.
	One class of them is primal-only methods, including decentralized gradient methods \citep{nedic2009distributed,yuan2016convergence}, decentralized accelerated gradient method \citep{jakovetic2014fast,qu2019accelerated} and \texttt{EXTRA} \citep{shi2015extra,li2019decentralized,mokhtari2016dsa}.
	They only access the gradients of $f_i(x)$ and are usually computationally efficient.
	Another class of algorithms are the dual-based decentralized algorithms, such as the dual subgradient ascent \citep{terelius2011decentralized}, dual gradient ascent and its accelerated version \citep{seaman2017optimal,uribe2018dual}, the primal-dual method \citep{lan2018communication,scaman2018optimal,hong2017prox}, and ADMM \citep{erseghe2011fast,shi2014linear}. However, dual-based algorithms commonly need more computation cost when the gradient of the dual function is not explicitly available. 
	
	There are several important open problems in the area of decentralized optimization.
	First, \citet{seaman2017optimal,scaman2019optimal} raised the problem 
	whether there exists an algorithm that has a (near) optimal communication complexity depending on the global condition number $\kappa_g=L/\mu$ instead of the local condition number $\kappa_{\ell}$ (defined in Eq.~\eqref{eq:kappa}).
	Since the data distributed on different agents are potentially quite different, the global condition number $\kappa_g$ could be much smaller than $\kappa_\ell$.
	In the extreme case, the local function $f_i$ may be non-strongly convex, then it is possible that $\kappa_{\ell}$ is infinitely large while $\kappa_{g}$ is still small.
	However, existing works only achieved  the optimal computation and communication complexities with respect to the local condition number $\kappa_\ell$ in the case of $r(x)=0$~\citep{kovalev2020optimal,li2021accelerated,song2021optimal,seaman2017optimal}.
	Furthermore, it is unclear whether the convexity of each individual $f_i(x)$
	is essential for computation-efficient and communication-efficient decentralized algorithms. 
	Most of existing algorithms with linear convergence rates such as \texttt{EXTRA} \citep{shi2015extra} and \texttt{OPAC} \citep{kovalev2020optimal} all require each $f_i(x)$ to be (strongly) convex.
	\cite{sun2019convergence} first proposed the linear-convergent algorithm that allows some individual functions to be non-convex.
	However, \citet{sun2019convergence}'s algorithm cannot achieve the (near) optimal computation and communication complexities.
	Finally, existing methods cannot achieve the optimal computation and (near) optimal communication complexities for non-differentiable $r(x)$~\citep{xu2021distributed,alghunaim2020decentralized,alghunaim2019linearly,sun2019convergence}.
	How to design computation and communication efficient accelerated decentralized proximal gradient descent is still an open question.
	
	This paper addresses the theoretical issues discussed above and designs two novel decentralized algorithms \dapg~and \mudag. 
	We summarize our contributions as follows:
	\begin{enumerate}
		\item Our algorithms have the optimal computation complexity
		$\cO\left(\sqrt{\kappa_g}\log({1}/{\epsilon})\right)$ and
		the near optimal communication complexity $\cO\big(\sqrt{{\kappa_g}/{(1-\lambda_2(W))}}\log\big({M\kappa_g}/{L}\big)\log({1}/{\epsilon})\big)$, where $M$ and $L$ are the smoothness parameters of $f_i(x)$ and $f(x)$ respectively. 
		To the best of our knowledge, this is the first (near) optimal decentralized algorithm that depends on the global condition number which  provides an affirmative answer to the open problem whether there exists an algorithm that can achieve a communication complexity of $\cO\left(\sqrt{{\kappa_g}/{(1-\lambda_2(W))}}\log({1}/{\epsilon})\right)$ or even close to it \citep{seaman2017optimal}.
		\item Our algorithms do \emph{not} require each individual function to be  convex. Hence, they can be used in a wider range of applications than existing optimal decentralized algorithms. For example, the sub-problem of fast PCA  by the shift-invert method is non-convex.
		\item The proposed \dapg~ can achieve optimal computation and (near) optimal communication complexity when $r(x)$ is convex but non-differentiable. 
		To the best of our knowledge, it obtains the best-known communication complexity for the decentralized strongly-convex optimization problems with the composite objective function.
	\end{enumerate}
	
	\section{Related Work}
	
	We first review the penalty-based algorithms.
	\cite{nedic2009distributed} proposed the well-known decentralized
	gradient descent method, where each agent performs a consensus step and a gradient descent step with a fixed step-size related to the penalty parameter.
	\cite{yuan2016convergence} proved the convergence rate of
	decentralized gradient descent and showed how the penalty parameter affects the computation complexity.
	To avoid the diminishing step-size commonly required in penalty-based algorithms, \cite{jakovetic2014fast} combined multi-consensus and Nesterov's acceleration to achieve the optimal computation complexity for minimizing non-strongly convex functions. 
	\cite{berahas2018balancing} proposed to use multi-consensus to achieve the balance between computation and communication complexity.
	Recently, \cite{li2018sharp} proposed \texttt{APM-C}, which employed multi-consensus and increased the penalty parameter properly for each iteration.
	Combining Nesterov's acceleration, \texttt{APM-C} can achieve a linear convergence rate and a low communication complexity.
	\cite{li2019communication} applied multi-consensus to network Newton method to achieve computation and communication efficiency. 
	
	Dual-based methods are another important research line. 
	These methods introduce a Lagrangian function and work in the dual space.
	There are different ways to solve the reformulated problem such as gradient descent method \citep{terelius2011decentralized}, accelerated gradient method \citep{seaman2017optimal,uribe2018dual}, primal-dual method \citep{lan2018communication,scaman2018optimal} and ADMM \citep{shi2014linear,erseghe2011fast}.
	However, such methods are typically computationally inefficient.
	For example, using the accelerated gradient method to solve the dual counterpart of the decentralized optimization problem can achieve optimal communication complexity \cite{seaman2017optimal,uribe2018dual}, but its computation complexity will have an additional dependency on the eigenvalue gap of gossip matrix~\citep{uribe2018dual}.
	
	The gradient-tracking method is a popular way to reduce the computational cost~\citep{qu2017harnessing,xu2015augmented,qu2019accelerated,di2016next,di2015distributed,sun2019convergence,nedic2017achieving,zhu2010discrete}.
	There are two different techniques for gradient-tracking. 
	One of them is keeping a variable to estimate the average gradient and uses this estimation in the gradient descent step~\citep{sun2019convergence,di2016next,qu2017harnessing}.
	Another one is introducing two different weight matrices to track the difference of gradients \citep{shi2015extra,li2019decentralized}.
	Recently, \citep{nedic2017achieving,li2020revisiting,jakovetic2018unification,xu2021distributed} studied the connection between these two strategies and showed that they can be transformed to each other. 
	Due to the tracking of history information, gradient-tracking based algorithms can achieve linear convergence rates for strongly convex objective functions \citep{qu2017harnessing,shi2015extra,nedic2017achieving,sun2019convergence}. 
	However, the previously obtained convergence rates and communication complexities
	are much worse than the results in this paper.
	
	\begin{table*}
		\begin{center}
			\scriptsize            
			\begin{tabular}{cccc}
				\hline
				Methods & Computation & Communication & Is $f_i(x)$ convex? \\
				\hline \addlinespace
				Acc-DNGD \citep{qu2019accelerated} &$\cO\left(\frac{\kappa_\ell^{5/7}}{(1-\lambda_2(W))^{1.5}}\log\left(\frac{1}{\epsilon}\right)\right)$  & $\cO\left(\frac{\kappa_\ell^{5/7}}{(1-\lambda_2(W))^{1.5}}\log\left(\frac{1}{\epsilon}\right)\right)$  & Yes\\ \addlinespace
				NIDS \citep{li2019decentralized} & $\cO\left(\big(\kappa_\ell+ \frac{1}{1-\lambda_2(W)}\big)\log\left(\frac{1}{\epsilon}\right)\right)$ & $\cO\left(\big(\kappa_\ell+ \frac{1}{1-\lambda_2(W)}\big)\log\left(\frac{1}{\epsilon}\right)\right)$ & Yes\\\addlinespace
				ADA \citep{uribe2018dual}\hspace*{-0.25cm}& $\cO\left(\frac{\kappa_\ell}{\sqrt{1-\lambda_2(W)}}\log^2\frac{1}{\epsilon}\right)$ & $\cO\left(\sqrt{\frac{\kappa_\ell}{1-\lambda_2(W)}}\log\left(\frac{1}{\epsilon}\right)\right)$ & Yes\\\addlinespace
				APM-C \cite{li2018sharp} & $\cO\left(\sqrt{\kappa_\ell}\log\left(\frac{1}{\epsilon}\right)\right)$  & $\cO\left(\sqrt{\frac{\kappa_\ell}{1-\lambda_2(W)}}\log^2\frac{1}{\epsilon}\right)$ &Yes\\\addlinespace
				Acc-EXTRA \citep{li2020revisiting} & $\widetilde{\cO}\left(\sqrt{\frac{\kappa_\ell}{1-\lambda_2(W)}}\log\left(\frac{1}{\epsilon}\right)\right)$  & $\widetilde{\cO}\left(\sqrt{\frac{\kappa_\ell}{1-\lambda_2(W)}}\log\left(\frac{1}{\epsilon}\right)\right)$ &Yes
				\\\addlinespace
				OPAC \citep{kovalev2020optimal} & $\cO\left( \sqrt{\kappa_\ell}\log \frac{1}{\epsilon} \right)$ & $\cO\left( \sqrt{\frac{\kappa_\ell}{1 - \lambda_2(W)}}\log \frac{1}{\epsilon} \right)$  &  Yes\\\addlinespace
				{\bf Mudag} (Algorithm~\ref{alg:DAGD}) & $\cO\left(\sqrt{\kappa_g}\log\left(\frac{1}{\epsilon}\right)\right)$ & $\widetilde{\cO}\left(\sqrt{\frac{\kappa_g}{1 - \lambda_2(W)}}\log\left(\frac{1}{\epsilon}\right)\right)$ &{\bf No}\\\addlinespace
				\hline\addlinespace
				Lower Bound \citep{seaman2017optimal} \hspace*{-0.25cm}& $\Omega\left(\sqrt{\kappa_g}\log\left(\frac{1}{\epsilon}\right)\right)$ & $\Omega\left(\sqrt{\frac{\kappa_g}{1-\lambda_2(W)}}\log\left(\frac{1}{\epsilon}\right)\right)$\tablefootnote{,,,}  & \\\addlinespace
				\hline
			\end{tabular}
		\end{center}
		\caption{Complexity comparisons between  our algorithm and existing works for smooth and strongly convex problems. That is,  $r(x) $ equals to zero in Problem~\eqref{eq:prob}. The notation $\cO(\cdot)$ hides the constant terms and $\widetilde{\cO}(\cdot)$ also hides $\log$ terms which are independent of $\epsilon$. }\label{table1}
		\vskip -0.2in
	\end{table*}
	
	\footnotetext{It holds that $\kappa_{g} = \Omega(\kappa_{\ell})$ for the  case used to prove the lower bound of communication complexity \citep{seaman2017optimal}.}
	
	The communication complexities of existing works commonly depend on the local condition number $\kappa_{\ell}$~\citep{seaman2017optimal,li2018sharp,li2019decentralized,li2021accelerated,kovalev2020optimal}. 
	Only \texttt{EXTRA}~\citep{shi2015extra} and \texttt{DIGing} \citep{nedic2017achieving} achieve computation and communication complexities depending on $\hat{\kappa}_g$ (defined in Eq.~\eqref{eq:kappa}), which is still worse than our results that depend on $\kappa_{g}$ and $\log\hat{\kappa}_g$. 
	Due to the fact $\kappa_{g} \le \hat{\kappa}_g \le m \kappa_g$, a communication complexity depending on $\kappa_{g}$ is preferred in real applications. 
 
	We summarize the results for the case of $r(x) = 0$ in Table~\ref{table1}.
	\texttt{Acc-DNGD} is most relevant to our algorithm.
	It also utilizes Nesterov's acceleration and gradient-tracking~\citep{qu2019accelerated}.
	However, the multi-consensus step in our algorithm can analog the centralized accelerated gradient descent more efficiently and leads the convergence analysis  almost to be the same as standard analysis~\citep{nesterov2018lectures} in the centralized scenario.
	In contrast, \texttt{Acc-DNGD} does not have such a good property and it does not achieve near optimal computation complexity nor near optimal communication complexity.
	Since our algorithm can effectively approximate the centralized accelerated gradient descent, our  algorithm does not require each individual function $f_i(x)$ to be convex but only requires $f(x)$ to be strongly convex while this condition is required in \texttt{Acc-DNGD}.
	Finally, the convergence rate of our algorithm depends on the global
	condition number $\kappa_g$, while that of
	\texttt{Acc-DNGD} depends on the local condition number $\kappa_\ell$.
	
	In \cite{seaman2017optimal}, a lower bound of communication
	complexity was obtained for the decentralized optimization problem,
	which is
	$\cO\big(\sqrt{{\kappa_g}/{(1-\lambda_2(W))}}\log({1}/{\epsilon})\big)$
	for strongly convex problems. A dual-based algorithm was proposed
	to match the lower bound.
	However, this method is only suitable for the cases where dual
	functions of each local agent are easy to compute. 
	Hence, the computation complexity of the method in
	\cite{seaman2017optimal}  severely deteriorates once the dual
	functions are computationally inefficient to work with.
	Recently, \cite{uribe2018dual} proposed an accelerated dual ascent
	algorithm  which achieves the same communication complexity as the one
	of \cite{seaman2017optimal}, but with a computation complexity of $\cO\big({\kappa_\ell}/{\sqrt{1-\lambda_2(W)}}\log^2({1}/{\epsilon})\big)$.
	
	Recently, \cite{li2020revisiting} proposed \texttt{Acc-EXTRA} by applying Catalyst to accelerate \texttt{EXTRA}.
	However, due to the lack of the multi-consensus,  \texttt{Acc-EXTRA} fails to  achieve the optimal computation complexity. 
	On the other hand, its communication complexity is also no better than \texttt{Mudag}.
	Furthermore,  Catalyst introduces an additional loop of iteration. 
	In contrast, \texttt{Mudag} is simple and easy to implement.
	\citet{kovalev2020optimal} proposed \texttt{OPAC}, which is a primal-dual based algorithm. The computation and communication complexities of \texttt{OPAC} are $\cO\left(\sqrt{\kappa_\ell} \log ({1}/{\epsilon}) \right)$ and~$\cO\big(\sqrt{{\kappa_{\ell}}/{(1 - \lambda_2(W))}} \log (1/\epsilon)\big)$ respectively, which depends on the local condition number.
	Additionally, it requires each $f_i(x)$ to be strongly convex.
	
	\begin{table*}
		\begin{center}
			\scriptsize
			\begin{tabular}{cccc}
				\hline 
				Methods & Computation & Communication & Is $f_i(x)$  convex?\\
				\hline\addlinespace
				NIDS\tablefootnote{,,,}  \citep{li2019decentralized} & $\cO\left(\left(\kappa_\ell+\frac{1}{1-\lambda_2(W)}\right)\log\left(\frac{1}{\epsilon}\right)\right)$ & $\cO\left(\left(\kappa_\ell+ \frac{1}{1-\lambda_2(W)}\right)\log\left(\frac{1}{\epsilon}\right)\right)$ & Yes\\\addlinespace
				D2P2 \citep{alghunaim2019linearly} & $\cO\left(\frac{\kappa_{\ell}}{1 - \lambda_2(W)}\log\left(\frac{1}{\epsilon}\right)\right)$  & $\cO\left(\frac{\kappa_{\ell}}{1 - \lambda_2(W)}\log\left(\frac{1}{\epsilon}\right)\right)$ &Yes\\\addlinespace
				\hline\addlinespace
				\textbf{ProxMudag} (Algorithm~\ref{alg:DAGD_p}) & $\cO\left(\sqrt{\kappa_g}\log\left(\frac{1}{\epsilon}\right)\right)$ & $\widetilde{\cO}\left(\sqrt{\frac{\kappa_g}{1 - \lambda_2(W)}}\log\left(\frac{1}{\epsilon}\right)\right)$ &{\bf No}\\\addlinespace
				\hline
			\end{tabular}
		\end{center}
		\caption{Complexity comparisons between  our algorithm and existing works for composite and strongly convex problems. }\label{table2}
		\vskip -0.2in
	\end{table*}
	
	\footnotetext{\citet{li2019decentralized} only gave a sublinear convergence rate for \texttt{NIDS} when $r(x)$ is convex, the linear convergence rate is proved in works \citep{alghunaim2020decentralized,xu2021distributed}.}

	For the case $r(x)$ is convex but non-differentiable, many gradient tracking based algorithms have been extended to decentralized composite optimization problems with a non-differentiable regularization term  such as \texttt{PG-EXTRA} \citep{ShiLWY15} and \texttt{NIDS} \citep{li2019decentralized}.
	However, due to the non-differentiable term, these algorithms can only achieve  sub-linear convergence rates.
	Recently, \cite{sun2019convergence} proposed a gradient
	tracking based method called 
	\texttt{SONATA}, and
	established a linear convergence rate with the assumption that $f(x)$
	is strongly convex.
	In addition, \citet{alghunaim2019linearly} proposed a primal-dual algorithm which can achieve a linear convergence rate when each $f_i(x)$ is convex.
	Recently, \citet{alghunaim2020decentralized,xu2021distributed} proposed a unified framework to analyze a large group of algorithms. They showed the algorithms including \texttt{EXTRA} (\texttt{PG-EXTRA}) \citep{shi2015extra}, \texttt{NIDS} \citep{li2019decentralized} and \texttt{Harnessing} \citep{qu2017harnessing} can also achieve linear convergence rates with a non-differentiable regularization term.
	Despite intensive studies in the literature, the convergence rates of
	these previous algorithms do not match the optimal convergence rate. 
	Moreover, the communication complexities achieved by algorithms
	analyzed in the framework of \citet{xu2021distributed} and \citet{alghunaim2020decentralized} are sub-optimal.
	The conference version of our paper proposed \texttt{DAPG} which achieves the optimal computation complexity and near optimal communication complexity \citep{Ye2020}. 
	However \texttt{DAPG} takes three multi-consensus steps while \dapg~in this paper only takes two multi-consensus steps.
	Thus, \dapg~can achieve better communication-efficiency than \texttt{DAPG}.
	We compare our \dapg~with existing state-of-the-art decentralized algorithms for the composite optimization in Table~\ref{table2}.

	\section{Preliminaries}
	
	We let $x_i\in\RR^d$ be the local copy of the variable of $x$ for agent $i$ and we introduce the aggregate variable $\xb\in\RR^{m\times d}$, aggregate objective function $F(\xb)$ and  aggregate gradient $\nabla F(\xb)\in\RR^{m\times d}$  as
	\begin{equation}
		\label{eq:x_def}
		\xb =
		\begin{bmatrix}
			x_1^\top\\
			\vdots\\
			x_m^\top
		\end{bmatrix},\quad
		F(\xb) = \frac{1}{m}  \sum_{i=1}^{m}f_i(x_i)\quad \text{and} \quad
		\nabla F(\xb) 
		=  
		\begin{bmatrix}
			\nabla f_1(x_1)^\top\\
			\vdots\\
			\nabla f_m(x_m)^\top
		\end{bmatrix}.
	\end{equation}
	We denote that 
	\begin{equation}
		\label{eq:xyg}
		\bbx_t = \frac{1}{m}\sum_{i=0}^{m} \xb_t^{(i)}, \quad \bby_t = \frac{1}{m}\sum_{i=0}^m \yb_t^{(i)}\quad\text{and}\quad\bbg_t = \frac{1}{m} \sum_{i=0}^m \nabla f_i(\yb_t^{(i)}),
	\end{equation}
	where $\xb^{(i)}$ means the $i$-th row of matrix $\xb$.
	Moreover, we use $\norm{\cdot}$ to denote the Frobenius norm of vector or matrix and use $\dotprod{x,y}$ to denote the inner product of vectors $x$ and $y$. 
	
	Furthermore, we denote
	\begin{equation*}
		R(\xb) = \frac{1}{m}  \sum_{i=1}^{m}r(x_i).
	\end{equation*}
	Accordingly, we introduce the proximal operator and aggregated proximal operator with respect to $r(\cdot)$ and $R(\cdot)$ as 
	\begin{equation}
		\label{eq:proxes}\small
		\begin{split}
			\!\proximal_{\eta,r}(x)=\argmin_{z\in\mathbb{R}^d}\Big(r(z)+\frac{1}{2\eta}\|z-x\|^2\Big)
			~~\text{and}~~
			\proximal_{m\eta,R}(\xb)=\argmin_{\zb\in\mathbb{R}^{m\times d}}\Big(R(\zb)+\frac{1}{2m\eta}\|\zb-\xb\|^2\Big).
		\end{split}
	\end{equation}
	Using above notations, we define the (aggregated) generalized gradients as 
	\begin{equation}
		\label{eq:g_grad}
		G_t=\eta^{-1}\left(\yb_t- \proximal_{\eta  m,R}(\yb_t-\eta\bs_t)\right)~~\mbox{and}~~ G_t^{(i)}=\eta^{-1}\left(\yb_t^{(i)}- \proximal_{\eta  ,r}(\yb_t^{(i)}-\eta\bs_t^{(i)})  \right).
	\end{equation} 
	We also denote
	\begin{align}
		\bbG_t = \frac{1}{m}\sum_{i=1}^m G_t^{(i)}. \label{eq:bbG}
	\end{align}
 
	Then we introduce the following definitions that will be used in the whole paper:
	\begin{itemize}
		\item  We say $f(x)$ is $L$-smooth if for all $x,y\in\RR^{d}$, it holds that
		\begin{equation*}
			f(y) \leq f(x) + \dotprod{\nabla f(x), y - x} + \frac{L}{2}\norm{y - x}^2.
		\end{equation*}
		\item We say $f(x)$ is $\mu$-strongly convex, if for all $x,y\in\RR^{d}$, it holds that
		\begin{equation*}
			f(y) \geq f(x) + \dotprod{\nabla f(x), y - x} + \frac{\mu}{2}\norm{y - x}^2.
		\end{equation*}
		\item We say $f_i(x)$ is locally $M_i$-smooth if for all $x,y\in\RR^{d}$, it holds that
		\begin{equation*}
			f_i(y) \leq f_i(x) + \dotprod{\nabla f_i(x), y - x} + \frac{M_i}{2}\norm{y - x}^2.
		\end{equation*}
		\item We say $f_i(x)$ is locally $\nu_i$-strongly convex if for all $x,y\in\RR^{d}$,
		it holds that
		\begin{equation*}
			f_i(y) \geq f_i(x) + \dotprod{\nabla f_i(x), y - x} + \frac{\nu_i}{2}\norm{y - x}^2.
		\end{equation*}
	\end{itemize}
	Based on the smoothness and strong convexity, we can define global and local condition numbers of the objective function as 
	\begin{equation}
		\label{eq:kappa}
		\kappa_g = \frac{L}{\mu},  \qquad \hat{\kappa}_g = \frac{M}{\mu}\qquad\mbox{and} \qquad \kappa_\ell = \frac{M}{\nu},
	\end{equation}
	where
	\begin{equation}
		M = \max_{i\in\{1,\dots,m\}} M_i \qquad\mbox{and}\qquad \nu = \min_{i\in\{1,\dots,m\}} \nu_i. 
	\end{equation}
	It is easy to verify that
	\begin{equation}
		\label{eq:lg}
		L \le M\qquad
		\mbox{and}
		\qquad
		\kappa_g \leq \hat{\kappa}_g  \le \kappa_\ell.
	\end{equation}
	
	For the topology of the network, we let $W$ be the weight matrix associated with the network, indicating how agents are connected to each other.
	We assume that the weight matrix $W$ has the following properties:
	\begin{enumerate}
		\item $W$ is symmetric with $W_{i,j} \neq 0$ if and if only agents $i$ and $j$ are connected or $i=j$;
		\item ${\mathbf{0}}\preceq W\preceq I$, $W{\mathbf{1}}  = {\mathbf{1}}$, ${\rm null}(I - W) = {\rm span}(\mathbf{1})$;
	\end{enumerate}
	where we use $I$ to denote the $m\times m$ identity matrix and ${\mathbf{1}} = [1,\dots,1]^\top\in\RR^m$ denotes the vector with all ones.
	The weight matrix has an important property that $W^\infty = \frac{1}{m}\mathbf{1}\mathbf{1}^\top$ \citep{xiao2004fast}.
	Thus, one can achieve the effect of averaging local $x_i$ on different
	agents by using $W\xb$ for iterations.
	Instead of directly multiplying $W$, \citet{liu2011accelerated} proposed a more efficient way to achieve averaging described in Algorithm~\ref{alg:mix}, which has the following important proposition.
	\begin{proposition}
		\label{lem:mix_eq}
		Let $\xb^K$ be the output of Algorithm~\ref{alg:mix} with $\eta_w = 1/(1+\sqrt{1-\lambda_2^2(W)})$ and we denote~$\bar{x} = \frac{1}{m}\mathbf{1}^\top\xb^0$.
		Then it holds that 
		\[
		\bar{x} = \frac{1}{m}\mathbf{1}^\top\xb^K \quad\mbox{and}\quad 
        \norm{\xb_K - \mathbf{1}\bar x} 
        \leq \sqrt{14} \left(1 - \left(1-\frac{1}{\sqrt{2}}\,\right) \sqrt{1-\lambda_2(W)}\right)^K \norm{\xb_0 - \mathbf{1}\bar x},
		\]
		where $\lambda_2(W)$ is the second largest eigenvalue of $W$. 
	\end{proposition}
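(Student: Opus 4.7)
The plan is to analyze Algorithm~\ref{alg:mix} as a Chebyshev-accelerated gossip iteration and to track separately how it acts on the component along $\mathbf{1}$ (the mean) and on the orthogonal component (the consensus error).

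First I would verify the average-preservation claim. The update of Algorithm~\ref{alg:mix} is an affine combination of $W\xb^{k}$ and earlier iterates whose scalar coefficients sum to $1$ (this is visible from the algorithm's definition, as it is what makes $\xb^k\equiv \mathbf{1}\bar x$ a fixed orbit). Since $W$ is doubly stochastic, $\mathbf{1}^\top W = \mathbf{1}^\top$, so multiplying the recursion on the left by $\tfrac{1}{m}\mathbf{1}^\top$ yields the same affine combination on the scalars $\tfrac{1}{m}\mathbf{1}^\top \xb^k$. A short induction from $\tfrac{1}{m}\mathbf{1}^\top\xb^0=\bar x$ gives $\tfrac{1}{m}\mathbf{1}^\top\xb^k=\bar x$ for every $k$, which is the first assertion.

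Next, for the error estimate, define $\ep^k := \xb^k - \mathbf{1}\bar x$. Because $W(\mathbf{1}\bar x)=\mathbf{1}\bar x$ and the coefficients sum to $1$, the $\ep^k$ obey the same linear second-order recursion as $\xb^k$ but with no forcing term. Using the spectral decomposition $W=\sum_{i=1}^{m}\lambda_i v_iv_i^\top$ with $\lambda_1=1$, $v_1=\mathbf{1}/\sqrt m$, and $\lambda_2(W)\ge\lambda_3\ge\cdots\ge 0$, the row-centering of $\ep^0$ forces every $\ep^k$ to lie in the span of $v_2,\dots,v_m$. Hence the matrix recursion decouples into independent scalar recursions, one per eigenvalue $\lambda\in\{\lambda_2,\dots,\lambda_m\}\subseteq[0,\lambda_2(W)]$.

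The heart of the proof is then a uniform bound, over $\lambda\in[0,\lambda_2(W)]$, on the size of the $K$-step iterate of the corresponding companion matrix. With the Chebyshev parameters baked into Algorithm~\ref{alg:mix}, the characteristic polynomial of each scalar recursion has a pair of complex-conjugate roots whose common modulus is at most $1-\sqrt{1-\lambda_2(W)}$, with equality at the extremal $\lambda=\lambda_2(W)$. Solving the recurrence in closed form (or invoking the standard Chebyshev estimate on the interval $[-\lambda_2(W),\lambda_2(W)]$) yields the per-mode contraction $|e^K|\le(1-\sqrt{1-\lambda_2(W)})^K|e^0|$, and summing squares across eigendirections promotes this to the stated Frobenius-norm bound on $\norm{\xb^K-\mathbf{1}\bar x}$.

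The main obstacle is this final uniform spectral-radius estimate: one has to calibrate the mixing coefficients exactly so that the companion matrix's eigenvalues attain modulus $1-\sqrt{1-\lambda_2(W)}$ at the worst-case $\lambda=\lambda_2(W)$ and are strictly smaller elsewhere, thereby improving the unaccelerated rate $\lambda_2(W)^K$ to the accelerated $(1-\sqrt{1-\lambda_2(W)})^K$. Steps one and two are routine linear-algebra bookkeeping; all the real work sits in the Chebyshev acceleration analysis in step three.
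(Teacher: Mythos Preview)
Your proposal is sound. For the average-preservation claim, the paper's argument is essentially the same as yours: it left-multiplies the recursion by $W^\infty=\tfrac{1}{m}\mathbf{1}\mathbf{1}^\top$ (equivalently, by $\tfrac{1}{m}\mathbf{1}^\top$), uses $W^\infty W=W^\infty$, and then inducts from the initialization $\xb^0=\xb^{-1}$. The only cosmetic difference is that the paper inducts on the differences $W^\infty(\xb^k-\xb^{k-1})=0$ rather than directly on $\tfrac{1}{m}\mathbf{1}^\top\xb^k=\bar x$; the two are equivalent.

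For the contraction rate, the paper does not give a proof at all: it simply cites \citep{liu2011accelerated}. Your Chebyshev-acceleration sketch---decouple along the eigenvectors $v_2,\dots,v_m$ of $W$, analyze the scalar second-order recursion via its companion matrix, and verify that with the specific step size $\eta_w=\tfrac{1-\sqrt{1-\lambda_2^2(W)}}{1+\sqrt{1-\lambda_2^2(W)}}$ the worst-case contraction factor over $\lambda\in[0,\lambda_2(W)]$ is bounded by $1-\sqrt{1-\lambda_2(W)}$---is exactly the argument behind that cited result. So you are supplying the content that the paper delegates to the reference. One small caveat worth making explicit when you flesh this out: at the extremal eigenvalue $\lambda=\lambda_2(W)$ the companion matrix has a repeated root and is non-diagonalizable, so a pure spectral-radius argument is not quite enough; you need the closed-form solution (equivalently, the Chebyshev-polynomial identity) to absorb the resulting linear-in-$K$ factor into the stated geometric bound.
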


	\begin{algorithm}[tb]
		\caption{Mudag}
		\label{alg:DAGD}
		\begin{small}
			\begin{algorithmic}[1]
				\STATE {\bf Input:} $x_0$,   $\eta$,  $\alpha$, and $K$ \\[0.15cm]
				\STATE {\bf Initialization:} $\xb_0 = \mathbf{1} x_0$, $\yb_0 = \xb_0$ \\[0.15cm]
				\STATE 
				$\xb_{1} = \mathrm{FastMix} (\yb_0 - \eta \nabla F(\yb_0))$ \\[0.15cm]
				\STATE 
				$\yb_1 = \xb_1 + \frac{1-\alpha}{1+\alpha}(\xb_1 - \xb_0)$ \\[0.15cm]
                \STATE \textbf{for} $t=1,\dots, T$  \textbf{do} \\[0.1cm]
				\STATE \quad $\xb_{t+1} = \mathrm{FastMix}(\yb_t + (\xb_t - \yb_{t-1}) - \eta (\nabla F(\yb_t) - \nabla F(\yb_{t-1})),\; K)$   \label{stp:mudag} \\[0.15cm]
				\STATE \quad $\yb_{t+1} = \xb_{t+1} + \frac{1-\alpha}{1+\alpha}\left(\xb_{t+1} - \xb_t\right)$ \\[0.15cm]
				\STATE \textbf{end for} \\[0.15cm]
				\STATE {\bf Output:} $\xb_T$
			\end{algorithmic}
		\end{small}
	\end{algorithm}	
	
	\begin{algorithm}[tb]
		\caption{FastMix}
		\label{alg:mix}
		\begin{small}
			\begin{algorithmic}[1]
				\STATE {\bf Input:} $\xb^{0}$, $K$, $W$ and $\eta_w$ \\[0.15cm]
                \STATE $\xb^{-1} = \xb^{0}$ \\[0.15cm]
				\STATE\textbf{for} $k=0,\dots, K$ \textbf{do} \\[0.15cm]
				\STATE\quad $\xb^{k+1} = (1+\eta_w)W\xb^k - \eta_w \xb^{k-1}$ \\[0.15cm]  
				\STATE\textbf{end for} \\[0.15cm]
				\STATE {\bf Output:} $\xb^K$
			\end{algorithmic}
		\end{small}
	\end{algorithm}

	\section{Multi-Consensus Decentralized Accelerated Gradient Descent}
	
	In this section, we propose two novel decentralized  algorithms achieving the optimal computation complexity and near optimal communication complexity. 
	These two algorithms are suitable for the case of $r(x) = 0$ and the case of $r(x)$ is general convex respectively.

	\subsection{Algorithms and Main Ideas} 
	\label{subsec:main_idea}
	Our algorithms are based on the multi-consensus, gradient-tracking and Nesterov's acceleration technique. 
	We first introduce \dapg~(Algorithm~\ref{alg:DAGD_p}) for solving the problem with $r(x)\neq 0$. 
	It has the following algorithmic procedure: 
	\begin{align}
		\xb_{t+1} =& \proximal_{\eta m,R}( \yb_t - \eta \bs_t), \label{eq:pnag1} 
		\\
		\yb_{t+1}=&\mathrm{FastMix}\left( \xb_{t+1}+\frac{1-\alpha}{1+\alpha}(\xb_{t+1}-\xb_{t}),K\right),  \label{eq:pnag2}
		\\
		\bs_{t+1} = &\mathrm{FastMix} (\bs_t + \nabla F(\yb_{t+1}) - \nabla F(\yb_t),\; K), \label{eq:pnag3}
	\end{align}
	where $\eta$ is the step size and $K$ is the step number in multi-consensus. 
	We can observe that Eqs.~\eqref{eq:pnag1} and~\eqref{eq:pnag2} belong to the algorithmic framework of accelerated proximal gradient descent since $\bs_t$ can approximate the average gradient.  
	In Eq.~\eqref{eq:pnag3}, we introduce $\bs_t$ to track the gradient by using history information and the gradient difference.  Thus, $\bs_t$ can well approximate the average gradient $\mathbf{1}\bbg_t$ (defined in Eq.~\eqref{eq:xyg}).
	Furthermore, the variable $\yb_t$ can also approximate $\mathbf{1}\bby_t$ well by the ``FastMix'' operator. 
	Since both $\bby_t$ and $\bbg_t$ well approximate the averages, then we can obtain that $\bbg_t \approx \nabla f(\bby_t)$.
	Thus, the convergence properties of our algorithm are similar to the centralized accelerated proximal gradient descent, which is the main idea behind our approach to the decentralized optimization. 
	In other words, we combine multi-consensus with gradient-tracking to approximate the centralized accelerated proximal gradient descent.
	As we will show, this seemingly simple idea leads to establishing a near optimal algorithm for the decentralized optimization.
	Note that Algorithm~\ref{alg:DAGD_p} only takes two multi-consensus steps at each iteration. In contrast, the algorithm in conference version~\citep[Algorithm 1]{Ye2020} of this paper requires three multi-consensus steps at each round.
	Though reducing one multi-consensus step will not improve the order of communication complexity, it requires much less communication cost and benefits in real applications.

	\begin{algorithm}[tb]
		\caption{ProxMudag}
		\label{alg:DAGD_p}
		\begin{small}
			\begin{algorithmic}[1]
				\STATE {\bf Input:} $x_0$, $\eta$, $\alpha$, $K$ \\[0.15cm]
				\STATE {\bf Initialization:} $\xb_0 = \mathbf{1} x_0$,  $\yb_0 = \xb_0 $, $\bs_0 = \nabla F(\xb_0)$ \\[0.15cm]
				\STATE \textbf{for} $t=0,\dots, T$  \textbf{do} \\[0.15cm]
				\STATE \quad $\xb_{t+1} =\proximal_{\eta m,R}( \yb_t - \eta \bs_t) $  \\[0.15cm]
				\STATE \quad $\yb_{t+1}=\mathrm{FastMix}\left( \xb_{t+1}+\frac{1-\alpha}{1+\alpha}(\xb_{t+1}-\xb_{t}),K\right)$  \\[0.15cm]
				\STATE \quad $\bs_{t+1} = \mathrm{FastMix} (\bs_t + \nabla F(\yb_{t+1}) - \nabla F(\yb_t),\; K)$  \\[0.15cm]
				\STATE \textbf{end for} \\[0.15cm]
				\STATE {\bf Output:} $\xb_T$
			\end{algorithmic}
		\end{small}
	\end{algorithm}
	
	In the case of $r(x)=0$, we propose \mudag~(Algorithm~\ref{alg:DAGD}) that only needs one multi-consensus step for each iteration. 
	The \mudag~has the following algorithmic procedure:
	\begin{align}
		\xb_{t+1} =& \mathrm{FastMix}(\yb_t + (\xb_t - \yb_{t-1}) - \eta (\nabla F(\yb_t) - \nabla F(\yb_{t-1})),\; K), \label{eq:xb_up}
		\\
		\yb_{t+1} =& \xb_{t+1} + \frac{1-\alpha}{1+\alpha}\left(\xb_{t+1} - \xb_t\right). \nonumber
	\end{align}
	To understand \mudag~from perspective of gradient tracking, we can reformulate the above procedure in a form similar to Eqs.~\eqref{eq:pnag1} to~\eqref{eq:pnag3} as follows (The reformulation is proved in Lemma~\ref{lem:proc})
	\begin{align}
		\xb_{t+1} =& \mathrm{FastMix}\left(\yb_t - \eta \bs_t, K\right), \label{eq:x_y}
		\\
		\yb_{t+1} =& \xb_{t+1} + \frac{1-\alpha}{1+\alpha}(\xb_{t+1} - \xb_t), \label{eq:x_y_1}
		\\
		\bs_{t+1} =& \mathrm{FastMix}(\bs_t  , K) + (\nabla F(\yb_{t+1}) -\nabla F(\yb_t))- \eta^{-1}(\mathrm{FastMix}(\yb_t, K) - \yb_t). \label{eq:s}
	\end{align} 
	Note that Eq.~\eqref{eq:s} is an explicit gradient tracking step similar to Eq.~\eqref{eq:pnag3}.
	Comparing Eqs.~\eqref{eq:x_y}-\eqref{eq:s} with Eqs.~\eqref{eq:pnag1}-\eqref{eq:pnag3}, we can observe that these two algorithms share a similar procedure since they share the same intuition.
	However,  the iteration of \dapg~cannot be improved to one multi-consensus step like \mudag. 
	If we directly replace Eq.~\eqref{eq:xb_up} by
	\begin{align*}
		\xb_{t+1} =& \mathrm{FastMix}\left( \proximal_{\eta  m,R} \left(\yb_t + (\xb_t - \yb_{t-1}) - \eta (\nabla F(\yb_t) - \nabla F(\yb_{t-1}))\right),\, K\right),
	\end{align*}
	it is easy to check that the algorithm cannot converge to the optimum. 
	
	Because \mudag~only has one multi-consensus step for each iteration while \dapg~takes two multi-consensus steps, in practice, \mudag~commonly requires much less communication cost than \dapg~when \mudag~is applicable.
	Thus, the \mudag~is a better choice than \dapg~in the case of $r(x) = 0$.

	\subsection{Main Results}

	In this work, we focus on the synchronized setting in which the computation complexity depends on the number of gradient calls and the communication complexity depends on the rounds of local communication. 
	We give the detailed upper complexity bounds for our algorithms in the following theorems.
	\begin{theorem}
		\label{thm:main_1}
		Let $f(x)$ be $L$-smooth and $\mu$-strongly convex. Assume each $f_i(x)$ is $M$-smooth. 
		We set~$\eta = {1}/{L}$ and $\alpha = \sqrt{\mu\eta}$ in Algorithm~\ref{alg:DAGD}.
		Letting $K$ in Algorithm~\ref{alg:DAGD} satisfy that
		\begin{equation*}
			K = \frac{\sqrt{2}}{\sqrt{2}-1}\sqrt{\frac{1}{1-\lambda_2(W)}}\log\left(\frac{\sqrt{14}}{\rho}\right)\qquad\mbox{with}\qquad
			\rho \le \frac{1}{4^3\cdot9\cdot 288 } \cdot \left(\frac{L}{M}\right)^4 \kappa_g^{-3},
		\end{equation*}
		then the sequence $\{\bbx_t\}$ satisfies that
		\begin{equation*}
        \small\begin{split}
			f(\bbx_T) - f(x^*) 
			\le
			\left(1 - \frac{\alpha}{2}\right)^T\left(f(\bbx_0) - f(x^*) + \frac{\mu\norm{\bbx_0 - x^*}^2}{2} + \frac{\mu}{288m}\sum_{i=1}^m\norm{\nabla f_i(\bbx_0) - \nabla f(\bbx_0)}^2 \right),
        \end{split}
		\end{equation*}
		where $x^*$ is the global minimum of $f(x)$.
		To achieve $\xb_T$ such that $\norm{\xb_T - \mathbf{1}x^*}^2  = \cO(m\epsilon/\mu)$ and~$f(\bbx_T) - f(x^*)\le \epsilon$, the computation and communication complexities of Algorithm~\ref{alg:DAGD}  are at most
		\begin{equation*}
			T =
			\cO\left(\sqrt{\kappa_g}\log\left(\frac{1}{\epsilon}\right)\right)\qquad
			\mbox{and}\qquad Q = \cO\left(\sqrt{\frac{\kappa_g}{1 -
					\lambda_2(W)}}\log\left(\frac{M\kappa_g}{L} \right)
			\log\left(\frac{1}{\epsilon}\right) \right)
		\end{equation*}
		respectively.
	\end{theorem}
	
	\begin{theorem}
		\label{thm:main}
		Let $f(x)$ be $L$-smooth and $\mu$-strongly convex. Assume each $f_i(x)$ is $M$-smooth. 
		We set~$\eta = {1}/{(2L)}$ and $\alpha = \sqrt{\mu\eta}$ in Algorithm~\ref{alg:DAGD_p}.
		Letting $K$ in Algorithm~\ref{alg:DAGD_p} satisfy that
		\begin{equation*}
			K = \frac{\sqrt{2}}{\sqrt{2}-1}\sqrt{\frac{1}{1-\lambda_2(W)}}\log\left(\frac{\sqrt{14}}{\rho}\right)\qquad\mbox{with}\qquad 
			\rho \le \frac{1}{ 5.5\cdot 10^8} \cdot \left(\frac{L}{M}\right)^6 \kappa_g^{-3/2},
		\end{equation*}
		then sequence $\{\bbx_t\}$ generated by Algorithm~\ref{alg:DAGD_p} satisfies that
		\begin{equation*}
            \small\begin{split}
			h(\bbx_T) - h(x^*) 
			\le
			\left(1 - \frac{\alpha}{2}\right)^T\left(h(\bbx_0) - h(x^*) + \frac{\mu\norm{\bbx_0 - x^*}^2}{2} + \frac{52L}{m}\sum_i^m\norm{\nabla f_i(\bbx_0) - \nabla f(\bbx_0)}^2 \right),
            \end{split}
		\end{equation*}
		where $x^*$ is the global minimum of $h(x)$.
		To achieve $\xb_T$ such that $\norm{\xb_T - \mathbf{1}x^*}^2  = \cO(m\epsilon/\mu)$ and~$h(\bbx_T) - h(x^*)\le \epsilon$, the computation and communication complexities of Algorithm~\ref{alg:DAGD}  are at most
		\begin{equation*}
			T =
			\cO\left(\sqrt{\kappa_g}\log\left(\frac{1}{\epsilon}\right)\right)\qquad
			\mbox{and}\qquad Q = \cO\left(\sqrt{\frac{\kappa_g}{1 -
					\lambda_2(W)}}\log\left(\frac{M\kappa_g}{L} \right)
			\log\left(\frac{1}{\epsilon}\right)\right)
		\end{equation*}
		respectively.
	\end{theorem}

	\begin{remark}
		Theorem~\ref{thm:main_1} shows that \mudag~ achieves the same order of computation complexity as that of the centralized Nesterov's accelerated gradient descent.
		At the same time, the communication complexity nearly
		matches the known lower bound of decentralized optimization
		problem up to a factor of $\log \left({M\kappa_g}/{L}\right)$.
		We conjecture that it may be possible to remove the
		$\log(\kappa_g)$ factor,
		because the term only comes from the inequality $\norm{\bby_t
			- x^*} \le \sqrt{2V_t/{\mu}}$, where $V_t$ is defined in
		Eq.~\eqref{eq:V_t} in the proof, which may be loose. 
	\end{remark}
	
	\begin{remark}
		Theorem~\ref{thm:main_1} and~\ref{thm:main} only assume that $f(x)$ is
		$\mu$-strongly convex and $L$-smooth, and $f_i(x)$ is
		$M$-smooth (note that unlike many previous works, our
		dependency on $M$ is logarithmic only).
		Thus, our algorithms can be used when $f_i(x)$ is possibly non-convex.
		This kind of problem has been widely studied in recent years \citep{allen2018katyusha,garber2016robust} and one important example is the fast PCA by shift-invert method \citep{garber2016robust}.
		In contrast, the previous works~\citep{seaman2017optimal,li2018sharp,li2019decentralized,qu2019accelerated,kovalev2020optimal,li2021accelerated} require the (strong) convexity of $f_i(x)$  to obtain the linear convergence rate.
	\end{remark}
	
	\begin{remark}
		The computation and communication complexities of our
		algorithms depend on $\sqrt{\kappa_g}$ rather than $\sqrt{\kappa_\ell}$, which is a novel result. 
		Before our work, it was unknown whether there exists a decentralized algorithm that can achieve a communication complexity close to the lower bound~$\Omega\left(\sqrt{{\kappa_g}/{(1 - \lambda_2(W))}} \log({1}/{\epsilon})\right)$~\citep{seaman2017optimal,scaman2019optimal}.
	\end{remark}

	\begin{remark}
		Observe that the step 3 of Algorithm~\ref{alg:DAGD} resorts to  multi-consensus and gradient tracking to encourage $\xb(i,:)$ on different agents to be close to each other. 
		Similarly, for the centralized distributed optimization problem, the consensus step is also needed, which is often implemented by two rounds of communications between agents and the central server. 
		In this view, the centralized optimization methods and the decentralized one only differ in the way to achieve consensus.
		We can also regard the decentralized optimization methods as an approximation to the decentralized one.
	\end{remark}

	\section{Convergence Analysis}
	\label{sec:converg}
	
	In this section, we give a detailed characterization on how our decentralized algorithms approximate accelerated (proximal) gradient descent.
	Since \mudag~ and \dapg~ have similar ideas for convergence analysis, we only present how to obtain the convergence rate of \dapg~ in this section and leave the analysis of \mudag~in Appendix~\ref{app:mudag}.
	Note that the analysis of \dapg~may be more sophisticated than the one of \mudag~because of the additional step of proximal operation. 
	
	We first introduce the Lyapunov function as follows
	\begin{equation}
		\label{eq:V_t}
		V_t = h(\bbx_t) - h(x^*) + \frac{\mu}{2}\norm{\bbv_t - x^*}^2,
	\end{equation}
	where $\bbv_t$ is defined as
	\begin{equation}
		\label{eq:v_t}
		\bbv_t = \bbx_{t-1} + \frac{1}{\alpha}(\bbx_t - \bbx_{t-1}) \quad\mbox{with}\qquad \alpha = \sqrt{\mu \eta}.
	\end{equation}
	In the rest of this section, we will show how the Lyapunov function
	$V_t$ converges and how multi-consensus and gradient-tracking help us
	to approximate centralized accelerated proximal gradient descent.
	
	Then we show that $\bbx_t$, $\bby_t$, $\bbg_t$ (defined in
	Eq.~\eqref{eq:xyg} and generated by Algorithm \ref{alg:DAGD_p}) and
	$\bbv_t$ (defined in Eq.~\eqref{eq:v_t}) can be fit into the framework of the centralized  accelerated proximal gradient descent.
	\begin{lemma}
		\label{lem:equalities}
		Let $\bbx_t$, $\bby_t$, $\bbg_t$ and $\bbG_t$ (defined in Eqs.~\eqref{eq:xyg} and~\eqref{eq:bbG}) be generated by Algorithm~\ref{alg:DAGD_p}. 
		By setting~$\bbs_t = \frac{1}{m} \mathbf{1}^\top \bs_t$ with  $\bs_t$ defined in Eq.~\eqref{eq:pnag3}, 
		it satisfies:
		\begin{align}
			\bbx_{t+1} =&
			\bby_t - \eta\bbG_t,    \label{eq:xy_1}
			\\
			\bby_{t+1} =& \bbx_{t+1} + \frac{1-\alpha}{1+\alpha}(\bbx_{t+1} - \bbx_t) \label{eq:yx},\\
			\bbs_{t+1}=& \bbs_t+\bbg_{t+1} - \bbg_t = \bbg_{t+1}. \label{eq:bs}
		\end{align}
	\end{lemma} 
	\begin{proof}
		Proposition~\ref{lem:mix_eq} provides a property of  FastMix that 
		$\bbx = \frac{1}{m}\mathbf{1}^\top \mathrm{FastMix}(\xb, K)$.
		Combining the algorithmic procedures of Algorithm~\ref{alg:DAGD_p} and the definition of $\bbG_t$ in Eq.~\eqref{eq:bbG}, we can obtain the first two equations.
		
		We first the last equality by induction. For $t=0$, we use the fact that $\bs_0 = \nabla F(\yb_0)$.  
		Then, it holds that $\bbs_0 = \bbg_0$.
		We assume that $\bbs_t = \bbg_t$ at time $t$. 
		By the update equation \eqref{eq:pnag3} and Proposition~\ref{lem:mix_eq}, we have
		\begin{equation*}
			\bbs_{t+1} = \bbs_t + (\bbg_{t+1} - \bbg_t) = \bbg_{t+1}.
		\end{equation*}
		Thus, we obtain the result at time $t+1$. 		
	\end{proof}

	Lemma~\ref{lem:equalities} shows that the averaged version of  Eqs.~\eqref{eq:pnag1}-\eqref{eq:pnag3} is almost the same as   accelerated proximal gradient descent \citep{nesterov2018lectures}.
	Thus, if $\bbs_t$ is an accurate estimation of $\nabla f(\bby_t)$, then Algorithm~\ref{alg:DAGD_p} has convergence properties similar to accelerated proximal gradient descent.
	Next, we are going to show $\yb_t(i,:) \approx \bby_t$ and $\bs_t(i,:)\approx \bbs_t$ by the following lemma.
	\begin{lemma}
		\label{lem:yvs}
		Let $\zb_t=[\norm{\xb_{t}-\mathbf{1}\bbx_{t}},\norm{\yb_{t}-\mathbf{1}\bby_{t}}, \eta \norm{\bs_{t}-\mathbf{1}\bbs_{t}}]^\top$ with $\xb_t$, $\yb_t$ and $\bs_t$ generated by Algorithm~\ref{alg:DAGD_p}, then it holds that
		\begin{equation}
			\label{eq:zaz}
			\zb_{t+1}\le \bA\zb_t+ \frac{4 \rho\sqrt{m} M}{L}\left[0,0,\sqrt{\frac{2V_t}{\mu}}\,\right]^\top,
		\end{equation}
		where $\rho$ and $\mathbf{A}$ are defined as 
		\begin{equation} \label{eq:A}
			\rho = \left(1 - \sqrt{1 - \lambda_2(W)}\,\right)^K 
			\quad \text{and}\quad
			\bA=\begin{bmatrix}
				0 & 2 & 2\\
				2\rho& 4\rho &4\rho \\
				\rho M/L&   8\rho M^2/L^2   & 5 \rho M/L 
			\end{bmatrix}.
		\end{equation}
	\end{lemma}
	If the spectral radius of $\mathbf{A}$ is less than $1$ and $V_t$ converges to zero, then $\norm{\zb_t}$ will converge to zero.
	Note that $\norm{\xb_t - \mathbf{1}\bbx_t}$, $\norm{\yb_t-\mathbf{1}\bby_t}$ and $\eta \norm{\bs_t-\mathbf{1}\bbs_t}$ are no larger than $\norm{\zb_t}$.
	Hence, Algorithm~\ref{alg:DAGD_p} can well approximate centralized accelerated proximal gradient descent in such conditions.
	
	Next, we prove above two conditions that lead to the convergence of $\norm{\zb_t}$. 
	First, the following lemma shows the spectral radius of $\mathbf{A}$ is less than $\frac{1}{2}$ if $\rho$ is small enough.
	\begin{lemma}
		\label{lem:lam_max}
		Matrix  ${\mathbf{A}}$ defined in Eq.~\eqref{eq:A} satisfies that
		\begin{equation*}
			0<\lambda_1(\mathbf{A}) \qquad \text{and} \qquad |\lambda_3(\mathbf{A})| \le |\lambda_2(\mathbf{A})|< \lambda_1(\mathbf{A})
		\end{equation*}
		with $\lambda_i(\mathbf{A})$ being the $i$-th largest eigenvalue of $\mathbf{A}$. 
		Letting $\eta = 1/(2L)$ and $\rho \le {L^3}/{(1280 M^3)}$,
		then it holds that  
		\begin{equation*}
			\lambda_1({\mathbf{A}})\le \frac12,
		\end{equation*}
		and the eigenvector $\bv$ associated with $\lambda_1(\mathbf{A})$ is positive and its entries satisfy
		\begin{equation}
			\label{eq:v_v}
			0<\bv(1) \le \frac{8\bv(3)}{\sqrt{\rho}}, \qquad 0<\bv(2) \le 3\bv(3) \qquad \text{and} \quad 0<\bv(3),
		\end{equation}  
		where $\bv(i)$ is $i$-th entry of $\bv$.
	\end{lemma}
	Now, we are going to show  $V_t$ converges linearly but with some perturbation terms related to $\zb_t$.
	\begin{lemma} \label{lem:VV}
		Letting $\xb_t, \yb_t, \bs_t$ be generated by Algorithm~\ref{alg:DAGD_p}, it holds that
		\begin{align}
			V_{t+1} 
			\le 
			(1 - \alpha) V_t  
			+ \frac{13\eta}{m} \norm{\bs_t - \mathbf{1}\bbs_t}^2 + \frac{20M^2 \eta + 10\eta^{-1}}{m} \norm{\yb_t - \mathbf{1}\bby_t}^2. \label{eq:VV}
		\end{align}
	\end{lemma}
	The above lemma shows that the Algorithm~\ref{alg:DAGD_p} has a convergence property similar to the accelerated proximal gradient descent but with some perturbation terms.
	Next, using above lemmas and choosing proper $\rho$ by a proper $K$, we will obtain the convergence rate of Algorithm~\ref{alg:DAGD_p}.

    Finally, we provide the proof of our main result Theorem~\ref{thm:main}.
	\begin{proof}
		It is easy to check that $\rho$ satisfies the conditions required in Lemma~\ref{lem:lam_max}.
		Let the eigenvector $\bv$ defined in Lemma~\ref{lem:lam_max} and set $\bv(3)=1$. 
		Combining with the fact that the first two entries of $\zb_0$ are zero, we can obtain that,
		\begin{equation*}
			\zb_0 \le \norm{\zb_0}\bv \qquad \mbox{and}\qquad [0,0, 1]^\top \le \bv.
		\end{equation*} 
		By Eq.~\eqref{eq:zaz}, we can obtain that
		\begin{equation}
			\label{eq:nm_z}
			\begin{aligned}
				\zb_{t+1} 
				\le& 
				\norm{\zb_0} \cdot\mathbf{A}^{t+1}\bv 
				+ 
				\frac{4\rho M}{L}\sqrt{\frac{2m}{\mu}}\cdot\sum_{i=0}^t\sqrt{V_i}\cdot\mathbf{A}^{t-i}\bv
				\\
				=&
				\norm{\zb_0}\lambda_1(\mathbf{A})^{t+1}\bv 
				+
				\frac{4\rho M}{L}\sqrt{\frac{2m}{\mu}}\cdot\sum_{i=0}^t\sqrt{V_i}\cdot\lambda_1(\mathbf{A})^{t-i}\bv 
				\\
				\le&
				\norm{\zb_0}\left(\frac{1}{2}\right)^{t+1}\bv 
				+
				\frac{4\rho M}{L}\sqrt{\frac{2m}{\mu}}\cdot\sum_{i=0}^t\left(\frac{1}{2}\right)^{t-i}\sqrt{V_i}\cdot\bv,
			\end{aligned}
		\end{equation}
		where the first equality is because $\bv$ is the eigenvector associated with $\lambda_1(\mathbf{A})$ and the last inequality is because of $\lambda_1(\bA) \le \frac{1}{2}$ obtained in Lemma~\ref{lem:lam_max}.
		
		Next, we will prove our result by induction.  For $t = 0$, we have $\norm{\yb_0 - \mathbf{1}\bby_0} = 0 $ and 
		\begin{align*}
			V_1 
			\stackrel{\eqref{eq:VV}}{\le}&
			(1 - \alpha) V_0 + \frac{13\eta^{-1}}{m} (\eta\norm{\bs_0 -\mathbf{1} \bbs_0})^2 
			= (1 - \alpha) V_0 + \frac{13\eta^{-1}}{m} \norm{\zb_0}^2
			\\
			\le& (1 - \alpha) V_0 + \left(1 - \frac{\alpha}{2}\right) \frac{52L}{m} \norm{\zb_0}^2
			\le \left(1 - \frac{\alpha}{2}\right) \left( V_0 + \frac{52L}{m} \norm{\zb_0}^2 \right).
		\end{align*}
		Next, we assume that for $i= 1,\dots, t$,  it holds that 
		\begin{equation}
			\label{eq:assmp}
			V_i \leq \left(1 - \frac{\alpha}{2}\right)^i \left(V_0+\frac{52L}{m}\norm{\zb_0}^2\right).
		\end{equation}
		Combining with Eq.~\eqref{eq:nm_z}, we can obtain that
		\begin{equation}
			\label{eq:z}
			\begin{aligned}
				& \zb_{t-1} \\
				\le&
				\bv\cdot\left(4\rho \frac{M}{L} \sqrt{\frac{2m}{\mu}} \sum_{j=0}^{t-2} 2^{-(t-2-j)} \sqrt{V_j}
				+
				2^{-(t-1)}\norm{\zb_0}
				\right)
				\\
				\le&
				\bv \cdot
				\left(
				4\rho \frac{M}{L} \sqrt{\frac{2m}{\mu}}\sum_{j=0}^{t-2} 2^{-(t-2-j)} \left(\sqrt{1 - \frac{\alpha}{2}}\right)^j \sqrt{V_0+\frac{52L}{m} \norm{\zb_0}^2}
				+
				2^{-(t-1)}\norm{\zb_0}
				\right)
				\\
				=&
				\bv\cdot
				\left(4\rho\frac{M}{L}\sqrt{\frac{2m}{\mu}}\frac{2\left(\sqrt{1 - \frac{\alpha}{2}} \right)^{t-1}- 2^{-(t-2)}}{2\sqrt{1-\frac{\alpha}{2}}-1} \sqrt{V_0+\frac{52L}{m} \norm{\zb_0}^2}
				+
				2^{-(t-1)}\norm{\zb_0}
				\right)
				\\
				\le&
				\bv\cdot
				\left(
				12\frac{M}{L}\sqrt{\frac{2m}{\mu}}\left(\sqrt{1 - \frac{\alpha}{2}} \right)^{t-1} \sqrt{V_0+\frac{52L}{m} \norm{\zb_0}^2}
				+
				2^{-(t-1)}\norm{\zb_0}
				\right).
			\end{aligned}
		\end{equation}
		Thus, using the definition of $\zb$ and $\bA$, we can obtain that
		\begin{align*}
			& \norm{\yb_{t} - \mathbf{1}\bby_{t}} \\ 
			\stackrel{\eqref{eq:zaz}}{\le}&
			\dotprod{[2\rho, 4\rho, 4\rho ], \zb_{t-1}}
			\\
			\overset{\eqref{eq:z}}{\le}&
			\dotprod{[2\rho, 4\rho, 4\rho ], \bv} \cdot \left(
			\frac{12M}{L}\sqrt{\frac{2m}{\mu}}\left(\sqrt{1 - \frac{\alpha}{2}} \right)^{t-1} \sqrt{V_0+\frac{52L}{m} \norm{\zb_0}^2}
			+
			2^{-(t-1)}\norm{\zb_0}
			\right)
			\\
			\le& \rho\left(2\bv(1) + 4\bv(2) + 4\right) \cdot
			\left(
			\frac{12M}{L}\sqrt{\frac{2m}{\mu}}\left(\sqrt{1 - \frac{\alpha}{2}} \right)^{t-1} \sqrt{V_0+\frac{52L}{m} \norm{\zb_0}^2}
			+
			2^{-(t-1)}\norm{\zb_0}
			\right)
			\\
			\stackrel{\eqref{eq:v_v}}{\le}&
			\rho\cdot \left( \frac{16}{\sqrt{\rho}} + 12 + 4 \right) 
			\cdot
			\left(
			\frac{12M}{L}\sqrt{\frac{2m}{\mu}}\left(\sqrt{1 - \frac{\alpha}{2}} \right)^{t-1} \sqrt{V_0+\frac{52L}{m} \norm{\zb_0}^2}
			+
			2^{-(t-1)}\norm{\zb_0}
			\right)
			\\
			\le&\sqrt{\rho} \cdot 32 \cdot 
			\left(
			\frac{12M}{L}\sqrt{\frac{2m}{\mu}}\left(\sqrt{1 - \frac{\alpha}{2}} \right)^{t-1} \sqrt{V_0+\frac{52L}{m} \norm{\zb_0}^2}
			+
			2^{-(t-1)}\norm{\zb_0}
			\right).
		\end{align*}
		Then we have
		\begin{equation}\label{eq:yy}
			\begin{aligned}
				&\frac{20M^2\eta + 10\eta^{-1}}{m} \norm{\yb_t - \mathbf{1}\bby_t}^2 \\
				=&  \frac{10M^2/L + 20 L}{m} \norm{\yb_t - \mathbf{1}\bby_t}^2
				\\
				\le&  \frac{30M^2/L}{m} \cdot 2 \cdot 32^2 \cdot \rho 
				\left(
				\frac{288m M^2}{L^2 \mu} \left( 1 - \frac{\alpha}{2}\right)^{t-1} \left(V_0+\frac{52L}{m} \norm{\zb_0}^2\right) + 4^{-(t-1)} \norm{\zb_0}^2
				\right).
			\end{aligned}
		\end{equation}	
		Similarly, we have 
		\begin{align*}
            \small\begin{split}      
			&\eta \norm{\bs_t - \mathbf{1}\bbs_t} \\
			\le&
			\rho \dotprod{\left[\frac{M}{L}, \frac{8M^2}{L^2}, \frac{5M}{L}\right], \zb_{t-1}} + \frac{4\rho M}{L}\sqrt{\frac{2m}{\mu}V_{t-1}}
			\\
			\stackrel{\eqref{eq:assmp}\eqref{eq:z}}{\le}&
			\rho \left(\frac{24M^2}{L^2} + \frac{8M}{L\sqrt{\rho}}   + \frac{5M}{L}\right)\cdot
			\left(
			\frac{12M}{L}\sqrt{\frac{2m}{\mu}}\left(\sqrt{1 - \frac{\alpha}{2}}\,\right)^{t-1} \sqrt{V_0+\frac{52L}{m} \norm{\zb_0}^2}
			+
			2^{-(t-1)}\norm{\zb_0}
			\right)
			\\
			&+
			\frac{4\rho M}{L} \sqrt{\frac{2m}{\mu}}\left(\sqrt{1 - \frac{\alpha}{2}}\,\right)^{t-1} \sqrt{V_0+\frac{52L}{m} \norm{\zb_0}^2}\\		
			\le&\frac{72M^2 \sqrt{\rho}}{L^2}
			\cdot
			\left(
			\frac{12M}{L}\sqrt{\frac{2m}{\mu}}\left(\sqrt{1 - \frac{\alpha}{2}}\,\right)^{t-1} \sqrt{V_0+\frac{52L}{m} \norm{\zb_0}^2}
			+
			2^{-(t-1)}\norm{\zb_0}
			\right).
            \end{split}
		\end{align*}
		Consequently,  it holds that
		\begin{equation} \label{eq:ss}
			\begin{split}
				&\frac{13\eta}{m} \norm{\bs_t - \mathbf{1}\bbs_t}^2
				=
				\frac{13}{\eta m} \cdot \left( \eta \norm{\bs_t - \mathbf{1}\bbs_t} \right)^2
				\\
				\le&
				\frac{52M^4}{mL^3}  \cdot 72^2 \cdot \rho 
				\left(
				\frac{288m M^2}{L^2 \mu} \left( 1 - \frac{\alpha}{2}\right)^{t-1} \left(V_0+\frac{52L}{m} \norm{\zb_0}^2\right) + 4^{-(t-1)} \norm{\zb_0}^2
				\right).
			\end{split}
		\end{equation}
		Combining above results, we obtain 
		\begin{align*}
        \small\begin{split} 
			V_{t+1} & \\
			\stackrel{\eqref{eq:VV},\eqref{eq:yy},\eqref{eq:ss}}{\le} &
			\left(1 - \alpha\right) V_t \\
			&+  \frac{\left( 60\cdot 32^2 + 52\cdot 72^2 \right)\rho M^4}{mL^3} \left(
			\frac{288m M^2}{L^2 \mu} \left( 1 - \frac{\alpha}{2}\right)^{t-1} \left(V_0+\frac{52L}{m} \norm{\zb_0}^2\right) + 4^{-(t-1)} \norm{\zb_0}^2
			\right)
			\\
			\le~~~~~& (1 - \alpha) \left(1-\frac{\alpha}{2} \right)^t \left(V_0+ \frac{52L}{m}\norm{\zb_0}^2\right)
			+ 1.91\cdot10^8\cdot \frac{\rho M^6}{L^6} \kappa_{g} \left(1-\frac{\alpha}{2} \right)^t \left(V_0+ \frac{52L}{m}\norm{\zb_0}^2\right) \\
			& + 6366 \cdot  \frac{\rho M^2}{L^2} 4^{-(t-1)} \cdot  \frac{52L}{m} \norm{\zb_0}^2
			\\
			\le~~~~~&  (1 - \alpha) \left(1-\frac{\alpha}{2} \right)^t \left(V_0+ \frac{52L}{m}\norm{\zb_0}^2\right) + 1.92\cdot10^8\cdot \frac{\rho M^6}{L^6} \kappa_{g} \left(1-\frac{\alpha}{2} \right)^t \left(V_0+ \frac{52L}{m}\norm{\zb_0}^2\right)
			\\
			\le~~~~~& \left( 1 - \frac{\alpha}{2} \right)^{t+1} \cdot \left(V_0+ \frac{52L}{m}\norm{\zb_0}^2\right),
        \end{split}
		\end{align*}
		where the second inequality is because of the induction assumption and the last inequality is due to~$\rho \le {L^6}/\big(5.5\cdot10^8\cdot \kappa_g^{3/2} M^6\big)$.		
		Furthermore, it holds that
		\begin{align*}
			& \norm{\xb_t - \mathbf{1}\bbx_t} \\
			= &
			2 \dotprod{[0, 1, 1], \zb_t} \\
			\stackrel{\eqref{eq:z}}{\le} &
			2\dotprod{[0, 1, 1], \bv} \cdot  \left(
			\frac{12M}{L}\sqrt{\frac{2m}{\mu}}\left(\sqrt{1 - \frac{\alpha}{2}} \right)^{t-1} \sqrt{V_0+\frac{52L}{m} \norm{\zb_0}^2}
			+
			2^{-(t-1)}\norm{\zb_0} 
			\right)
			\\
			\stackrel{\eqref{eq:v_v}}{\le}& 8\cdot  \left(
			\frac{12M}{L}\sqrt{\frac{2m}{\mu}}\left(\sqrt{1 - \frac{\alpha}{2}} \right)^{t-1} \sqrt{V_0+\frac{52L}{m} \norm{\zb_0}^2}
			+
			2^{-(t-1)}\norm{\zb_0} 
			\right),
		\end{align*}
		which concludes the proof.
	\end{proof}

	\section{Experiments}
	
	We evaluate the performance of our algorithms on (sparse) logistic regression
	with different settings, including the situation in which each $f_i(x)$ is strongly convex and the local function $f_i(x)$ may be non-convex.
	
	\subsection{The Setting of Networks}
	
	In our experiments, we consider random networks where each pair of agents have a connection with a probability of $p$. We set $W = I - {\mathbf{L}}/{\lambda_1(\mathbf{L})}$, where $\mathbf{L}$ is the Laplacian matrix associated with a weighted graph, and $\lambda_1(\mathbf{L})$ is the largest eigenvalue of $\mathbf{L}$. 
	We also set $m = 100$, that is, there exist $100$ agents in this network.
	In our experiments, we run the algorithms on the settings of $p = 0.1$ and~$p = 0.5$, which correspond to $1-\lambda_2(W)=0.05$ and $1-\lambda_2(W)=0.81$ respectively.

	\subsection{Experiments on $\ell_2$-Regularized Logistic Regression}
	We consider the $\ell_2$-regularized logistic regression model whose local objective function of logistic regression is defined as
	\begin{equation}
		f_i(x) = \frac{1}{n}\sum_{j=1}^{n} \log \big(1+\exp(-b_{ij}\langle a_{ij}, x\rangle)\big) + \frac{\sigma_i}{2}\|x\|^2, \label{eq:log}
	\end{equation}
	where $a_{ij} \in \RR^{d}$ and $b_{ij}\in\{-1,1\}$ are the $j$-th input vector and the corresponding label on the $i$-th agent. 
	We conduct our experiments on a real-world dataset `a9a' which can be downloaded from LIBSVM repository~\citep{chang2011libsvm}.
	We set~$n=325$ and $d = 123$. 
	We conduct the following four experimental settings:
	\begin{enumerate}
		\item We set $\sigma_1=\dots=\sigma_m=10^{-3}$, then each $f_i(x)$ is strongly-convex.
		\item We set $\sigma_1=\dots=\sigma_m=10^{-4}$, then each $f_i(x)$ is strongly-convex.
		\item We set $\sigma_1=\dots=\sigma_{m-1} = -10^{-1}$ and $\sigma_m = 10$, then  functions $f_i(x)$ for $i<m$ are non-convex  but $f(x)$ is still strongly-convex.
		\item We set $\sigma_1=\dots=\sigma_{m-1} = -10^{-2}$ and $\sigma_m = 1$, then  functions $f_i(x)$  for $i<m$ are non-convex but $f(x)$ is still strongly-convex.
	\end{enumerate}
	
	\begin{figure*}[ht!]
		\centering
		\begin{tabular}{@{\extracolsep{0.0001em}}c@{\extracolsep{0.0001em}}c@{\extracolsep{0.0001em}}c@{\extracolsep{0.0001em}}c@{\extracolsep{0.0001em}}c}
			\includegraphics[width=38mm,keepaspectratio]{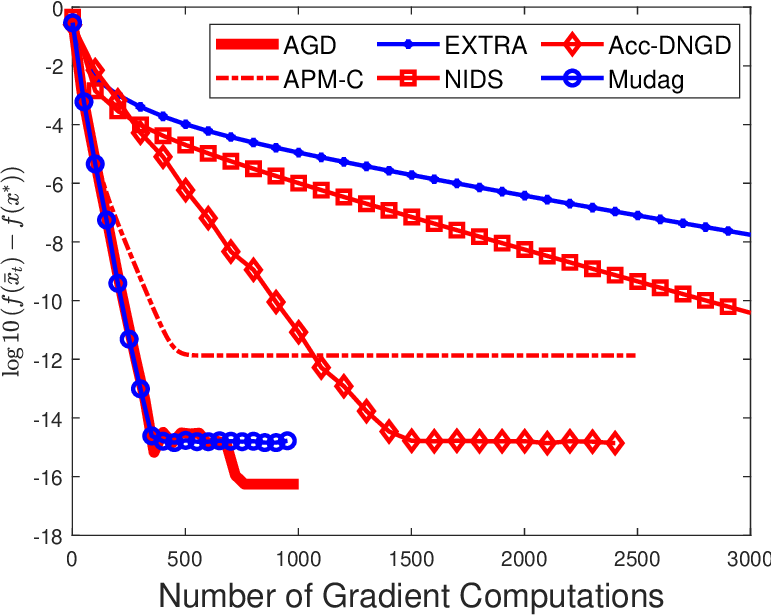}
			&\includegraphics[width=38mm,keepaspectratio]{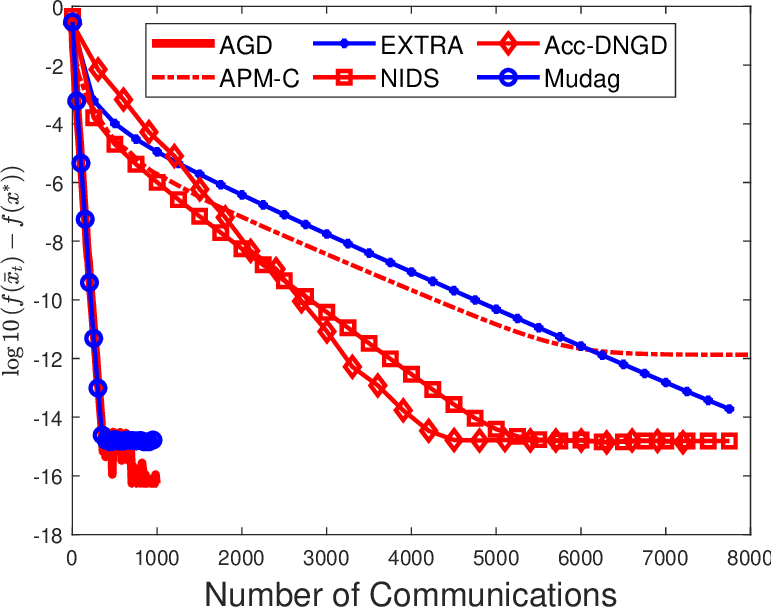}
			&\includegraphics[width=38mm,keepaspectratio]{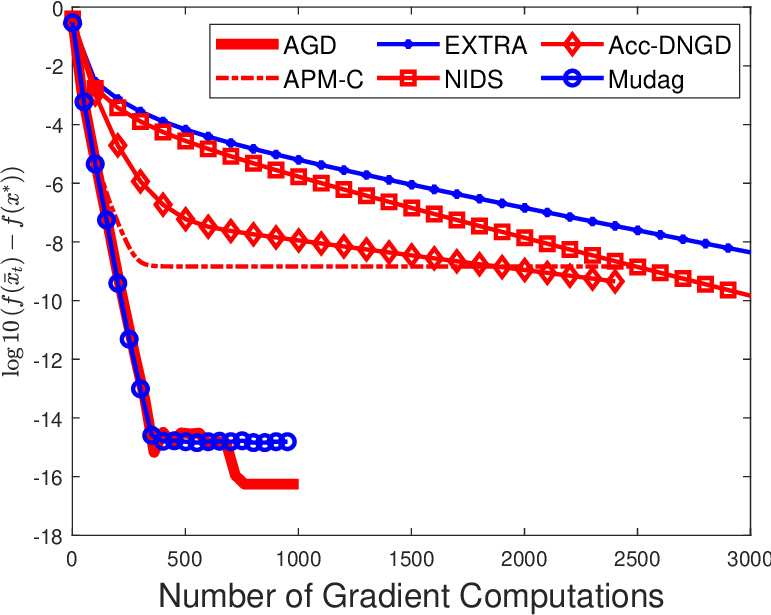}
			&\includegraphics[width=38mm,keepaspectratio]{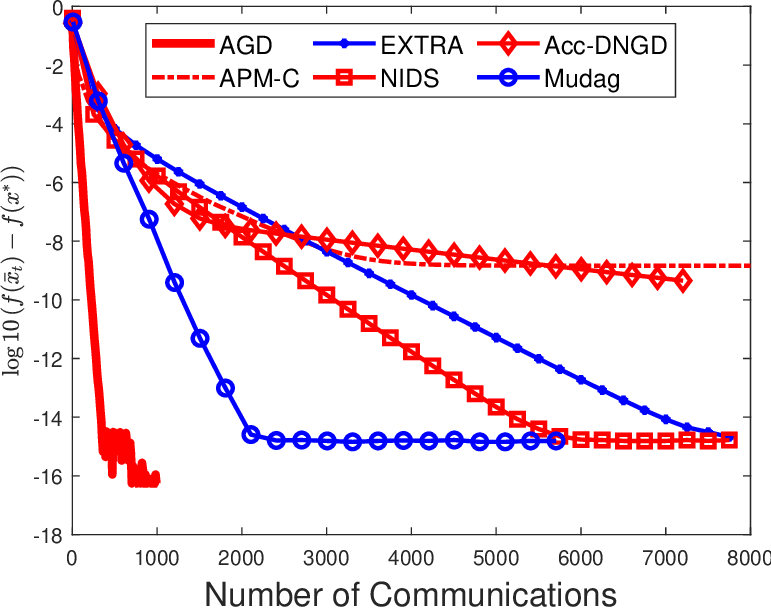} \\[0.15cm]
			\includegraphics[width=38mm,keepaspectratio]{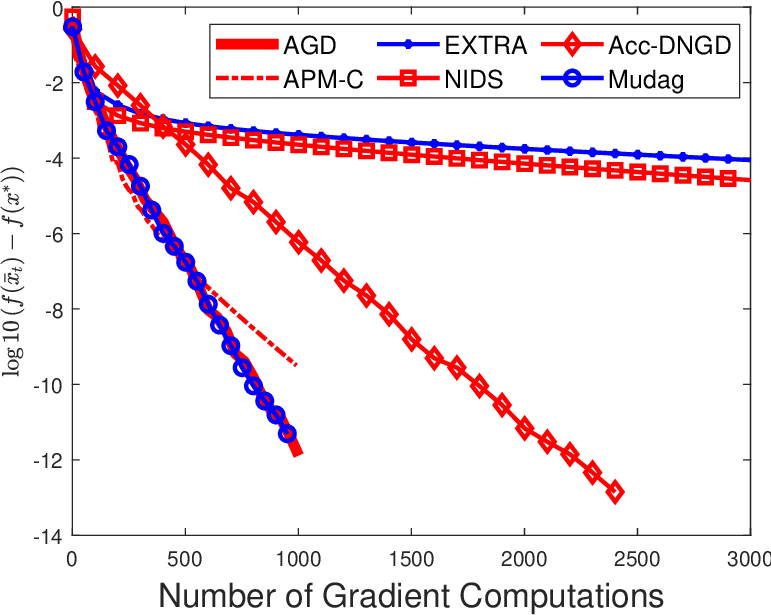}
			&\includegraphics[width=38mm,keepaspectratio]{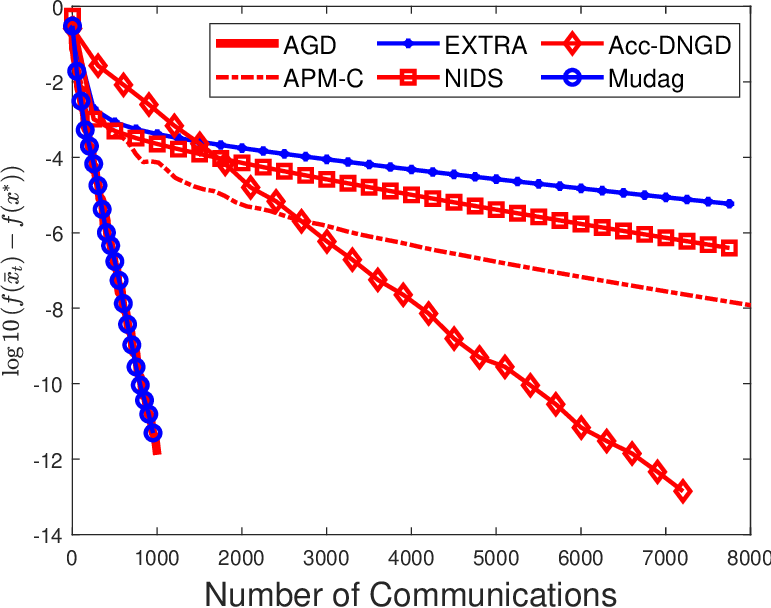}
			&\includegraphics[width=38mm,keepaspectratio]{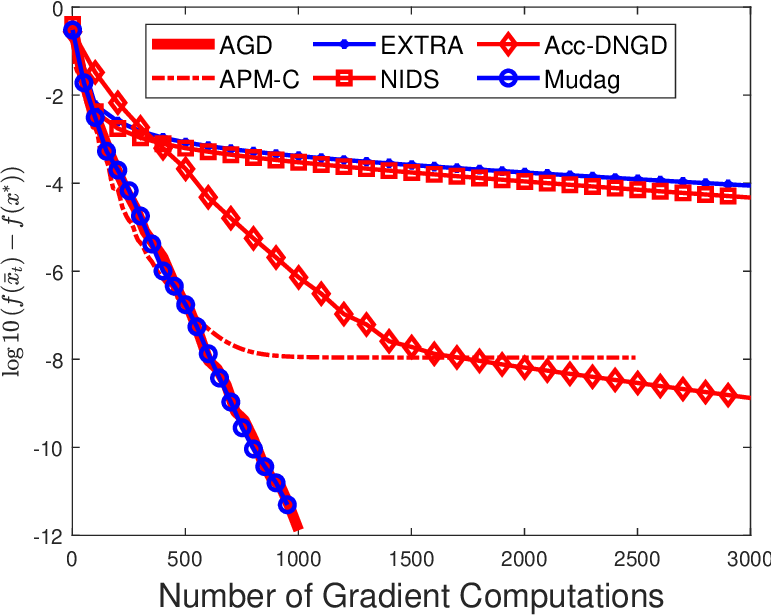}
			&\includegraphics[width=38mm,keepaspectratio]{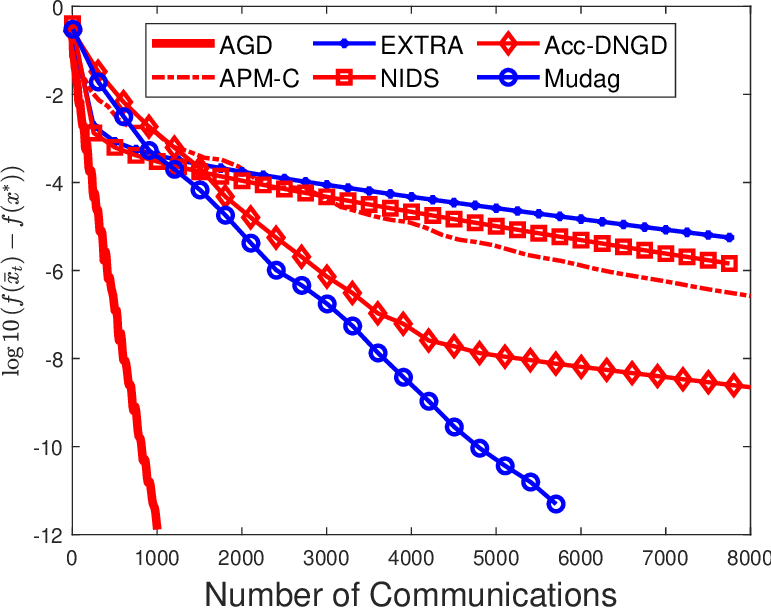} 
		\end{tabular}\vskip-0.1cm
		\caption{Comparisons with logistic regression  and random
			networks. Each $f_i(x)$ is strongly convex ($\sigma_i=0.001$
			in the top row,
			and $\sigma_i=0.0001$ in the bottom row).
			Random networks have $1-\lambda_2(W) = 0.81$ in the left two
			columns and $1-\lambda_2(W) = 0.05$ in the right two columns.}\label{fig1}
		\vskip -0.2in
	\end{figure*}
	
	We compare our algorithm (\texttt{Mudag}) to centralized accelerated gradient descent (\texttt{AGD}) in \citep{nesterov2018lectures}, \texttt{EXTRA} in \citep{shi2015extra}, \texttt{NIDS} in \citep{li2019decentralized}, \texttt{Acc-DNGD} in \citep{qu2019accelerated} and  \texttt{APM-C} in \citep{li2018sharp}. 
	In this paper, we do not compare our algorithm to the dual-based algorithms such as accelerated dual ascent algorithm \citep{uribe2018dual,seaman2017optimal} because 
	these algorithms cannot be applied to the case where some functions $f_i(x)$ are
	non-convex.
	The step sizes of all algorithms are well-tuned to achieve their best performances. 
	Furthermore, we set the momentum coefficient as ${\big(\sqrt{L} - \sqrt{\mu}\big)}/{\big(\sqrt{L} + \sqrt{\mu}\,\big)}$ for \texttt{Mudag}, \texttt{AGD} and \texttt{APM-C}. 
	We initialize $\xb_0$ at $\mathbf{0}$ for all the compared methods.

	\begin{figure*}[ht!]
		\centering
		\begin{tabular}{@{\extracolsep{0.0001em}}c@{\extracolsep{0.0001em}}c@{\extracolsep{0.0001em}}c@{\extracolsep{0.0001em}}c@{\extracolsep{0.0001em}}c}
			\includegraphics[width=37mm,keepaspectratio]{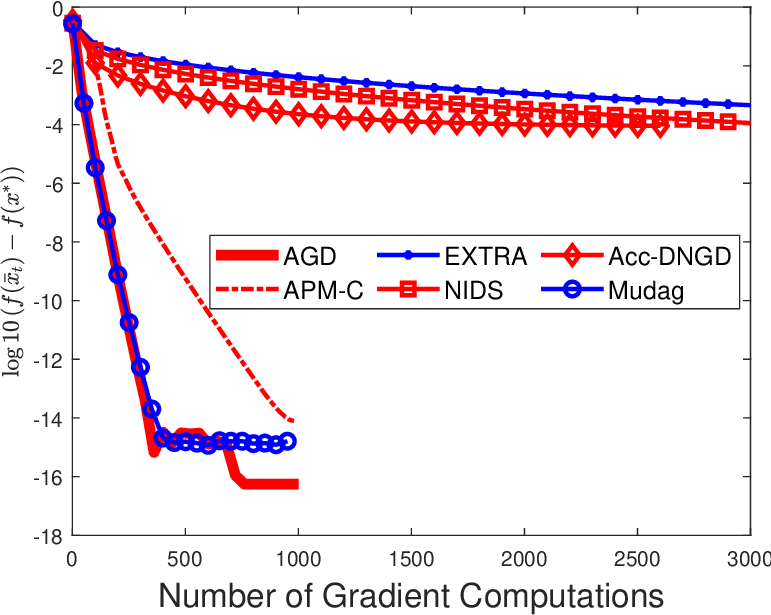}
			&\includegraphics[width=37mm,keepaspectratio]{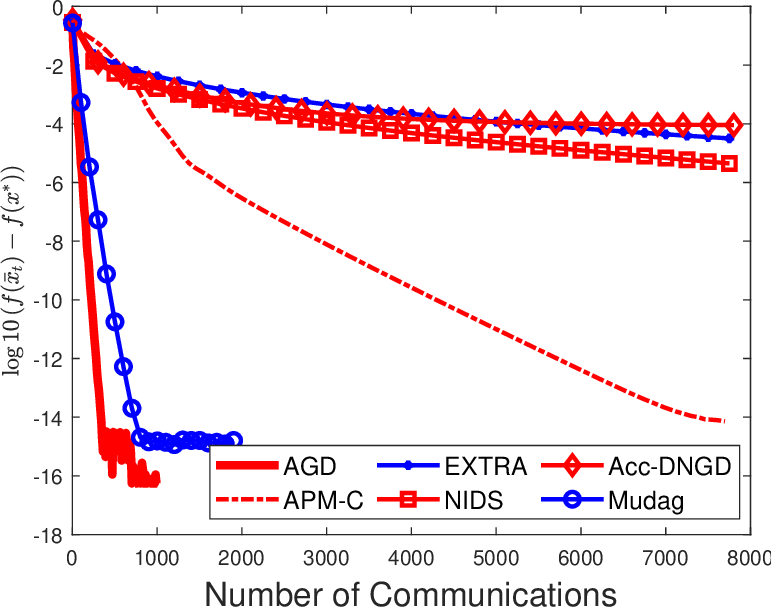}
			&\includegraphics[width=37mm,keepaspectratio]{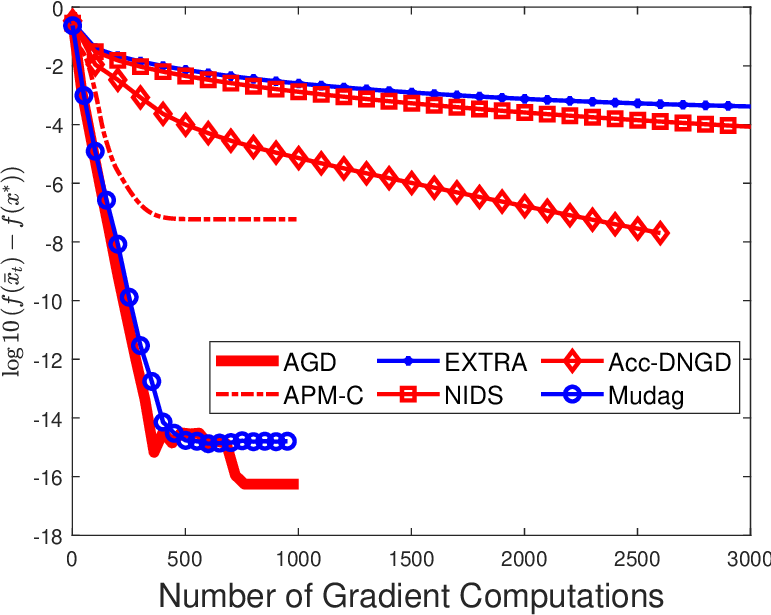}
			&\includegraphics[width=37mm,keepaspectratio]{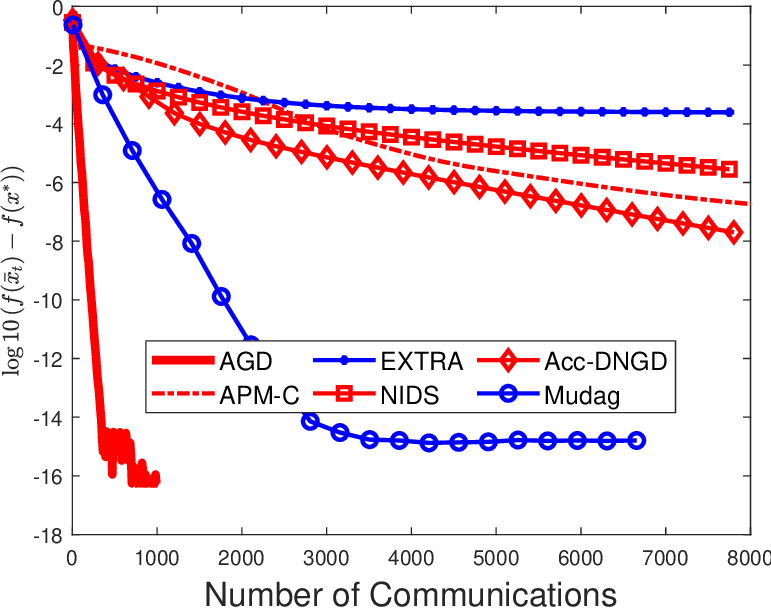} \\
			\includegraphics[width=41mm,keepaspectratio]{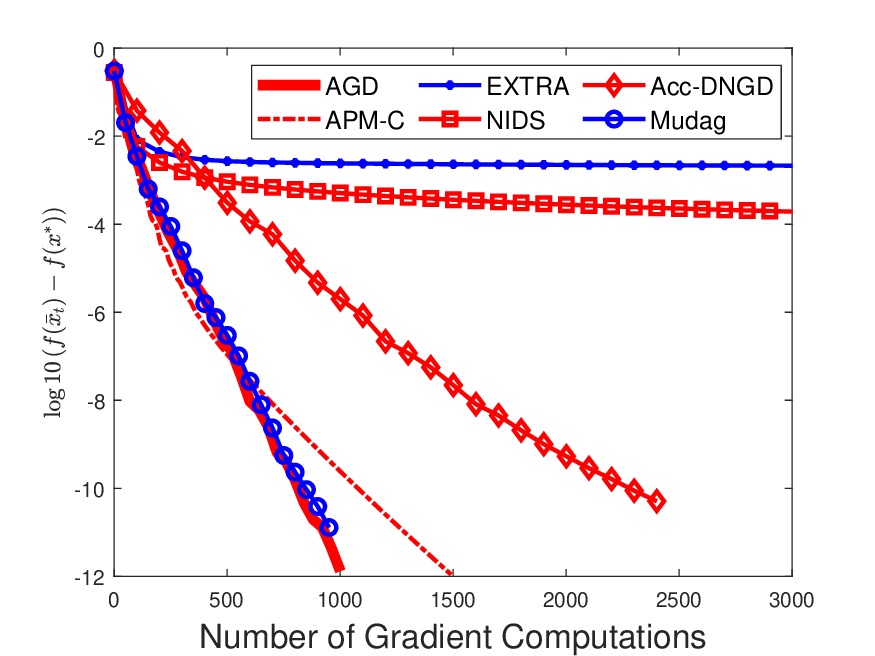}
			&\includegraphics[width=37mm,keepaspectratio]{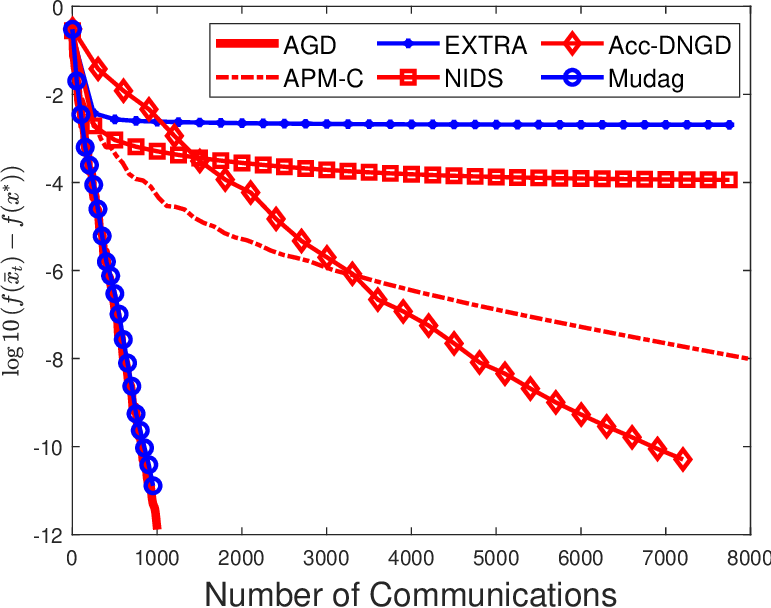}
			&\includegraphics[width=37mm,keepaspectratio]{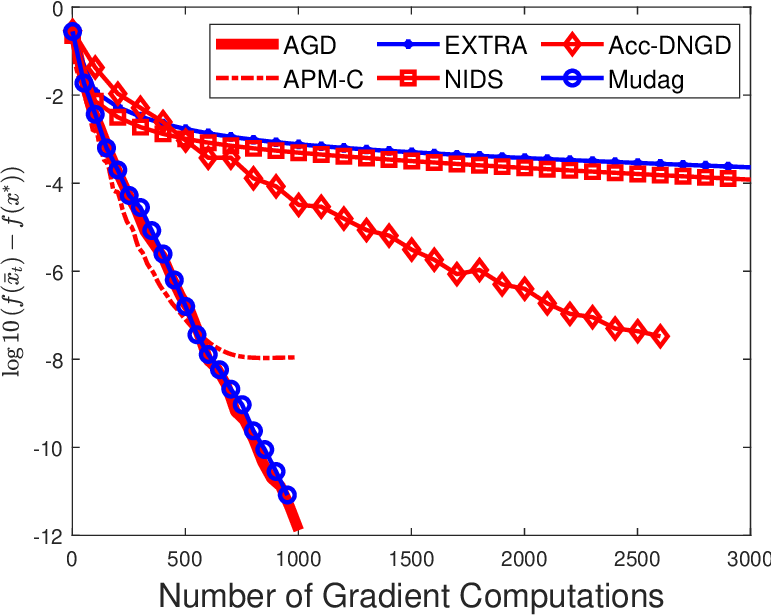}
			&\includegraphics[width=37mm,keepaspectratio]{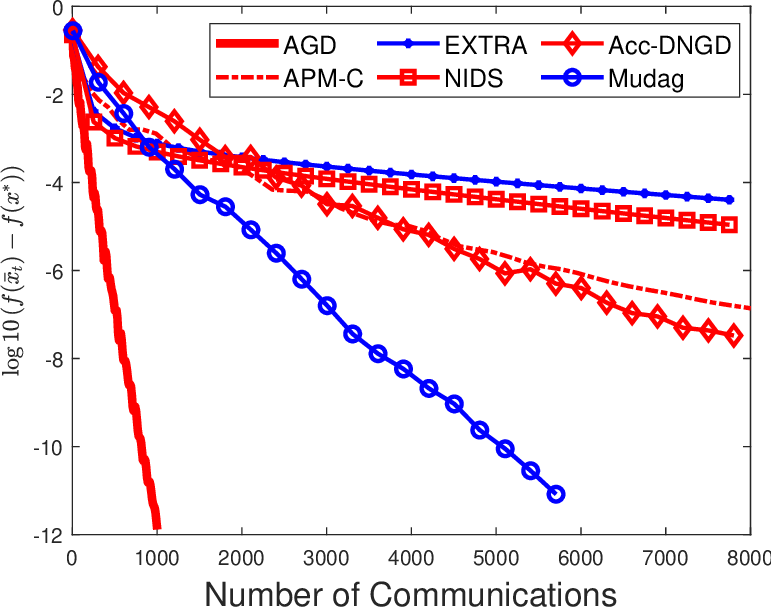} 
		\end{tabular}\vskip-0.1cm
		\caption{Comparisons with logistic regression and random
			networks. Each local objective $f_i(x)$ may be non-convex.
			In the top row,  $\sigma_i = -10^{-2}$ for agents $i=1\dots,m-1$ and
			$\sigma_i = 1$ for the agent $i=m$.
			In the bottom row,
			$\sigma_i = -10^{-1}$ for agents $i=1\dots,m-1$,
			and~$\sigma_i = 10$ for
			the agent $i=m$.
			Random networks have $1-\lambda_2(W) = 0.81$ in the left two
			columns and $1-\lambda_2(W) = 0.05$ in the right two columns.}
		\label{fig2}
		\vskip -0.2in
	\end{figure*}

	In the setting in which each $f_i(x)$ is strongly convex, we report the experimental results in Figure~\ref{fig1}. 
	Compared with \texttt{AGD}, our algorithm has almost the same computation cost, which validates our theoretical analysis.
	Assuming that \texttt{AGD}
	communicates once  per iteration,  we can also see that the
	communication cost of \texttt{Mudag} is almost the same communication cost as that of \texttt{AGD} when~$1-\lambda_2(W) = 0.81$, and six times of that
	of \texttt{AGD} when $1-\lambda_2(W) = 0.05$.
	This matches the theoretical results of communication complexity for our algorithm.
	Furthermore, our algorithm achieves both lower computation cost and lower communication cost than other decentralized algorithms on all settings.
	The advantages are more obvious for small $\sigma_i$, which also validates the comparison of the upper bounds with related works.

	\begin{figure*}[ht]
		\centering
		\begin{tabular}{@{\extracolsep{0.0001em}}c@{\extracolsep{0.0001em}}c@{\extracolsep{0.0001em}}c@{\extracolsep{0.0001em}}c@{\extracolsep{0.0001em}}c}
			\includegraphics[width=38mm,keepaspectratio]{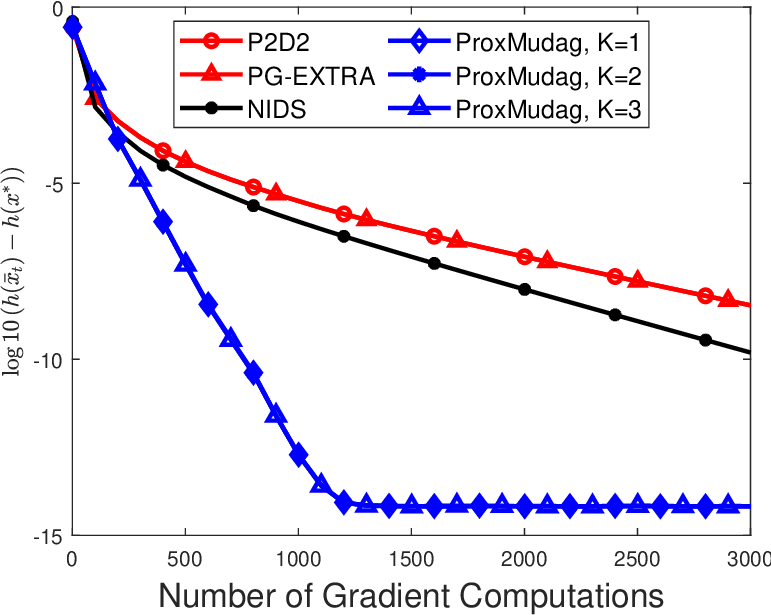}
			&\includegraphics[width=38mm,keepaspectratio]{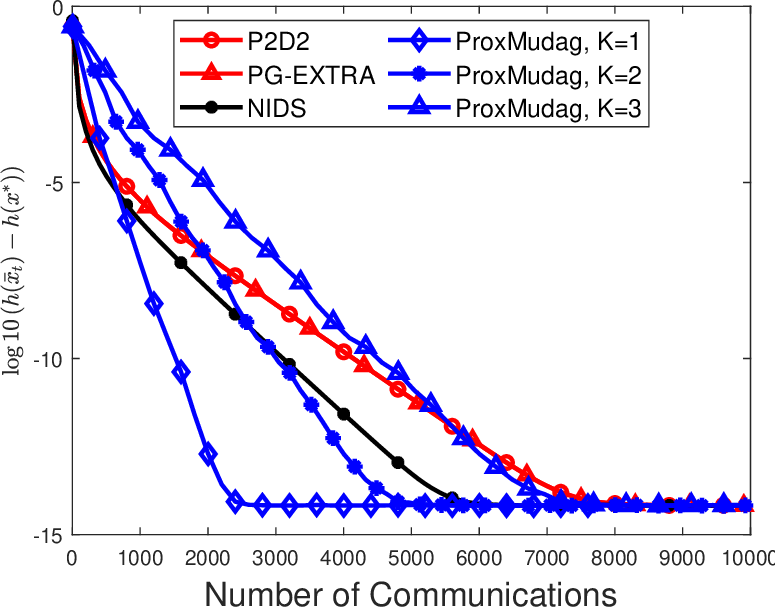}
			&\includegraphics[width=38mm,keepaspectratio]{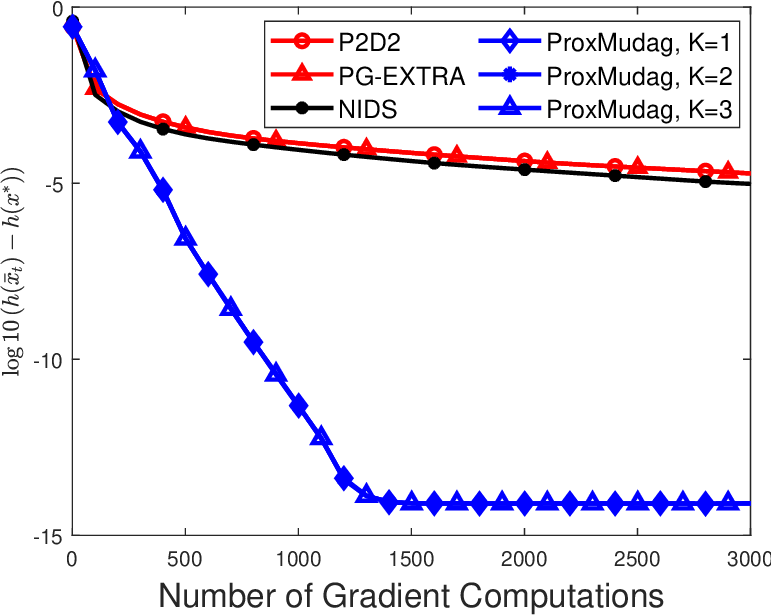}
			&\includegraphics[width=38mm,keepaspectratio]{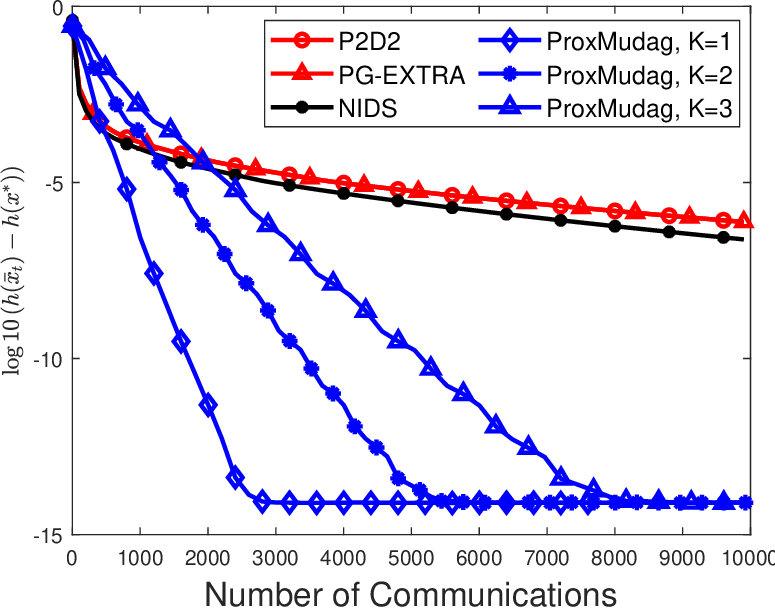}\\[0.1cm]
			\includegraphics[width=38mm,keepaspectratio]{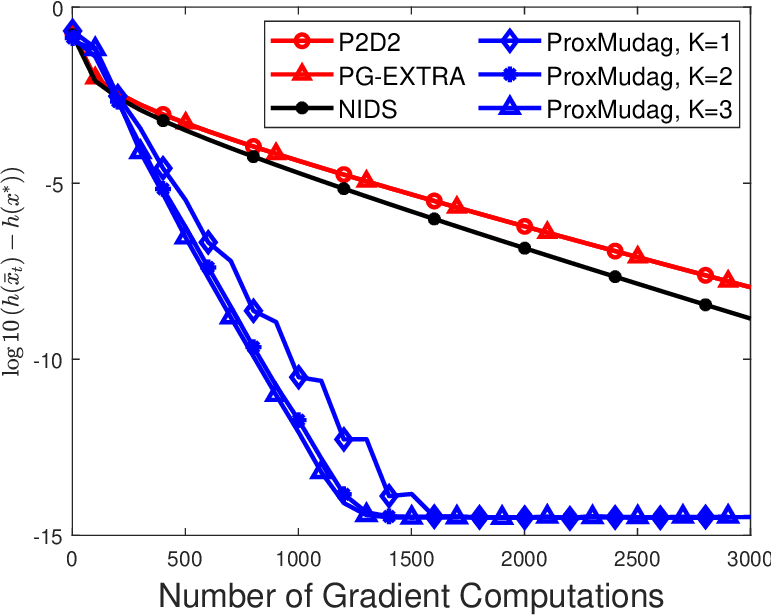}
			&\includegraphics[width=38mm,keepaspectratio]{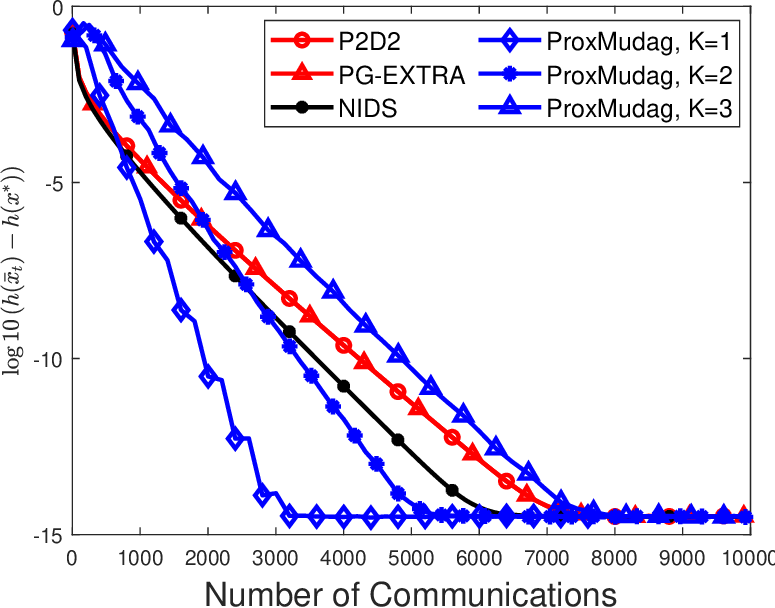}
			&\includegraphics[width=38mm,keepaspectratio]{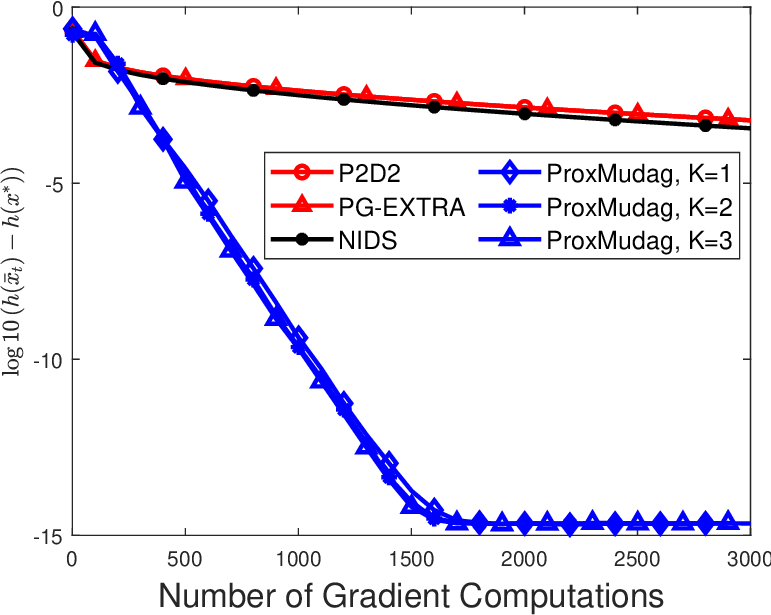}
			&\includegraphics[width=38mm,keepaspectratio]{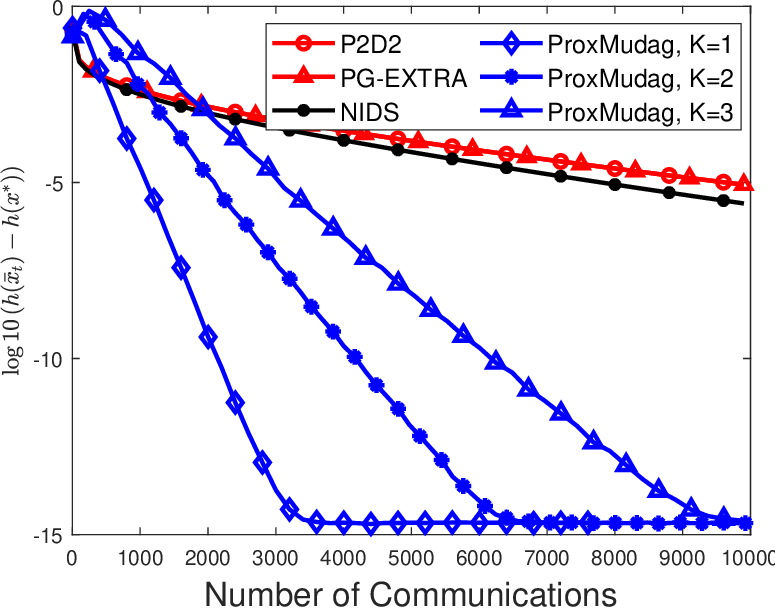}\\
		\end{tabular}
		\caption{Comparisons with sparse logistic regression and random
			networks. 
			In the top row, experiments on `a9a' with $\sigma_i = 10^{-3}$ for agents $i = 1,\dots, m$ in the left two columns and $\sigma_i = 10^{-4}$ in the right two columns.
			In the bottom row, experiments on `w8a' with~$\sigma_i = 10^{-3}$ for agents $i = 1,\dots, m$ in the left two columns and $\sigma_i = 10^{-4}$ in the right two columns.
		}
		\label{fig3}
		\vskip -0.2in
	\end{figure*}

	In the setting in which an individual function $f_i(x)$ could be non-convex, we report the experimental results in Figure~\ref{fig2}.
	Note that the global objective function of experiments reported in Figure~\ref{fig1} and Figure~\ref{fig2} are the same but the model that corresponds to Figure~\ref{fig2} contains some non-convex $f_i(x)$.
	Comparing the curves in these two figures, we can observe that the computation cost of \texttt{AGD} and our algorithm are \emph{not} affected by the non-convexity of $f_i(x)$ because their convergence rates only depend on $\sqrt{\kappa_g}$.
	On the other hand, the communication cost of our algorithm increases
	slightly compared to the setting where each $f_i(x)$ is convex.
	This is because  the ratio $M/L$  of $f_i(x)$ increases when we set $\sigma_i = -10^{-1}$ or $\sigma_i = -10^{-2}$ for agent $i=1,\dots,m-1$. 
	Our communication complexity theory shows $M/L$ will affect the communication cost by a $\log (M/L)$ factor.
	Compared with our algorithm, the performance of the other decentralized algorithms deteriorates greatly, which can be clearly observed by
	comparing the two figures in the top right corners of Figure~\ref{fig1}
	and Figure~\ref{fig2}.

	\subsection{Experiments on Sparse Logistic Regression }
	We consider the sparse logistic regression model whose objective function is defined as
	\begin{equation*}
		h(x) = \frac{1}{m} \sum_{i=1}^m f_i(x) + \gamma\norm{x}_1,
	\end{equation*}
	where $f_i(x)$ is defined in Eq.~\eqref{eq:log}.
	We conduct experiments on the graph with $1 - \lambda_2(W) = 0.05$ and $f_i(x)$ and only consider the case when each $f_i(x)$ is convex, since experiments on logistic regression have already shown the advantage of our ideas for non-convex $f_i(x)$.
	We conduct experiments on the datasets `a9a' and `w8a', which can be downloaded from Libsvm datasets. For `w8a', we set~$n = 497$ and $d = 300$. For `a9a', we set $n = 325$ and $d = 123$.
	We conduct the following two experimental settings:
	\begin{enumerate}
		\item We set $\gamma = 10^{-4}$ and $\sigma_1=\dots=\sigma_m=10^{-3}$.
		\item We set $\gamma = 10^{-4}$ and $\sigma_1=\dots=\sigma_m=10^{-4}$.
	\end{enumerate}
	
	We compare our algorithm (\dapg) with the state-of-the-art algorithms \texttt{PG-EXTRA} \citep{ShiLWY15}, \texttt{NIDS} \citep{li2019decentralized} and decentralized proximal algorithm (\texttt{D2P2}) \citep{alghunaim2019linearly}.
	In the experiments, we set $K=1$, $K = 2$ and $K = 3$ in \dapg~to evaluate how $K$ affects the convergence behavior.
	The parameters of all algorithms are well-tuned. 
	We report the experimental results in Figure~\ref{fig3}.
	We can observe that \dapg~ outperforms other algorithms in all cases.
	First, \dapg~ takes much less computation cost than other algorithms since \dapg~ uses Nesterov's acceleration to achieve a faster convergence rate. 
	This matches our theoretical analysis of the computation complexity.
	We can further observe that the advantage of \dapg~is more clear when $\sigma_2$ is small.
	This is because the small $\sigma_i$'s commonly lead to large condition numbers and the computation complexity of \dapg~depends on $\sqrt{\kappa_g}$ instead of $\kappa_\ell$ or $\sqrt{\kappa_\ell}$.
	The results also show \dapg~ has great advantages over other state-of-the-art decentralized proximal algorithms on the communication cost.

	\section{Conclusion}
	
	In this paper, we proposed two novel decentralized algorithms, which achieve the optimal computation complexity and the near optimal communication complexity.
	To the best of our knowledge, this is the best communication complexity that primal-based decentralized algorithms can achieve especially for the decentralized composite optimization problems.
	
	Our results provide an affirmative answer to the open problem whether there is a decentralized algorithm that can achieve the communication complexity  $\cO\big(\sqrt{{\kappa_{g}}/{(1-\lambda_2(W))}}\log({1}/{\epsilon})\big)$ or even close to this lower bound for a strongly convex objective function.
	Furthermore, our algorithm does not require each individual functions $f_i(x)$ to be convex.
	Our experiments showed that the non-convexity of individual function $f_i(x)$ rarely degrades the performance of our algorithm. 
	Our analysis also implies that integrating multi-consensus and gradient tracking can well approximate the decentralized optimization algorithm to the corresponding centralized counterpart. 
	The implementation of the resulting algorithms are simple, effective and with (near) optimal complexities. 
	This novel perspective may also provide useful insights for developing new decentralized optimization algorithms in other settings.

\section*{Acknowledgments}
The authors would like to thank Lesi Chen and Yuxing Liu's helpful discussion.
Haishan Ye is supported by National Natural Science
Foundation of China under Grant No. 12101491.
Luo Luo is supported by National Natural Science Foundation of China (No. 62206058) and Shanghai Sailing Program (22YF1402900). 

\newpage    
\appendix
	
	\section{Useful Lemmas}
	
	\begin{lemma}
		For any matrix $\xb\in\RR^{m\times d}$ and $\bbx = \frac{1}{m} \mathbf{1}^\top \xb$, it holds that
		\begin{equation} \label{eq:xx_m}
			\norm{\xb - \mathbf{1}\bbx}^2 
			\le
			\norm{\xb}^2.
		\end{equation}
	\end{lemma}
	\begin{proof}
		It holds that 
		\begin{align*}
			& \norm{\xb - \mathbf{1}\bbx}^2 \\
			=&
			\sum_{j=1}^m\norm{\xb(j,:) - \frac{1}{m}\sum_{i=1}^m \xb(i,:)}^2
			\\
			=&
			\sum_{j=1}^m\left(\norm{\xb(j,:)}^2 - \frac{2}{m}\sum_{i=1}^m\dotprod{\xb(j,:),\xb(i,:)} + \frac{1}{m^2}\norm{\sum_{i=1}^m \xb(i,:)}^2\right)
			\\
			=&\norm{\xb}^2 - \frac{2}{m}\sum_{i=1}^m\sum_{j=1}^m\dotprod{\xb(j,:),\xb(i,:)} + \frac{1}{m}\sum_{i=1}^m\sum_{j=1}^m\dotprod{\xb(j,:),\xb(i,:)} \\
			\le & 
			\norm{\xb}^2.
		\end{align*}
	\end{proof}
	
	\begin{lemma}
		We have
		\begin{align}
			\norm{\nabla F(\yb) - \nabla F(\xb)} \le M \norm{\yb - \xb} \label{eq:M_sm}
        \end{align}
        and
        \begin{align}
			\norm{\bbg_t - \nabla f(\bby_t)} \leq \frac{M}{\sqrt{m}}\norm{\yb_t - \mathbf{1}\bby_t}. \label{eq:g_y}
		\end{align}
		Furthermore, we have the $(L + 2/\eta)$-smooth property for the generalized gradient $\geng h(\cdot)$ (defined in Eq.~\eqref{eq:geng}), i.e.,
		\begin{align}\label{eq:62}
			\norm{\geng h(x)-\geng h(y)}\leq \left( \frac{2}{\eta}+L\right) \norm{x-y}.
		\end{align}
	\end{lemma}
	\begin{proof}
		The first inequality is because each $f_i(x)$ is $M$-smooth and
		\begin{align*}
			\norm{\nabla F(\yb) - \nabla F(\xb)} 
			= &	\sqrt{\sum_i^m \norm{\nabla f_i(\yb(i,:))- \nabla f_i(\xb(i,:))}^2 } \\
			\le & \sqrt{M^2\sum_i^m \norm{\yb(i,:)- \xb(i,:)}^2 }
			= M\norm{\yb - \xb}.
		\end{align*}
		The second inequality follows from
		\begin{align*}
			\norm{\bbg_t - \nabla f(\bby_t)} 
			=&
			\norm{\sum_{i=1}^m \frac{\nabla f_i(\yb_t(i,:)) - \nabla f_{i}(\bby_t)}{m}}
			\le
			M \sum_{i=0}^m \frac{\norm{\yb_t(i,:) - \bby_t}}{m}
			\\
			\le&
			M \sqrt{\sum_{i=1}^m \frac{\norm{\yb_t(i,:) - \bby_t}^2}{m}}
			=
			\frac{M}{\sqrt{m}} \norm{\yb_t - \mathbf{1}\bby_t}.
		\end{align*}
		Then we can prove Eq.~\eqref{eq:62} using $L$-smoothness of $f(x)$ and the non-expansiveness of the proximal operator
		\begin{align*}
			\norm{\geng h(x)-\geng h(y)}=&
			\norm{\frac{x-\proximal_{\eta,r}(x-\eta\nabla f(x))}{\eta}-\frac{y-\proximal_{\eta,r}(y-\eta\nabla f(y))}{\eta}}
			\\
			\le&
			\frac{1}{\eta}\norm{x-y}+\frac{1}{\eta}\norm{\proximal_{\eta,r}(x-\eta\nabla f(x))-\proximal_{\eta,r}(y-\eta\nabla f(y))}
			\\
			\le&
			\frac{1}{\eta}\norm{x-y}+\frac{1}{\eta}\norm{(x-\eta\nabla f(x))-(y-\eta\nabla f(y))}
			\\
			\le&\left( \frac{2}{\eta}+L\right) \norm{x-y},
		\end{align*}
		where the last inequality is due to the $L$-smoothness of $f(x)$. 
	\end{proof}
	
	\begin{lemma}\label{lem:v_t transform}
		For $\bbx_t$, $\bby_t$ and $\bbv_t$ defined in Eqs.~\eqref{eq:xyg} and~\eqref{eq:v_t}, then we can obtain that
		\begin{align}
			\bby_t-\bbx_t&=\alpha(\bbv_t-\bby_t), \label{eq:yxv} \\
			\bby_{t+1}&=\frac{\bbx_{t+1}+\alpha \bbv_{t+1}}{1+\alpha} \label{eq:y_v}, 
        \end{align}
        and
        \begin{align}
			\bbv_{t+1} &= 
			\begin{cases}
				(1 - \alpha)\bbv_t + \alpha \bby_t -\dfrac{\eta}{\alpha} \bbG_t  \quad & r(x), \mbox{ is convex,} \\[0.3cm] 
				(1 - \alpha)\bbv_t + \alpha \bby_t -\dfrac{\eta}{\alpha} \bbg_t,  \quad & r(x)=0.
			\end{cases}
			\label{eq:vv_p}
		\end{align}
	\end{lemma}
	\begin{proof}
		First using the definition of $\bbv_t$, we have
		\begin{equation*}
			\begin{aligned}
				\frac{\bbx_{t+1}+\alpha \bbv_{t+1}}{1+\alpha}\stackrel{\eqref{eq:v_t}}{=} & \frac{\bbx_{t+1}+\alpha[ \bbx_{t}+\frac{1}{\alpha}(\bbx_{t+1}-\bbx_{t})    ] }{1+\alpha} \\
				= & \bbx_{t+1}+\frac{1-\alpha}{1+\alpha}(\bbx_{t+1}-\bbx_{t}) \\
				= & \bby_{t+1}.
			\end{aligned}
		\end{equation*}
		Then we can have
		\begin{align*}
			\bby_t-\bbx_t=\alpha(\bbv_t-\bby_t).
		\end{align*}
		Now, we are going to prove Eq.~\eqref{eq:vv_p} with the case $r(x)$ is convex since $r(x) = 0$ is a special case of~$r(x)$ being convex. 
		Then we have
		\begin{align*}
			(1-\alpha)\bbv_t + \alpha \bby_t - \frac{\eta}{\alpha}\bbG_t 
			=&\bbv_t -\alpha(\bbv_t - \bby_t) - \frac{\eta}{\alpha}\bbG_t 
			=\bbx_t + \bbv_t - \bby_t- \frac{\eta}{\alpha}\bbG_t 
			\\
			=&\bbx_t + \frac{1}{\alpha}(\bby_t - \bbx_t - \eta\bbG_t) 
			=\bbx_t + \frac{1}{\alpha}(\bbx_{t+1} -\bbx_t)
			= \bbv_{t+1}.
		\end{align*}
	\end{proof}

	\begin{lemma}\label{lem:y_t lyapnov}
		Let $f(x)$ be $\mu$-strongly convex. For $\bby_t$, and $V_t$ defined in Eqs.~\eqref{eq:xyg} and~\eqref{eq:V_t} and $x^*$ being the optimum, we have the following inequality,
		\begin{equation}\label{eq:y_x_V}
			\norm{\bby_t -x^*}\le \sqrt{\frac{2V_t}{\mu}}.
		\end{equation}
	\end{lemma}
	\begin{proof}
		Since $f(x)$ is $\mu$-strongly convex, $h(x)$ in Eq.~\eqref{eq:prob} is also $\mu$-strongly convex. 
		Thus, we obtain
		\begin{equation*}
			\begin{aligned}
				\norm{\bby_t-x^*}\stackrel{\eqref{eq:y_v}}{=}&  \norm{\frac{\bbx_{t}+\alpha \bbv_{t}}{1+\alpha}-x^*}
				\le \frac{1}{1+\alpha}\norm{\bbx_t-x^*}+\frac{\alpha}{1+\alpha}\norm{\bbv_t-x^*}
				\\
				\le& \frac{1}{1 + \alpha} \sqrt{\frac{2 (h(\bbx) - h(x^*))}{\mu}} 
				+
				\frac{\alpha}{1+\alpha} \sqrt{\frac{2}{\mu}\cdot \frac{\mu}{2} \norm{\bbv_t - x^*}^2}
				\le
				\sqrt{\frac{2V_t}{\mu}}.
			\end{aligned}
		\end{equation*}
	\end{proof}

At the end of this section, we provide the proof of Proposition \ref{lem:mix_eq}.
 
\begin{proof}    
We let 
\begin{align*}
\Pi = I - \frac{1}{n}\mathbf{1}\mathbf{1}^\top,
\qquad
\tilde\Pi = \begin{bmatrix}
    \Pi & 0 \\
    0 & \Pi
\end{bmatrix}
\qquad\text{and}\qquad
\tilde W = \begin{bmatrix}
    (1+\eta_w)W & -\eta_w W \\
    I & 0
\end{bmatrix},
\end{align*}
then the iteration of Algorithm \ref{alg:mix} can be written as
\begin{align*}
\begin{bmatrix}
    \xb_{k+1} \\ \xb_k
\end{bmatrix}    
= \tilde W \begin{bmatrix}
    \xb_k \\ \xb_{k-1}
\end{bmatrix}.    
\end{align*}
The property $W\mathbf{1}=\mathbf{1}$ directly leads to $\bar x=\frac{1}{m}\mathbf{1}^\top\xb^K$. It also indicates
\begin{align*}
W\Pi = W \left(I-\frac{1}{n}\mathbf{1}\mathbf{1}^\top\right)
= W - \frac{1}{n}W\mathbf{1}\mathbf{1}^\top
= W - \frac{1}{n}\mathbf{1}\mathbf{1}^\top
\end{align*}
and
\begin{align*}
\Pi W = \left(I-\frac{1} {n}\mathbf{1}\mathbf{1}^\top\right)W
= W - \frac{1}{n}\mathbf{1}\mathbf{1}^\top W
= W - \frac{1}{n}\mathbf{1}\mathbf{1}^\top,
\end{align*}
which means 
\begin{align}\label{eq:PiW}
\Pi W = W\Pi.    
\end{align}
Consequently, we achieve
\begin{align*}
    \tilde W\tilde\Pi
= & \begin{bmatrix}
    (1+\eta_w)W & -\eta_w W \\
    I & 0
\end{bmatrix}
\begin{bmatrix}
    \Pi & 0 \\
    0 & \Pi
\end{bmatrix} \\
= & \begin{bmatrix}
    (1+\eta_w)W\Pi & -\eta_w W\Pi \\
    \Pi & 0
\end{bmatrix}, \\
   \tilde\Pi \tilde W
= & \begin{bmatrix}
    \Pi & 0 \\
    0 & \Pi
\end{bmatrix}
\begin{bmatrix}
    (1+\eta_w)W & -\eta_w W \\
    I & 0
\end{bmatrix} \\
= & \begin{bmatrix}
    (1+\eta_w)W\Pi & - \Pi\eta_w W \\
    \Pi & 0
\end{bmatrix},
\end{align*}
and
\begin{align*}
    \tilde\Pi\tilde W\tilde\Pi
= & \begin{bmatrix}
    \Pi & 0 \\
    0 & \Pi
\end{bmatrix}
\begin{bmatrix}
    (1+\eta_w)W\Pi & -\eta_w W\Pi \\
    \Pi & 0
\end{bmatrix} \\
= & \begin{bmatrix}
    (1+\eta_w)\Pi W\Pi & -\eta_w \Pi W\Pi \\
    \Pi^2 & 0
\end{bmatrix} \\
= & \begin{bmatrix}
    (1+\eta_w)\Pi W & -\eta_w \Pi W \\
    \Pi & 0
\end{bmatrix} \\
= & \tilde\Pi\tilde W = \tilde W\tilde\Pi,
\end{align*}
where we use the equality (\ref{eq:PiW}). 
This implies for any $K\geq 2$, we have
\begin{align*}
  & \tilde\Pi \tilde W^K 
= (\tilde\Pi \tilde W) \tilde W^{K-1}    
= (\tilde\Pi \tilde W \tilde\Pi) \tilde W^{K-1}    
= (\tilde\Pi \tilde W) \tilde\Pi \tilde W^{K-1} \\  
= & (\tilde\Pi \tilde W)^2 \tilde\Pi \tilde W^{K-2} = \dots = (\tilde\Pi \tilde W)^K \tilde\Pi \\
= & \tilde\Pi (\tilde W\tilde\Pi)^K 
= \tilde\Pi\tilde W (\tilde\Pi\tilde W)^{K-1} \tilde\Pi \\
= & \tilde\Pi\tilde W (\tilde\Pi\tilde W)^{K-2} \tilde\Pi\tilde W\tilde\Pi \\
=& \tilde\Pi\tilde W (\tilde\Pi\tilde W)^{K-2} \tilde W\tilde\Pi = \dots 
= \tilde\Pi\tilde W \tilde W^{K-1}\tilde\Pi \\
= & \tilde\Pi\tilde W^{K}\tilde\Pi.
\end{align*}
Combining above result with Lemma 9 of \citet{song2021optimal}, we have
\begin{align*}
& \norm{\xb_K - \mathbf{1}\bar x} 
= \norm{\Pi\xb_K} \\
\leq & \norm{\tilde\Pi \begin{bmatrix}
    \xb_{K} \\ \xb_{K-1}
\end{bmatrix}}
= \norm{\tilde\Pi\tilde W^K \begin{bmatrix}
    \xb_0 \\ \xb_{-1}
\end{bmatrix}}
=  \norm{\tilde\Pi\tilde W^K \tilde\Pi \begin{bmatrix}
    \xb_0 \\ \xb_{-1}
\end{bmatrix}} \\
\leq & \sqrt{14} \tilde\rho_w^K \norm{\Pi \xb_0} 
= \sqrt{14} \tilde\rho_w^K \norm{\xb_0 - \mathbf{1}\bar x},
\end{align*}
where 
\begin{align*}
    \Tilde{\rho}_w = \frac{1}{\sqrt{1+\sqrt{1-\lambda_2^2(W)}}}.
\end{align*}
Since we have 
\begin{align}
    \frac{1}{\sqrt{1+x}} \le 1 - \left(1-\frac{1}{\sqrt{2}}\right)x,
\end{align}
for any $x\in [0,1]$, it holds that
\begin{align*}
    \Tilde{\rho}_w =& \frac{1}{\sqrt{1+\sqrt{1-\lambda_2^2(W)}}} \\
    \le &
    1 - \left(1-\frac{1}{\sqrt{2}}\right) \sqrt{1-\lambda_2^2(W)} \\
    \le &
    1 - \left(1-\frac{1}{\sqrt{2}}\right) \sqrt{1-\lambda_2(W)}.
\end{align*}
This implies
\begin{align*}
 \norm{\xb_K - \mathbf{1}\bar x} 
\leq \sqrt{14} \left(1 - \left(1-\frac{1}{\sqrt{2}}\,\right) \sqrt{1-\lambda_2(W)}\right)^K \norm{\xb_0 - \mathbf{1}\bar x}.
\end{align*}
\end{proof}

\section{Proof of Lemmas in Section~\ref{sec:converg}}

	\subsection{Collection of Lemmas}
	
	We list several important lemmas that will be used in our proofs.
	
	\begin{lemma}[\cite{nesterov2018lectures}]
		Letting $\geng h(x)$ the generalized gradient of $h(x)$ (refer to Eq.~\eqref{eq:prob}) be defined as 
		\begin{equation}
			\label{eq:geng}
			\geng h(x)\triangleq\frac{x-\proximal_{\eta,r}(x-\eta\nabla f(x))}{\eta}, \;\mbox{with}\; \eta\;\mbox{being the step size},
		\end{equation}
		then it holds that $\geng h(x^*) = 0$ if $x^*$ minimizes $h(x)$.
	\end{lemma}

	\begin{lemma}\label{lem:proximal global to local}
		Letting $\proximal_{\eta m,R}^{(i)}(\xb)$ denote the $i$-th row of the matrix $\proximal_{\eta m,R}(\xb)$ (defined in Eqn.~\eqref{eq:proxes}), we have the following equation
		\begin{equation*}\label{eq:local proximal equality}
			\proximal_{\eta m,R}^{(i)}(\xb)=\proximal_{\eta,r}(\xb^{(i)}),
		\end{equation*} 
		which implies that $G_t^{(i)}$ equals to the $i$-th row of $G_t$ defined in Eq.~\eqref{eq:g_grad}.
	\end{lemma}
	\begin{proof}
		By the definition of the proximal operators, we have
		\begin{align*}
			\proximal_{\eta m,R}(\xb)=&\argmin_{\zb} \left( R(\zb)+ \frac{1}{2\eta m}\norm{\zb-\xb}_F^2\right) \\
			=&\argmin_{\zb} \left( \frac{1}{m}\sum_{i=1}^{m}r(\zb^{(i)})+ \sum_{i=1}^{m}\frac{1}{2\eta m}\norm{\zb^{(i)}-\xb^{(i)}}^2\right) \\
			=&\argmin_{\zb} \left( \sum_{i=1}^{m}r(\zb^{(i)})+ \sum_{i=1}^{m}\frac{1}{2\eta}\norm{\zb^{(i)}-\xb^{(i)}}^2\right) \\
			=&  \begin{pmatrix}
				\argmin_{z}\left( r(z)+\frac{1}{2\eta}\norm{z-\xb^{(1)}}  \right)^\top \\ 
				\vdots \\ 
				\argmin_{z}\left( r(z)+\frac{1}{2\eta}\norm{z-\xb^{(m)}}  \right)^\top
			\end{pmatrix}
			.
		\end{align*}
		Therefore, we have the following equation
		\begin{equation*}
			\proximal_{\eta m,R}^{(i)}(\xb)=\proximal_{\eta,r}(\xb^{(i)}).
		\end{equation*} 
	\end{proof}
	
	\begin{lemma}\label{lem:prox average 1}
		For any $\xb\in\RR^{m\times d}$,  $\proximal_{m \eta,R}(\cdot)$ (defined in Eq.~\eqref{eq:proxes}) has the following property
		\begin{equation}
			\norm{\proximal_{\eta  m,R}\left(\frac{1}{m}\mathbf{1}\mathbf{1}^\top \xb\right)-\frac{1}{m}\mathbf{1}\mathbf{1}^\top\proximal_{\eta  m,R}( \xb)}
			\le \norm{  \xb-\mathbf{1} \bbx_t }. \label{eq:prox_non_exp}
		\end{equation}
	\end{lemma}


	\begin{proof}
		Using Lemma~\ref{lem:proximal global to local} and non-expansiveness of the proximal mapping, we have
		\begin{equation*}
			\begin{aligned}
				&\norm{\proximal_{\eta  m,R}(\frac{1}{m}\mathbf{1}\mathbf{1}^\top \xb)-\frac{1}{m}\mathbf{1}\mathbf{1}^\top\proximal_{\eta  m,R}( \xb)}^2 \\
				= & {m\norm{\proximal_{\eta  ,r}(\frac{1}{m}\mathbf{1}^\top \xb)-\frac{1}{m}\sum_{i=1}^m\proximal_{\eta ,r}( \xb^{(i)})}^2}
				\\
				=& m\norm{\frac{1}{m}\sum_{i=1}^m\left(\proximal_{\eta  ,r}( \frac{1}{m}\mathbf{1}^\top \xb)-\proximal_{\eta ,r}( \xb^{(i)})\right) }^2  \\
				\le & \sum_{i=1}^m\norm{\left(\proximal_{\eta  ,r}( \frac{1}{m}\mathbf{1}^\top \xb)-\proximal_{\eta ,r}( \xb^{(i)})\right) }^2
				\\
				\le&{\sum_{i=1}^m\norm{\left( \frac{1}{m}\mathbf{1}^\top \xb- \xb^{(i)}\right) }^2}
				=\norm{  \xb-\mathbf{1} \bbx_t }^2.
			\end{aligned}
		\end{equation*}
	\end{proof}

	\begin{lemma}\label{lem:sum of s_t gap}
		Letting $\bs_t^{(i)}$ be the $i$-th row of $\bs_t$ and $G_t^{(i)},\; \bbG_t$ (defined in Eqs.~\eqref{eq:g_grad},~\eqref{eq:bbG}) generated by Algorithm~\ref{alg:DAGD_p},  we have
		\begin{align}
			\sum_{i=1}^m\norm{\bs_t^{(i)}-\nabla f(\yb_t^{(i)})}^2\le&2\norm{\bs_t-\mathbf{1}\bbs_t}^2+8M^2\norm{\yb_t - \mathbf{1}\bby_t}^2\label{eq:s_gd}
        \end{align}
        and
        \begin{align}
			\eta^2 \sum_{i=1}^m \norm{  G_t^{(i)} -  \bbG_t}^2 \le& 18 \norm{\yb_t - \mathbf{1}\bby_t}^2 
			+ 16\eta^2\norm{\bs_t - \mathbf{1}\bbs_t}^2.  \label{eq:GG}
		\end{align}
	\end{lemma}
	\begin{proof}
		Using  the inequality that $(a+b)^2\le 2a^2+2b^2$, we have
		\begin{equation*}
			\begin{aligned}
				\sum_{i=1}^m\norm{\bs_t^{(i)}-\nabla f(\yb_t^{(i)})}^2
				\le&2\sum_{i=1}^m\norm{\bs_t^{(i)}-\bbs_t}^2+2\sum_{i=1}^m\norm{\bbs_t-\nabla f(\yb_t^{(i)})}^2\\
				\le&2\sum_{i=1}^m\norm{\bs_t^{(i)}-\bbs_t}^2+4\sum_{i=1}^m\norm{\bbs_t-\nabla f(\bby_t)}^2+4\sum_{i=1}^m\norm{\nabla f(\bby_t)-\nabla f(\yb_t^{(i)})}^2\\
				\le&2\norm{\bs_t-\mathbf{1}\bbs_t}^2+4M^2\norm{\mathbf{1}\bby_t-\yb_t}^2+4L^2\norm{\mathbf{1}\bby_t-\yb_t}^2\\
				\le&2\norm{\bs_t-\mathbf{1}\bbs_t}^2+8M^2\norm{\mathbf{1}\bby_t-\yb_t}^2,
			\end{aligned}
		\end{equation*}
		where the third inequality is from  Eq.~\eqref{eq:g_y} and the $L$-smoothness of $f(x)$, the last inequality is due to $L\le M$.
		
		Furthermore, it holds that
		\begin{align*}\small
			\begin{split}
				&\eta^2 \sum_{i=1}^m \norm{  G_t^{(i)} -  \bbG_t}^2
				=\norm{\eta G_t-\frac{\eta }{m}\mathbf{1}\cdot \mathbf{1}^\top G_t}^2
				\\
				=&
				\sum_{i=1}^m \norm{ \yb_t^{(i)} - \proximal_{\eta  ,r}(\yb_t^{(i)} - \eta \bs_t^{(i)} )  - \frac{1}{m}\sum_{j=1}^m  \left( \yb_t^{(j)} - \proximal_{\eta  ,r}(\yb_t^{(j)} - \eta \bs_t^{(j)} )\right)}^2
				\\
				\le& 2 \sum_{i=1}^m \norm{\yb_t^{(i)} - \bby_t}^2
				+ 2 \sum_{i=1}^m \norm{ \proximal_{\eta  ,r}(\yb_t^{(i)} - \eta \bs_t^{(i)} ) - \frac{1}{m}\sum_{j=1}^m   \proximal_{\eta  ,r}(\yb_t^{(j)} - \eta \bs_t^{(j)} )}^2
				\\
				\le&
				2 \sum_{i=1}^m \norm{\yb_t^{(i)} - \bby_t}^2
				+ 4 \sum_{i=1}^m \norm{ \proximal_{\eta  ,r}(\yb_t^{(i)} - \eta \bs_t^{(i)} ) - \proximal_{\eta  ,r}(\bby_t - \eta \bbs_t) }^2
				\\&
				+4\sum_{i=1}^m \norm{\proximal_{\eta  ,r}(\bby_t - \eta \bbs_t)-  \frac{1}{m}\sum_{j=1}^m   \proximal_{\eta  ,r}(\yb_t^{(j)} - \eta \bs_t^{(j)} )}^2
				\\
				\le&
				2  \norm{\yb_t - \mathbf{1} \bby_t}^2
				+ 4\sum_{i=1}^m\norm{ \yb_t^{(i)} - \bby_t - \eta (\bs_t^{(i)} - \bbs_t) }^2
				+ 4\sum_{i=1}^m\norm{\proximal_{\eta  ,r}(\bby_t - \eta \bbs_t)-     \proximal_{\eta  ,r}(\yb_t^{(i)} - \eta \bs_t^{(i)} )}^2
				\\
				\le&
				2  \norm{\yb_t - \mathbf{1}\bby_t}^2
				+ 16 \norm{\yb_t - \mathbf{1}\bby_t}^2 
				+ 16\eta^2\norm{\bs_t - \mathbf{1}\bbs_t}^2
				\\
				=& 18 \norm{\yb_t - \mathbf{1}\bby_t}^2 
				+ 16\eta^2\norm{\bs_t - \mathbf{1}\bbs_t}^2,
			\end{split}
		\end{align*}
		where the third and forth inequalities are due to the non-expansiveness of proximal mapping.
	\end{proof}

	\begin{lemma}
		Letting $\geng h(\bby_t)$ be defined in Eq.~\eqref{eq:geng}, then we  have the following error bound for the estimated generalized gradient
		\begin{align}
			\eta \norm{ \bbG_t-\geng h(\bby_t)}
			\le& \frac{4+2M\eta}{\sqrt{m}} \norm{   \yb_t-\mathbf{1}\bby_t }
			+\frac{2\eta}{\sqrt{m}}\norm{\bs_t-\mathbf{1}\bbs_t }. \label{eq:G_h}
		\end{align} 
	\end{lemma}
	\begin{proof}
		It holds that 
		\begin{equation*}
			\begin{aligned}
				&\eta\norm{\bbG_t-\eta\geng h(\bby_t)}
				=\sqrt{\norm{\frac{1}{m} \sum_{i=1}^m \left( \eta G_t^{(i)}-\eta\geng h(\bby_t)\right) }^2}
				\le\sqrt{\frac{1}{m}\cdot\sum_{i=1}^m\norm{  \left( \eta G_t^{(i)}-\eta\geng h(\bby_t)\right) }^2}
				\\
				=&\sqrt{\frac{1}{m}}\cdot\sqrt{\sum_{i=1}^m\norm{   \left(\yb_t^{(i)}- \proximal_{\eta  ,r}(\yb_t^{(i)}-\eta\bs_t^{(i)})  \right)-\left( \bby_t-\proximal_{\eta  ,r}(\bby_t-\eta\nabla f(\bby_t)) \right) }^2}\\
				\le &\sqrt{\frac{1}{m}}\cdot\sqrt{\sum_{i=1}^m\left( 2\norm{   \yb_t^{(i)}-\bby_t }^2
					+2\norm{\proximal_{\eta  ,r}(\yb_t^{(i)}-\eta\bs_t^{(i)})-\proximal_{\eta  ,r}(\bby_t-\eta\nabla f(\bby_t))  }^2\right) }\\
				\le &\sqrt{\frac{1}{m}}\cdot\sqrt{\sum_{i=1}^m\left( 2\norm{   \yb_t^{(i)}-\bby_t }^2
					+2\norm{(\yb_t^{(i)}-\eta\bs_t^{(i)})-(\bby_t-\eta\nabla f(\bby_t))  }^2\right) }\\
				= &\sqrt{\frac{1}{m}}\cdot\sqrt{ 2\norm{   \yb_t-\mathbf{1}\bby_t }^2
					+2\norm{\eta\bs_t-\eta\mathbf{1}\nabla f(\bby_t) +\yb_t-\mathbf{1}\bby_t }^2 }\\
				\le& \sqrt{\frac{1}{m}} \cdot \left(4 \norm{   \yb_t-\mathbf{1}\bby_t }
				+2\eta\norm{\bs_t-\mathbf{1}\bbs_t }+2\eta\norm{\mathbf{1}\bbs_t-\mathbf{1}\nabla f(\bby_t) } \right)\\
				\le&\sqrt{\frac{1}{m}}\cdot\Big((4+2M\eta )\norm{   \yb_t-\mathbf{1}\bby_t }
				+2\eta \norm{\bbs_t-\mathbf{1}\bbs_t }\Big),
			\end{aligned}
		\end{equation*}
		where the second inequality is due to the non-expansiveness of proximal operator, and the last inequality is from Eq.~\eqref{eq:g_y}. 
		
		%
	\end{proof}
	
	\subsection{Proof of Lemma~\ref{lem:yvs}}
	
	\begin{proof}
		For simplicity, we denote FastMix$(\cdot,K)$ operation as $\mathbb{T}(\cdot)$. From Proposition~\ref{lem:mix_eq} we can know that
		\begin{equation*}\label{eq:mix}
			\norm{\mathbb{T}(\xb)-\frac{1}{m}\mathbf{1}\mathbf{1}^\top\xb}\le \rho\norm{\xb-\frac{1}{m}\mathbf{1}\mathbf{1}^\top\xb}.
		\end{equation*}
		First, we have
		\begin{equation}\label{eq:x_error}
			\begin{aligned}
				& \norm{\mathbf{1}\bbx_{t+1}-\xb_{t+1}} \\
				\le & \norm{\proximal_{\eta m,R}(\yb_{t}-\eta \bs_t)-\frac{1}{m}\mathbf{1}\mathbf{1}^\top\proximal_{\eta m,R}(\yb_{t}-\eta \bs_t)}\\
				\le &\norm{\proximal_{\eta m,R}(\yb_{t}-\eta \bs_t)- \proximal_{\eta m,R}\left( \mathbf{1}(\bby_{t}-\eta\bbs_{t})\right)  }\\&+
				\norm{\proximal_{\eta m,R}\left( \mathbf{1}(\bby_{t}-\eta\bbs_{t})\right)-\frac{1}{m}\mathbf{1}\mathbf{1}^\top\proximal_{\eta m,R}(\yb_{t}-\eta \bs_t)}\\
				\le &\norm{\yb_{t}-\mathbf{1}\bby_{t}}+\eta\norm{\bs_t-\mathbf{1}\bbs_{t}}+
				\norm{ \left(\yb_t-\eta\bs_t \right)   -\mathbf{1}\left( \bby_t-\eta\bbs_t \right) } \\
				\le
				&2\norm{\yb_{t}-\mathbf{1}\bby_{t}}+2\eta\norm{\bs_t-\mathbf{1}\bbs_{t}},
			\end{aligned}
		\end{equation}
		where the  third inequality is because of Lemma~\ref{lem:prox average 1} and the non-expansiveness of proximal operator.

		Using the definition of $\yb_{t+1}$ in Algorithm~\ref{alg:DAGD} and the property of ``FastMix" operation, we have
		\begin{equation}\label{eq:y_err}
			\begin{aligned}
				& \norm{\yb_{t+1}-\mathbf{1}\bby_{t+1}} \\
				\le\,&\frac{2\rho}{1+\alpha}\norm{\xb_{t+1}-\mathbf{1}\bbx_{t+1}}+\rho\frac{1-\alpha}{1+\alpha}\norm{\xb_t-\mathbf{1}\bbx_t}\\
				\stackrel{\eqref{eq:x_error}}{\le}&4\rho\norm{\yb_{t}-\mathbf{1}\bby_{t}}+4\rho\eta\norm{\bs_t-\mathbf{1}\bbs_{t}}+\rho\norm{\xb_t-\mathbf{1}\bbx_t}.
			\end{aligned}
		\end{equation}
		Now we are going to bound the value of $\norm{\bs_{t+1}-\mathbf{1}\bbs_{t+1}}$. We have
		\begin{equation*}
            \small
			\begin{aligned}
				&\norm{\bs_{t+1}-\mathbf{1}\bbs_{t+1}}\\
				\le&\rho\norm{\bs_t+\nabla F(\yb_{t+1})-\nabla F(\yb_{t})-\mathbf{1}\cdot\left( \bbs_t+\bbg_{t+1}-\bbg_t\right) }\\
				\stackrel{\eqref{eq:xx_m}}{\le}& \rho\norm{\bs_t-\mathbf{1}\bbs_t}+\rho M\norm{\yb_{t+1}-\yb_{t}}\\
				\le& \rho\norm{\bs_t-\mathbf{1}\bbs_t}+\rho M\norm{\yb_{t+1}-\mathbf{1}\bby_{t+1}}+\rho M\norm{\mathbf{1}\bby_{t+1}-\mathbf{1}\bby_{t}}+\rho M\norm{\mathbf{1}\bby_{t}-\yb_t}\\
				\stackrel{\eqref{eq:y_err}}{\le}& \rho\norm{\bs_t-\mathbf{1}\bbs_t}+\rho M\left( 
				4\rho\norm{\yb_{t}-\mathbf{1}\bby_{t}}+4\rho\eta\norm{\bs_t-\mathbf{1}\bbs_{t}}+\rho\norm{\xb_t-\mathbf{1}\bbx_t}\right)\\& +\rho M\norm{\mathbf{1}\bby_{t+1}-\mathbf{1}\bby_{t}}+\rho M\norm{\mathbf{1}\bby_{t}-\yb_t}\\
				=& \left( \rho+4\rho^2 M\eta\right) \norm{\bs_t-\mathbf{1}\bbs_t}+ \rho^2M\norm{\xb_t-\mathbf{1}\bbx_t} +\left( \rho M+4\rho^2M\right) \norm{\mathbf{1}\bby_{t}-\yb_t}+\rho M\norm{\mathbf{1}\bby_{t+1}-\mathbf{1}\bby_{t}}
				\\
				\le&\rho(1 + 4M\eta)  \norm{\bs_t-\mathbf{1}\bbs_t}+ \rho M\norm{\xb_t-\mathbf{1}\bbx_t} 
				+ 5\rho M\norm{\yb_t - \mathbf{1}\bby_{t}}+\rho M\norm{\mathbf{1}\bby_{t+1}-\mathbf{1}\bby_{t}},
			\end{aligned}
		\end{equation*}
		where the last inequality is because of $\rho \le 1$.
		Then we only need to consider the term $\norm{\mathbf{1}\bby_{t+1}-\mathbf{1}\bby_t}$. 
		Using the iteration of average variables illustrated in Eq.~\eqref{eq:yx}, we have
		\begin{equation*}
			\begin{aligned}
				&\norm{\bby_{t+1}-\bby_t} \\
				= & \norm{ \frac{2}{1+\alpha}\bbx_{t+1}-\frac{1-\alpha}{1+\alpha}\bbx_t-\bby_{t}}
				\\
				=&\norm{ \frac{2}{1+\alpha}(\bby_t - \eta \bbG_t)-\frac{1-\alpha}{1+\alpha}\bbx_t-\bby_{t}} \\
				\le & \frac{1-\alpha}{1+\alpha}\norm{ \bby_t - \bbx_t } + \frac{2 \eta}{1+\alpha} \norm{\bbG_t} 
				\\
				\le& \norm{ \bby_t -x^*} +\norm{ \bbx_t -x^*} + 2\eta\norm{ \bbG_t - \tilde{\nabla}h(\bby_t)} + 2\eta \norm{\tilde{\nabla} h(\bby_t)}
				\\
				\stackrel{\eqref{eq:G_h}}{\le}&
				\norm{ \bby_t -x^*} +\norm{ \bbx_t -x^*} +\frac{8+4M\eta}{\sqrt{m}} \norm{   \yb_t-\mathbf{1}\bby_t }
				+\frac{4\eta}{\sqrt{m}}\norm{\bs_t-\mathbf{1}\bbs_t } 
				+2\eta \norm{\tilde{\nabla} h(\bby_t)}.
			\end{aligned}
		\end{equation*}
		Furthermore, by Lemma~\ref{lem:y_t lyapnov} and the fact that $\geng h(x^*) = 0$, we can obtain
		\begin{align*}
        \small\begin{split}            
			&2\eta \norm{ \geng h(\bby_t)}+\norm{\bbx_t-x^*} +\norm{\bby_t-x^*}
			\\
			=& 2\eta \norm{\geng h(\bby_t) - \geng h(x^*)} + \norm{\bbx_t-x^*} +\norm{\bby_t-x^*}
			\\
			=& 2\norm{ \bby_t - \proximal_{\eta  ,r}(\bby_t - \eta \nabla f(\bby_t)) - ( x^* - \proximal_{\eta  ,r}(x^* - \eta \nabla f(x^*)) ) }
			+\norm{\bbx_t-x^*} +\norm{\bby_t-x^*}
			\\
			\stackrel{\eqref{eq:62}}{\le}&2\norm{\bby_t - x^*} + 2\norm{ \bby_t - x^* } + 2 \eta \norm{\nabla f(\bby_t) - \nabla f(x^*)} 
			+ \norm{\bbx_t-x^*} +\norm{\bby_t-x^*}
			\\
			\le& (5+2\eta L) \norm{\bby_t-x^*} + \norm{\bbx_t-x^*} 
			\\
			\stackrel{\eqref{eq:y_x_V}}{\le}& (5+2\eta L) \sqrt{\frac{2 V_t}{\mu}} + \sqrt{\frac{2}{\mu} (h(\bbx_t) - h(x^*))}
			\\
			\le& 7 \sqrt{\frac{2 V_t}{\mu}},
        \end{split}
		\end{align*}
		where the last equality is because of $\eta = {1}/{(2L)}$.
		Thus, we can obtain that
		\begin{equation*}
			\norm{\mathbf{1}\bby_{t+1}-\mathbf{1}\bby_t} 
			\le
			(8+4M\eta) \norm{   \yb_t-\mathbf{1}\bby_t }
			+4\eta\norm{\bs_t-\mathbf{1}\bbs_t }
			+ 7\sqrt{m} \sqrt{\frac{2}{\mu}V_t}.
		\end{equation*}
		Combining above results, we can bound the value of $\eta\norm{\bs_{t+1}-\mathbf{1}\bbs_{t+1}}$ as follows
		\begin{equation}\label{eq:s_err}
			\begin{aligned}
				\eta \norm{\bs_{t+1} - \mathbf{1} \bbs_{t+1}}
				\le&
				\rho\left( 1 + 8 M\eta \right)\cdot \eta \norm{\bs_t-\mathbf{1}\bbs_t}
				+ \rho \cdot M\eta \norm{\xb_t-\mathbf{1}\bbx_t}\\
				& +\rho M\eta \left( 13+4 M\eta \right) \norm{\yb_t - \mathbf{1}\bby_{t}}+7\rho M \eta \sqrt{m}\sqrt{\frac{2}{\mu}V_t}
				\\
				\le&   \frac{5\rho M}{L} \cdot  \eta \norm{\bs_t-\mathbf{1}\bbs_t}
				+  \frac{\rho M}{L} \norm{\xb_t-\mathbf{1}\bbx_t}\\
				& +\frac{8\rho M^2}{L^2}  \norm{\yb_t - \mathbf{1}\bby_{t}}+ \frac{4\rho M}{L} \sqrt{m}\sqrt{\frac{2}{\mu}V_t},
			\end{aligned}
		\end{equation}
		where the last inequality is because of $\eta = {1}/{(2L)}$ and $1 \le {M}/{L}$.
		
		Combining Eqs.~\eqref{eq:x_error},~\eqref{eq:y_err} and~\eqref{eq:s_err}, we can obtain 
		\begin{align*}
			\zb_{t+1}\le \bA\zb_t+ \frac{4 \rho M\sqrt{m}}{L} \left[0,0,\sqrt{\frac{2V_t}{\mu}}\,\right]^\top,
		\end{align*}
		where
		\begin{align*}		
			\bA=\begin{bmatrix}
				0 & 2 & 2\\
				2\rho& 4\rho &4\rho \\
				\rho M/L&   8\rho M^2/L^2   & 5 \rho M/L
			\end{bmatrix}.
		\end{align*}
	\end{proof}
	
	\subsection{Proof of {Lemma~\ref{lem:lam_max}}} 
	
	\begin{proof}
		It is easy to check that $\mathbf{A}$ is non-negative and irreducible. 
		Furthermore, every diagonal entry of $\mathbf{A}$ is not zero.
		Thus, by Perron-Frobenius theorem and Corollary 8.4.7 of \cite{horn2012matrix}, $\mathbf{A}$ has a real-valued positive number $\lambda_1(\mathbf{A})$ which is algebraically simple and associated with a strictly positive eigenvector $\bv$. 
		It also holds that $\lambda_1(\mathbf{A})$ is strictly larger than $|\lambda_i(\mathbf{A})|$ with~$i=2,3$.
		
		We write down the characteristic polynomial $p(\zeta)$ of ${\mathbf{A}}$, that is
		\begin{align*}
			p(\zeta) = \zeta p_0(\zeta) - 32(M/L)^2\rho^2 + 20\rho^2M/L,
		\end{align*}
		where
		\begin{equation*}
			p_0(\zeta) = \zeta^2 - \rho\left(4+5 M/L \right) \zeta - 4\rho\left(8\rho(M/L)^2+ 5M/L + 1 - 5\rho M/L \right).
		\end{equation*}
		Let us denote 
		\begin{equation}
			\label{eq:Delta}
			\Delta = 16 \rho \left(8\rho(M/L)^2+ 5M/L + 1 - 5\rho M/L \right).
		\end{equation}
		It is easy to check that $\Delta >0$.
		Thus, two roots of $p_0(\zeta)$ are
		\begin{equation*}
			\zeta_1 = \frac{ \rho(4+5 M/L)+\sqrt{ (4+5 M/L)^2\rho^2+\Delta}}{2}
		\end{equation*}		
		and
		\begin{equation*} 
			\zeta_2 = \frac{ \rho(4+5 M/L)-\sqrt{ (4+5 M/L)^2\rho^2+\Delta}}{2}.
		\end{equation*} 
		By letting 
		\begin{equation*}
			\zeta^* = \frac{2\rho\cdot\left(32(M/L)^2+2\right)(4+5 M/L)+\sqrt{\max\{\Delta,\frac{1}{4}\}}}{2},
		\end{equation*}
		we have
		\begin{align*}
			&p\left(\zeta^*\right) + 32(M/L)^2\rho^2 - 20\rho^2M/L \\
			=&	\frac{2\rho\left(32(M/L)^2+2\right)(4+5 M/L)+\sqrt{\max\{\Delta,\frac{1}{4}\}}}{2}
			\\
			&\cdot
			\frac{2\rho\!\left(32(M/L)^2\!+\!2\right)(4\!+\!5 M/L)+\sqrt{\max\{\Delta,\frac{1}{4}\}} - \rho(4\!+\!5M/L) - \sqrt{(4\!+\!5M/L)^2 \rho^2\!+\!\Delta}}{2}
			\\
			&\cdot
			\frac{2\rho\!\left(32(M/L)^2\!+\!2\right)(4\!+\!5 M/L)+\sqrt{\max\{\Delta,\frac{1}{4}\}} - \rho(4\!+\!5M/L) + \sqrt{(4\!+\!5M/L)^2 \rho^2\!+\!\Delta}}{2}\\
			&\ge
			\frac{2\rho\left(32(M/L)^2+2\right)(4+5 M/L)+\sqrt{\max\{\Delta,\frac{1}{4}\}}}{2}
			\\
			&\cdot
			\frac{\left(2\rho\left(32(M/L)^2+1\right)(4+5 M/L)+\sqrt{\max\{\Delta,\frac{1}{4}\}}\,\right)^2 - (\sqrt{(4+5M/L)^2 \rho^2 + \Delta})^2}{2}\\
			=&
			\frac{2\rho\left(32(M/L)^2+2\right)(4+5 M/L)+\sqrt{\max\{\Delta,\frac{1}{4}\}}}{2}\\
			&\cdot
			\bigg( \frac{\left(2\rho\left(32(M/L)^2+1\right)(4+5 M/L)\right)^2+\max\{\Delta,\frac{1}{4}\} - ((4+5M/L)^2 \rho^2 + \Delta)}{2} 
			\\&+
			\left(2\rho\left(32(M/L)^2+1\right)(4+5 M/L)\right) \sqrt{\max\left\{\Delta,\frac{1}{4}\right\}}\,\bigg)\\
			\ge&
			\frac{2\rho\left(32(M/L)^2+2\right)(4+5 M/L)+\sqrt{\max\{\Delta,\frac{1}{4}\}}}{2} \\
			& \cdot 
			\left(2\rho\left(32(M/L)^2+1\right)(4+5 M/L)\right) \sqrt{\max\left\{\Delta,\frac{1}{4}\right\}}\\
			>&
			\frac{\left(2\rho\left(32(M/L)^2+1\right)(4+5 M/L)\right)\cdot \max\{\Delta,\frac{1}{4}\} }{2}
			\\
			\ge&
			\frac{2\rho(32(M/L)^2 +1) \cdot 5}{8}.
		\end{align*}
		Thus, we can obtain that
		\begin{align*}
			p\left(\zeta^*\right) > \frac{2\rho(32(M/L)^2 +1) \cdot 5}{8} -   32(M/L)^2 \rho^2 + 20\rho^2M/L > 0.
		\end{align*}
		Note that $p(\zeta)$ is monotonely increasing in the range $\left[\zeta^*, \infty\right]$.
		Thus, $p(\zeta)$ does not have real roots in this range.
		This implies 
		$\lambda_1(\mathbf{A})\le \zeta^*$.
		By Eq.~\eqref{eq:Delta}, we can obtain that if $\rho$ satisfies the condition that
		\begin{align}
			\rho \le \frac{1}{64\left( 8(M/L)^2 + 5 M/L +1 \right)}, \label{eq:rho1}
		\end{align}
		then it holds that
		$
		\Delta \le \frac{1}{4}
		$.
		If $\rho$ also satisfies the condition that
		\begin{align}
			\rho \le  \frac{1}{4\left(32(M/L)^2 + 2\right) (4+5M/L)}, \label{eq:rho2}
		\end{align}
		then we can obtain that
		$$
		\lambda_1(\mathbf{A})
		\le \zeta^* 
		\le
		\frac{\frac12+\sqrt{\max\{\Delta,\frac{1}{4}\}}}{2}
		=\frac{1}{2}.
		$$
		It is easy to check that if  $\rho \le {L^3}/{(1280M^3)}$, inequalities~\eqref{eq:rho1} and~\eqref{eq:rho2} hold.
		
		Now, we begin to prove that $\sqrt{\rho}<\lambda_1(\mathbf{A})$.
		We can conclude this result once it holds~$p(\sqrt{\rho})<0$.
		This is because $p(\zeta)$ will have a root between $\sqrt{\rho}$ and $1/2$ and $\lambda_1(\mathbf{A})$ must be no less than this root.
		We have
		\begin{align*}
        \small\begin{split}
			& p(\sqrt{\rho}) \\
			=&
			\sqrt{\rho}p_0(\sqrt{\rho}) - 32\rho^2(M/L)^2 + 20\rho^2 M/L
			\\
			=&
			\rho\left(\sqrt{\rho} - \rho(4+5 M/L) - 4\sqrt{\rho} \left(8\rho(M/L)^2+ 5M/L + 1 - 5\rho M/L \right) - 32\rho (M/L)^2 + 20\rho M/L\right)
			\\
			=&
			\rho\left( - (20M/L + 3) \sqrt{\rho}  -\rho(4+ 5M/L)  - 4\rho^{3/2}\left( 8(M/L)^2 - 5 M/L \right) -  \rho \left(32 (M/L)^2  -15 M/L\right)\right)
			\\
			<&
			\rho\left( - (20M/L + 3) \sqrt{\rho}  -\rho(4+ 5M/L)\right)\\
			<& 0,
        \end{split}
		\end{align*}
		where the  first inequality is because of $M/L \ge 1$.
		
		Since $\bv$ is the eigenvector associated with $\lambda_1(\mathbf{A})$, we can obtain that $\mathbf{A}\bv = \lambda_1(\mathbf{A})\bv$ and have the following equations
		\begin{align}
			2\bv(2) + 2 \bv(3) &= \lambda_1(\mathbf{A}) \bv(1), \label{eq:v_1}
			\\
			2\rho \bv(1) + 4\rho \bv(2)+4\rho\bv(3) &= \lambda_1(\mathbf{A})\bv(2),  \label{eq:v_2}
			\\
			\rho \frac{M}{L} \bv(1) + 8\rho\left(\frac{M}{L}\right)^2\bv(2) + 5\rho \frac{M}{L}\bv(3) &= \lambda_1(\mathbf{A})\bv(3). \label{eq:v_3} 
		\end{align}		
		By Eqs.~\eqref{eq:v_1} and~\eqref{eq:v_2}, we can obtain that 
		\begin{align*}
			2\rho\bv(1) = \lambda_1(\bA) \bv(2) - 2\rho \lambda_1(\bA) \bv(1),
		\end{align*}
		which implies that
		\begin{align*}
			\bv(1)=\frac{\lambda_1(\bA)\bv(2)}{2\rho(1+\lambda_1(\bA))}.
		\end{align*}
		Replacing above equation to Eq.~\eqref{eq:v_3}, we can obtain that
		\begin{align*}
			\frac{M\lambda_1(\bA) \bv(2)}{2L(1+ \lambda_1(\bA))}
			=
			\lambda_1(\mathbf{A})\bv(3) - \left(8\rho\left(\frac{M}{L}\right)^2\bv(2) + \frac{5\rho M}{L}\bv(3)\right) 
			< 
			\lambda_1(\mathbf{A})\bv(3),
		\end{align*}
		which implies that
		\begin{align*}
			\bv(2) 
			\le \frac{2L(1+\lambda_1(\bA))\bv(3)}{M} 
			\le \frac{3L\bv(3)}{M}  \le 3 \bv(3),
		\end{align*}
		where the second inequality is because of $\rho \le 1/2$.
		Combining Eq.~\eqref{eq:v_1}, we can obtain that
		\begin{align*}
			\bv(1) 
			\le \frac{2 \left( 3 \bv(3) + \bv(3)\right)}{\lambda_1(\bA)}
			\le \frac{8\bv(3)}{\sqrt{\rho}} ,
		\end{align*}
		where the last inequality is because of $\lambda_1(\bA)\ge \sqrt{\rho}$.
	\end{proof}
	
	\subsection{Proof of {Lemma~\ref{lem:VV}}}
	
	Before proving { Lemma~\ref{lem:VV}}, we first give several important lemmas which are closely related to the convergence rate of Algorithm~\ref{alg:DAGD_p}.
	
	\begin{lemma}
		Letting $\xb_t, \yb_t, \bs_t$ be generated by Algorithm~\ref{alg:DAGD_p}, it holds that
		\begin{equation}\label{eq:hx}
			\begin{aligned}
				h(\bbx_{t+1}) - h(x^*) 
				\le&
				(1 - \alpha) (h(\bbx_t) - h(x^*)) -\dotprod{ \bbG_t, (1-\alpha)\bbx_t+\alpha x^* - \bby_{t}} 
				\\&- \eta\left(\frac{3}{4} - \frac{\eta L}{2}\right) \norm{\bbG_t}^2 
				- \frac{\mu \alpha}{2} \norm{x^* - \bby_t}^2
				\\
				&+ \frac{13\eta}{m} \norm{\bs_t - \mathbf{1}\bbs_t}^2 + \frac{20M^2 \eta + 10\eta^{-1}}{m} \norm{\yb_t - \mathbf{1}\bby_t}^2.
			\end{aligned}
		\end{equation}
	\end{lemma}
	
	\begin{proof}
		By  $\mu$-strong convexity , $L$-smoothness of $f(x)$ and the property of proximal operator, we  have
		\begin{equation}\label{ieq:subgradient}
			\small\begin{aligned}
				&h(\proximal_{\eta,r}(\yb_t^{(i)}-\eta\bs_t^{(i)}))= f(\proximal_{\eta,r}(\yb_t^{(i)}-\eta\bs_t^{(i)}))+r(\proximal_{\eta,r}(\yb_t^{(i)}-\eta\bs_t^{(i)}))\\
				= & f(\yb_t^{(i)}+\proximal_{\eta,r}(\yb_t^{(i)}-\eta\bs_t^{(i)})-\yb_t^{(i)})+r(\proximal_{\eta,r}(\yb_t^{(i)}-\eta\bs_t^{(i)}))\\
				\le& f(\yb_t^{(i)})+\nabla f(\yb_{t}^{(i)})^\top \left( \proximal_{\eta,r}(\yb_t^{(i)}-\eta\bs_t^{(i)})-\yb_t^{(i)}\right) +\frac{L}{2}\norm{\proximal_{\eta,r}(\yb_t^{(i)}-\eta\bs_t^{(i)})-\yb_t^{(i)}}^2   \\
				&+r(z)+\frac{1}{\eta}(\proximal_{\eta ,r}(\yb_t^{(i)}-\eta \bs_t^{(i)})-\yb_t^{(i)}+\eta \bs_t^{(i)})^\top(z-\proximal_{\eta ,r}(\yb_t^{(i)}-\eta \bs_t^{(i)}))\\
				\le& h(z)-\nabla f(\yb_t^{(i)})^\top(z-\yb_t^{(i)})-\frac{\mu}{2}\norm{z-\yb_t^{(i)}}^2+\nabla f(\yb_{t}^{(i)})^\top \left( \proximal_{\eta,r}(\yb_t^{(i)}-\eta\bs_t^{(i)})-\yb_t^{(i)}\right)   \\
				&+\!\frac{L}{2}\norm{\proximal_{\eta,r}(\yb_t^{(i)}\!-\!\eta\bs_t^{(i)})\!-\!\yb_t^{(i)}}^2\!+\!\frac{1}{\eta}(\proximal_{\eta ,r}(\yb_t^{(i)}\!-\!\eta \bs_t^{(i)})\!-\!\yb_t^{(i)}\!+\!\eta \bs_t^{(i)})^\top(z\!-\!\proximal_{\eta ,r}(\yb_t^{(i)}\!-\!\eta \bs_t^{(i)}))
				\\
				=&h(z) - \dotprod{\nabla f(\yb_t^{(i)}), z - \yb_t^{(i)}} - \frac{\mu}{2}\norm{z - \yb_t^{(i)}}^2
				-\eta \dotprod{\nabla f(\yb_t^{(i)}), G_t^{(i)}} + \frac{\eta^2 L}{2} \norm{G_t^{(i)}}^2
				\\
				&+\dotprod{\bs_t^{(i)} - G_t^{(i)}, z -\yb_t^{(i)} + \eta G_t^{(i)}}
				\\
				=&h(z) - \dotprod{ \nabla f(\yb_t^{(i)}) - \bs_t^{(i)} + G_t^{(i)} , z-\yb_t^{(i)}}  - \frac{\mu}{2} \norm{z - \yb_t^{(i)}}^2 
				\\
				& - \eta \dotprod{\nabla f(\yb_t^{(i)}) - \bs_t^{(i)}, G_t^{(i)}} - \eta\left( 1 - \frac{\eta L}{2}  \right)\norm{G_t^{i}}^2 
			\end{aligned}
		\end{equation}
		for any given $z\in\RR^{d}$.
		Multiplying $1-\alpha$ on both sides of Eq.~\eqref{ieq:subgradient} and setting $z=\bbx_t$, we get
		\begin{equation*}
			\begin{aligned}
				&(1-\alpha)h(\proximal_{\eta,r}(\yb_t^{(i)}-\eta\bs_t^{(i)}))\\
				\le& (1-\alpha)h(\bbx_t)- (1 - \alpha) \dotprod{ \nabla f(\yb_t^{(i)}) - \bs_t^{(i)} + G_t^{(i)} , \bbx_t-\yb_t^{(i)}}  - \frac{\mu(1 - \alpha)}{2} \norm{\bbx_t - \yb_t^{(i)}}^2 
				\\
				& - (1 - \alpha) \eta \dotprod{\nabla f(\yb_t^{(i)}) - \bs_t^{(i)}, G_t^{(i)}} - (1-\alpha)\eta\left( 1 - \frac{\eta L}{2}  \right)\norm{G_t^{i}}^2.
			\end{aligned}
		\end{equation*}
		Similarly, multiplying $\alpha$ on both sides of Eq.~\eqref{ieq:subgradient} and setting $z=x^*$, we obtain that
		\begin{equation*}
			\begin{aligned}
				&\alpha h(\proximal_{\eta,r}(\yb_t^{(i)}-\eta\bs_t^{(i)}))\\
				\le& \alpha h(x^*)-\alpha \dotprod{ \nabla f(\yb_t^{(i)}) - \bs_t^{(i)} + G_t^{(i)} , x^*-\yb_t^{(i)}}  - \frac{\mu \alpha}{2} \norm{x^* - \yb_t^{(i)}}^2 
				\\
				& - \alpha \eta \dotprod{\nabla f(\yb_t^{(i)}) - \bs_t^{(i)}, G_t^{(i)}} - \alpha \eta\left( 1 - \frac{\eta L}{2}  \right)\norm{G_t^{i}}^2.
			\end{aligned}
		\end{equation*}
		Adding above two inequalities, we have
		\begin{equation}\label{eq:added}
			\begin{aligned}
				&h(\proximal_{\eta,r}(\yb_t^{(i)}-\eta\bs_t^{(i)}))-h(x^*)\\
				\le& (1-\alpha)\left( h(\bbx_t)-h(x^*)\right)   - \dotprod{ \nabla f(\yb_t^{(i)}) - \bs_t^{(i)} + G_t^{(i)} , (1-\alpha) \bbx_t + \alpha x^*-\yb_t^{(i)}}\\
				& - \eta \dotprod{\nabla f(\yb_t^{(i)}) - \bs_t^{(i)}, G_t^{(i)} } 
				- \eta\left( 1 - \frac{\eta L}{2}\right) \norm{G_t^i}^2 
				- \frac{\mu \alpha}{2} \norm{x^* - \yb_t^i}^2.
			\end{aligned}
		\end{equation}
		Note that by Jensen's inequality, we can get that
		\begin{equation}
			\norm{x^*-\bby_t} = \norm{x^*-\frac{1}{m}\sum_{i=1}^{m}\yb_t^{(i)}} \le \sqrt{\frac{1}{m}\sum_{i=1}^{m}\norm{x^*-\yb_t^{(i)}}^2}. \label{eq:aa}
		\end{equation} 
		Then averaging Eq.~\eqref{eq:added} from $i=1$ to $m$ and using the convexity of $h(x)$, we have
		\begin{equation}\label{eq:h_distance}
			\begin{aligned}
				&h(\bbx_{t+1})-h(x^*)\\
				\le& \frac{1}{m}\sum_{i=1}^{m}h(\proximal_{\eta,r}(\yb_t^{(i)}-\eta\bs_t^{(i)}))-h(x^*) \\
				\stackrel{\eqref{eq:added}}{\le}& 
				(1 - \alpha)( h (\bbx_t) - h(x^*)) 
				-\frac{1}{m} \sum_{i=1}^m  \dotprod{ \nabla f(\yb_t^{(i)}) - \bs_t^{(i)} + G_t^{(i)} , (1-\alpha) \bbx_t + \alpha x^*-\yb_t^{(i)}}\\
				& - \frac{\eta}{m} \sum_{i=1}^m\dotprod{\nabla f(\yb_t^{(i)}) - \bs_t^{(i)}, G_t^{(i)} } 
				- \eta\left( 1 - \frac{\eta L}{2}\right) \sum_{i=1}^m \norm{G_t^i}^2 
				- \frac{\mu \alpha}{2} \frac{1}{m}\sum_{i=1}^m\norm{x^* - \yb_t^i}^2\\
				\stackrel{\eqref{eq:aa}}{\le}&
				(1 - \alpha)( h (\bbx_t) - h(x^*)) 
				-\frac{1}{m} \sum_{i=1}^m  \dotprod{ \nabla f(\yb_t^{(i)}) - \bs_t^{(i)} + G_t^{(i)} , (1-\alpha) \bbx_t + \alpha x^*-\yb_t^{(i)}}\\
				& - \frac{\eta}{m} \sum_{i=1}^m\dotprod{\nabla f(\yb_t^{(i)}) - \bs_t^{(i)}, G_t^{(i)} } 
				- \eta\left( 1 - \frac{\eta L}{2}\right) \frac{1}{m} \sum_{i=1}^m \norm{G_t^i}^2 
				- \frac{\mu \alpha}{2} \norm{x^* - \bby_t}^2.
			\end{aligned}
		\end{equation}
		Furthermore, we have
		\begin{align*}
			&\frac{1}{m}\sum_{i=1}^{m}\left(  \bs_t^{(i)}- G_t^{(i)}-\nabla f(\yb_t^{(i)})\right) ^\top\left( (1-\alpha)\bbx_t+\alpha x^* -\yb_t^{(i)}  \right)
			\\
			=& \frac{1}{m}\sum_{i=1}^{m}\left(  \bs_t^{(i)}- G_t^{(i)}-\nabla f(\yb_t^{(i)})\right) ^\top \left( (1-\alpha)\bbx_t+\alpha x^* - \bby_{t} + \bby_{t} - \yb_{t}^{(i)} \right)
			\\
			\stackrel{\eqref{eq:bs}}{=}&
			-\dotprod{\bbG_t, (1-\alpha)\bbx_t+\alpha x^* - \bby_{t}} 
			+
			\frac{1}{m}\sum_{i=1}^m\dotprod{\bs_t^{(i)}- G_t^{(i)}-\nabla f(\yb_t^{(i)}), \bby_{t} - \yb_{t}^{(i)} }  
		\end{align*}
		and
		\begin{align*}
        \small\begin{split}   
			&\frac{1}{m}\sum_{i=1}^m\dotprod{\bs_t^{(i)}- G_t^{(i)}-\nabla f(\yb_t^{(i)}), \bby_{t} - \yb_{t}^{(i)} }  \\
			=~~\,&
			\frac{1}{m}\sum_{i=1}^m\dotprod{\bs_t^{(i)}- G_t^{(i)}-\nabla f(\yb_t^{(i)}) + \bbG_t, \bby_{t} - \yb_{t}^{(i)} }
			\\
			=~~\,&\frac{1}{m}\sum_{i=1}^m\dotprod{\bs_t^{(i)}-\nabla f(\yb_t^{(i)}), \bby_{t} - \yb_{t}^{(i)} }
			+
			\frac{1}{m}\sum_{i=1}^m\dotprod{ \bbG_t - G_t^{(i)}, \bby_{t} - \yb_{t}^{(i)} }
			\\
			\le~~\,&
			\sqrt{\frac{1}{m} \sum_{i=1}^m \norm{\bs_t^{(i)}-\nabla f(\yb_t^{(i)})}^2} \cdot \sqrt{\frac{1}{m}\sum_{i=1}^m \norm{\bby_t - \yb_t^{(i)}}^2} \\
			& + \sqrt{\frac{1}{m} \sum_{i=1}^m \norm{G_t^{(i)}- \bbG_t}^2} \cdot \sqrt{\frac{1}{m}\sum_{i=1}^m \norm{\bby_t - \yb_t^{(i)}}^2}
			\\
			\le~~\,&
			\frac{\eta}{m} \cdot \frac{\sum_{i=1}^m \norm{\bs_t^{(i)}-\nabla f(\yb_t^{(i)})}^2  + \sum_{i=1}^m \norm{G_t^{(i)}- \bbG_t}^2}{2} +  \frac{1}{m \eta }\norm{\yb_t - \mathbf{1}\bby_t}^2
			\\
			\stackrel{\eqref{eq:s_gd},\eqref{eq:GG}}{\le}&  \frac{\eta }{m} \left(9 \norm{\bs_t - \mathbf{1}\bbs_t}^2 + 4M^2\norm{\yb_t - \mathbf{1}\bby_t}^2 + 9\eta^{-2} \norm{\yb_t - \mathbf{1} \bby_t}^2\right) + \frac{1}{\eta m} \norm{\yb_t - \mathbf{1}\bby_t}^2
			\\
			=~~\,&
			\frac{9\eta}{m} \norm{\bs_t - \mathbf{1}\bbs_t}^2 + \frac{4M^2 \eta + 10\eta^{-1}}{m} \norm{\yb_t - \mathbf{1}\bby_t}^2,
    \end{split}
		\end{align*}
		where the first inequality is because of Cauchy's inequality and the second inequality is because of~$2ab \le \eta a^2 + b^2/\eta$.
		
		Combining above two inequalities, we can obtain that
		\begin{equation}\label{eq:g_err_1}
			\begin{aligned}
				&\frac{1}{m}\sum_{i=1}^m\dotprod{\bs_t^{(i)}- G_t^{(i)}-\nabla f(\yb_t^{(i)}), \bby_{t} - \yb_{t}^{(i)} }
				\\
				\le&
				-\dotprod{\bbG_t, (1-\alpha)\bbx_t+\alpha x^* - \bby_{t}} 
				+\frac{9\eta}{m} \norm{\bs_t - \mathbf{1}\bbs_t}^2 + \frac{4M^2\eta + 10\eta^{-1}}{m} \norm{\yb_t - \mathbf{1}\bby_t}^2.
			\end{aligned}
		\end{equation}
		
		Moreover, we have
		\begin{equation}\label{eq:g_err_2}
			\begin{aligned}
				&-\frac{\eta}{m} \sum_{i=1}^m\dotprod{\nabla f(\yb_t^{(i)}) - \bs_t^{(i)}, G_t^{(i)} } \\
				\le\,&
				\frac{\eta}{m} \sum_{i=1}^m \norm{\nabla f(\yb_t^{(i)}) - \bs_t^{(i)}} \norm{G_t^{(i)}}
				\\
				\le\,&
				\frac{\eta}{m} \sum_{i=1}^m \left(2 \norm{\nabla f(\yb_t^{(i)}) - \bs_t^{(i)}}^2 + \frac{1}{4}\norm{G_t^{(i)}}^2\right)
				\\
				\stackrel{\eqref{eq:s_gd}}{\le}&
				\frac{4\eta}{m} \norm{\bs_t -\mathbf{1} \bbs_t}^2 + \frac{16M^2\eta}{m} \norm{\yb_t - \mathbf{1}\bby_t}^2 + \frac{\eta}{4m} \sum_{i=1}^m \norm{G_t^{(i)}}^2.
			\end{aligned}
		\end{equation}
		
		Combining Eqs.~\eqref{eq:h_distance},~\eqref{eq:g_err_1} and~\eqref{eq:g_err_2}, we can obtain that
		\begin{equation*}
			\begin{aligned}
				& h(\bbx_{t+1}) - h(x^*) \\
				\le& (1 - \alpha) (h(\bbx_t) - h(x^*)) -\dotprod{\bbG_t, (1-\alpha)\bbx_t+\alpha x^* - \bby_{t}} 
				\\&- \eta\left(\frac{3}{4} - \frac{\eta L}{2}\right) \frac{1}{m} \sum_{i=1}^m \norm{G_t^{(i)}}^2 
				- \frac{\mu \alpha}{2} \norm{x^* - \bby_t}^2
				\\
				&+ \frac{13\eta}{m} \norm{\bs_t - \mathbf{1}\bbs_t}^2 + \frac{20M^2 \eta + 10\eta^{-1}}{m} \norm{\yb_t - \mathbf{1}\bby_t}^2
				\\
				\le&
				(1 - \alpha) (h(\bbx_t) - h(x^*)) -\dotprod{\bbG_t, (1-\alpha)\bbx_t+\alpha x^* - \bby_{t}} 
				\\&- \eta\left(\frac{3}{4} - \frac{\eta L}{2}\right) \norm{\bbG_t}^2 
				- \frac{\mu \alpha}{2} \norm{x^* - \bby_t}^2
				\\
				&+ \frac{13\eta}{m} \norm{\bs_t - \mathbf{1}\bbs_t}^2 + \frac{20M^2 \eta + 10\eta^{-1}}{m} \norm{\yb_t - \mathbf{1}\bby_t}^2,
			\end{aligned}
		\end{equation*}
		where the last inequality is because of Jensen's inequality.
	\end{proof}
	
	\begin{lemma}
		Letting $\xb_t, \yb_t, \bs_t$ be generated by Algorithm~\ref{alg:DAGD_p}, it holds that
		\begin{equation} \label{eq:vv_up}
			\begin{aligned}
				\frac{\mu}{2}\norm{\bbv_{t+1}-x^*}^2
				\le&
				\frac{(1-\alpha)\mu}{2}\norm{\bbv_t - x^*}^2 + \frac{\alpha \mu}{2} \norm{\bby_t - x^*}^2
				\\
				&+   \dotprod{\bbG_t, (1 - \alpha) \bbx_t + \alpha x^* - \bby_t}
				+\frac{ \eta}{2} \norm{\bbG_t}^2. 
			\end{aligned}
		\end{equation}
	\end{lemma}
	\begin{proof}
		We have
		\begin{align*}
			& \frac{\mu}{2}\norm{\bbv_{t+1}-x^*}^2 \\
			\stackrel{\eqref{eq:vv_p}}{=}~~&\frac{\mu}{2}\norm{(1 - \alpha) \bbv_t  + \alpha \bby_t - \frac{\eta}{\alpha} \bbG_t-x^*}^2\\
			=~~\,&
			\frac{\mu}{2} \norm{(1 - \alpha) \bbv_t  + \alpha \bby_t - x^*}^2 
			-  \frac{\mu \eta}{\alpha} \dotprod{\bbG_t, (1 - \alpha) \bbv_t  + \alpha \bby_t - x^*}
			+\frac{\mu \eta^2}{2\alpha^2} \norm{\bbG_t}^2\\
			\stackrel{\alpha = \sqrt{\mu \eta}}{=}&
			\frac{\mu}{2} \norm{(1 - \alpha) \bbv_t  + \alpha \bby_t - x^*}^2 
			-  \alpha \dotprod{\bbG_t, (1 - \alpha) \bbv_t  + \alpha \bby_t - x^*}
			+\frac{ \eta}{2} \norm{\bbG_t}^2.
		\end{align*}
		Furthermore, by Eq.~\eqref{eq:yxv}, we have  $\bbv_t = \bby_t + \frac{1}{\alpha} (\bby_t - \bbx_t)$ which implies that
		\begin{align*}
			(1 - \alpha)\bbv_t + \alpha \bby_t = \bby_t + \frac{1 - \alpha}{\alpha} (\bby_t - \bbx_t).
		\end{align*}
		Thus, it holds that
		\begin{align*}
			-\alpha \dotprod{\bbG_t, (1 - \alpha) \bbv_t  + \alpha \bby_t - x^*} 
			= \dotprod{\bbG_t, (1 - \alpha) \bbx_t + \alpha x^* - \bby_t}.
		\end{align*}
		It also holds that
		\begin{align*}
			& \norm{(1 - \alpha) \bbv_t  + \alpha \bby_t - x^*}^2 \\
			\le &
			\left( (1 - \alpha) \norm{\bbv_t - x^*} + \alpha \norm{\bby_t - x^*} \right)^2 \\
			\le & (1 - \alpha) \norm{\bbv_t - x^*}^2 + \alpha \norm{\bby_t - x^*}^2.
		\end{align*}
		Therefore, it holds that
		\begin{equation*} 
			\small\begin{aligned}
				\frac{\mu}{2}\norm{\bbv_{t+1}-x^*}^2
				\le
				\frac{(1-\alpha)\mu}{2}\norm{\bbv_t - x^*}^2 + \frac{\alpha \mu}{2} \norm{\bby_t - x^*}^2
				+   \dotprod{\bbG_t, (1 - \alpha) \bbx_t + \alpha x^* - \bby_t}
				+\frac{ \eta}{2} \norm{\bbG_t}^2 .
			\end{aligned}
		\end{equation*}		
	\end{proof}
	
	Combining above two lemmas, we can obtain the following result.
	\begin{proof}[Proof of Lemma~\ref{lem:VV}]
		Using the definition of $V_t$, we have
		\begin{align*}
        \small\begin{split}        
			V_{t+1} 
			=~~& 
			h(\bbx_{t+1}) - h(x^*) + \frac{\mu}{2}\norm{\bbv_{t+1}-x^*}^2
			\\
			\stackrel{\eqref{eq:hx},\eqref{eq:vv_up}}{\le}& (1 - \alpha) V_t - \eta\left(\frac{1}{4} - \frac{\eta L}{2}\right) \norm{\bbG_t}^2 
			+ \frac{13\eta}{m} \norm{\bs_t - \mathbf{1}\bbs_t}^2 + \frac{20M^2 \eta + 10\eta^{-1}}{m} \norm{\yb_t - \mathbf{1}\bby_t}^2
			\\
			\le~~& (1 - \alpha) V_t  
			+ \frac{13\eta}{m} \norm{\bs_t - \mathbf{1}\bbs_t}^2 + \frac{20M^2 \eta + 10\eta^{-1}}{m} \norm{\yb_t - \mathbf{1}\bby_t}^2,
        \end{split}
		\end{align*}
		where the last inequality is because of $\eta = {1}/{(2L)}$.
	\end{proof}

	\section{Convergence Analysis of Algorithm~\ref{alg:DAGD}}
	\label{app:mudag}
	
	The proof of Algorithm~\ref{alg:DAGD} is almost the same to the one of Algorithm~\ref{alg:DAGD_p}.
	But, without the proximal mapping which will cause extra consensus error terms, the detailed convergence analysis of Algorithm~\ref{alg:DAGD} is clean and easy to follow. 
	
	\begin{lemma}
		\label{lem:proc}
		The update procedure of Algorithm~\ref{alg:DAGD} can be represented as
		\begin{align}
			\xb_{t+1} =& \mathrm{FastMix}\left(\yb_t - \eta\bs_t, K\right), \label{eq:x_y_s}
			\\
			\yb_{t+1} =& \xb_{t+1} + \frac{1-\alpha}{1+\alpha}(\xb_{t+1} - \xb_t),
			\\
			\bs_{t+1} =& \mathrm{FastMix}(\bs_t  , K) + (\nabla F(\yb_{t+1}) -\nabla F(\yb_t))- \eta^{-1} (\mathrm{FastMix}(\yb_t, K) - \yb_t), \label{eq:sp}
		\end{align}
		with $\bs_0 = \nabla F(\yb_0)$.
	\end{lemma}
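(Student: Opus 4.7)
The goal is to show that the single-line update in Algorithm~\ref{alg:DAGD} is equivalent to the three-line recursion (\ref{eq:x_y_s})--(\ref{eq:sp}) once we identify the right auxiliary sequence $\bs_t$. The plan is to reverse-engineer $\bs_t$ from the requirement that the two $\xb_{t+1}$ updates coincide, and then verify the proposed recurrence by induction.

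First, I would set
\[
\bs_t := \yb_{t-1} - \xb_t + \eta\bigl(\nabla F(\yb_t) - \nabla F(\yb_{t-1})\bigr),
\qquad t \ge 0,
\]
using the initial conventions $\yb_{-1} = \xb_0 = \mathbf{1}\bbx_0$ and $\nabla F(\yb_{-1}) = 0$ from the algorithm. With this definition, the base case is immediate: $\bs_0 = \yb_{-1} - \xb_0 + \eta\nabla F(\yb_0) = \eta\nabla F(\yb_0)$, matching the stated initialization. Moreover, rearranging the definition gives $\yb_t - \bs_t = \yb_t + (\xb_t - \yb_{t-1}) - \eta(\nabla F(\yb_t) - \nabla F(\yb_{t-1}))$, so that
\[
\mathrm{FastMix}(\yb_t - \bs_t, K) \;=\; \xb_{t+1},
\]
which reproduces line 3 of Algorithm~\ref{alg:DAGD}; the $\yb_{t+1}$ update is literally unchanged. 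This establishes (\ref{eq:x_y_s}) and the second line.

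The remaining task is to verify (\ref{eq:sp}). Using the definition at time $t+1$, $\bs_{t+1} = \yb_t - \xb_{t+1} + \eta(\nabla F(\yb_{t+1}) - \nabla F(\yb_t))$, so I only need to show
\[
-\xb_{t+1} \;=\; \mathrm{FastMix}(\bs_t, K) - \mathrm{FastMix}(\yb_t, K).
\]
The key observation is that $\mathrm{FastMix}$ is a linear operator on its input, since each inner iteration in Algorithm~\ref{alg:mix} is of the form $\xb^{k+1} = (1+\eta_w)W\xb^k - \eta_w \xb^{k-1}$, and both the step size $\eta_w$ and the matrix $W$ are fixed. By this linearity,
\[
\mathrm{FastMix}(\bs_t, K) - \mathrm{FastMix}(\yb_t, K) = -\mathrm{FastMix}(\yb_t - \bs_t, K) = -\xb_{t+1},
\]
which is exactly what was needed.

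No serious obstacle arises here; the lemma is a bookkeeping identity and the only subtle point is to invoke the linearity of $\mathrm{FastMix}$ (a consequence of its recurrence, not of any nontrivial property of $W$ beyond those already assumed). The induction is clean because both the base case and the step reduce to algebraic manipulation, and the introduced $\bs_t$ carries the meaning of a gradient-tracking correction accumulated through the one-step-delayed difference $\eta(\nabla F(\yb_t) - \nabla F(\yb_{t-1}))$ together with the residual $\yb_{t-1} - \xb_t$.
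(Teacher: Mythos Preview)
Your proof is correct and rests on the same key observation as the paper's, namely the linearity of $\mathrm{FastMix}$ and the identity $\mathrm{FastMix}(\yb_t,K)-\mathrm{FastMix}(\bs_t,K)=\xb_{t+1}$. The only difference is organizational: the paper takes the recursion \eqref{eq:x_y_s}--\eqref{eq:sp} as the definition of $\bs_t$ and verifies by induction that the resulting $\xb_{t+1}$ coincides with Algorithm~\ref{alg:DAGD}, whereas you write down the closed form $\bs_t=\yb_{t-1}-\xb_t+\eta(\nabla F(\yb_t)-\nabla F(\yb_{t-1}))$ and check both \eqref{eq:x_y_s} and \eqref{eq:sp} directly, which sidesteps the induction altogether.
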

	\begin{proof}
		The proof of this reformulation is equivalent to  prove that
		given the reformulation of $\xb_t$, $\yb_t$ and $\bs_t$ at
		iteration $t$, the  reformulation of $\xb_{t+1}$ holds at iteration $t+1$.
		Therefore our induction focuses on $\xb_{t+1}$.
		First, when $t=0$, we can obtain that
		\begin{equation}
			\xb_{1} = \TB(\yb_0 - \eta\nabla F(\yb_0)) = \TB(\yb_0 -\eta \bs_0),
		\end{equation} 
		which implies that
		\begin{equation*}
			\xb_1 - \yb_0 = -\eta \TB(\bs_0) + \TB(\yb_0) - \yb_0.
		\end{equation*}
		Furthermore, have
		\begin{equation*}
			\bs_1 = \TB(\bs_0) + (\nabla F(\yb_2) - \nabla F(\yb_1)) - \eta^{-1}(\TB(\yb_0) - \yb_0).
		\end{equation*}
		Thus,  we can obtain that
		\begin{align*}
			\xb_{2} 
			\stackrel{\eqref{eq:xb_up}}{=}& 
			\TB(\yb_1 + (\xb_1 - \yb_0) - \eta(\nabla F(\yb_1) - \nabla F(\yb_0))) 
			\\
			=\,&\TB(\yb_1 - \left(\eta \TB(\bs_0) + \eta(\nabla F(\yb_1) - \nabla F(\yb_0))\right) + \TB(\yb_0) - \yb_0)
			\\
			=\,&\TB(\yb_1 - \eta \bs_1),
		\end{align*}
		where the first equation is because of the update of Algorithm~\ref{alg:DAGD}. 
		We obtain that the result holds at~$t = 0$.
		
		Next, we prove that if the results hold in the $t$-th
		iteration, then it also holds at the $(t+1)$-th iteration.  
		For the $t$-th iteration, we assume that
		$\xb_{t+1} = \TB(\yb_t - \eta\bs_t)$,
		which implies that 
        \begin{align*}
            \xb_{t+1} - \TB(\yb_t) = -\eta \TB(\bs_t).
        \end{align*}
		Therefore, we obtain that
		\begin{align*}
			\xb_{t+2} 
			\stackrel{\eqref{eq:xb_up}}{=}& 
			\TB(\yb_{t+1}+\xb_{t+1} - \yb_t - \eta(\nabla F(\yb_{t+1}) - \nabla F(\yb_t)))
			\\
			=\,&
			\TB(\yb_{t+1} + \xb_{t+1} - \TB(\yb_t)  - \eta(\nabla F(\yb_{t+1}) - \nabla F(\yb_t)) + \TB(\yb_t) - \yb_t)
			\\
			=\,&\TB(\yb_{t+1} - \eta \TB(\bs_t) - \eta(\nabla F(\yb_{t+1}) - \nabla F(\yb_t)) + \TB(\yb_t) - \yb_t)
			\\
			=\,&
			\TB(\yb_{t+1} - \eta \bs_{t+1}).
		\end{align*}
		This proves the desired result. 
	\end{proof}
	
	We now show that $\bbx_t$, $\bby_t$, $\bbg_t$ (defined in
	Eq.~\eqref{eq:xyg} and generated by Algorithm~\ref{alg:DAGD}) and
	$\bbv_t$ (defined in Eq.~\eqref{eq:v_t}) can be fit into the framework of the centralized Nesterov's accelerated gradient descent.

	\begin{lemma}
		Let  $\bbx_t$, $\bby_t$, $\bbg_t$ (defined in Eq.~\eqref{eq:xyg}) be generated by Algorithm~\ref{alg:DAGD}. 
		Then they satisfy the following equalities:
		\begin{align}
        \begin{split}        
			\bbx_{t+1} =& \bby_t - \eta\bbg_t, \\
			\bby_{t+1} =& \bbx_{t+1} + \frac{1-\alpha}{1+\alpha}(\bbx_{t+1} - \bbx_t), \\
			\bbs_{t+1}=& \bbs_t+\bbg_{t+1} - \bbg_t = \bbg_{t+1}. 
        \end{split}\label{eq:bs_1}
		\end{align}
	\end{lemma} 
	\begin{proof}
		We first prove the last equality. 
		First, we have $\frac{1}{m}\mathbf{1}\mathbf{1}^\top(\TB(\yb_t) - \yb_t) = \mathbf{1}\bby_t - \mathbf{1}\bby_t = 0$ by Lemma~\ref{lem:mix_eq}.
		Thus, we can obtain that
		\begin{equation*}
			\bbs_{t+1}= \bbs_t+\bbg_{t+1} - \bbg_t.
		\end{equation*}
		Furthermore, we will prove $\bbs_t = \eta\bbg_t$ by induction. For $t=0$, we use the fact that $\bs_0 = \eta\nabla F(\yb_0)$.  
		Then, it holds that $\bbs_0 = \bbg_0$.
		We assume that $\bbs_t = \bbg_t$ at time $t$. 
		By the update equation, we have
		\begin{equation*}
			\bbs_{t+1} = \bbs_t + (\bbg_{t+1} - \bbg_t) = \bbg_{t+1}.
		\end{equation*}
		Thus, we obtain the result at time $t+1$. 
		The first two equations can be proved using Eq.~\eqref{eq:bs_1} and Proposition~\ref{lem:mix_eq}.
	\end{proof}
	
	\begin{lemma}
		\label{lem:yvs_1}
		Let $\zb_t = \left[\norm{\yb_t - \mathbf{1}\bby_t}, \rho^{-1}\norm{\xb_t - \mathbf{1}\bbx_t}, M^{-1}\norm{\bs_t - \mathbf{1}\bbs_t}\right]^\top$ with $\xb_t$ and $\yb_t$ generated by Algorithm~\ref{alg:DAGD} and $\bs_t$ defined in Eq.~\eqref{eq:sp}, then it holds that
		\begin{equation}
			\label{eq:zaz_1}
			\zb_{t+1} \le \mathbf{A}\zb_t + 4\sqrt{m}\left[0,0,\sqrt{\frac{2}{\mu}V_t}\,\right]^\top,
		\end{equation}
		where $\rho$ and $\mathbf{A}$ are defined as 
		\begin{equation*}
			\rho = \sqrt{14} \left(1 - \left(1-\frac{1}{\sqrt{2}}\,\right) \sqrt{1-\lambda_2(W)}\right)^K 
            \qquad \text{and} \qquad
			\mathbf{A} 
			\triangleq  	
			\begin{bmatrix}
				2\rho & \rho & 2\rho M\eta\\
				1 & 0 & M\eta\\
				9M\eta & \rho &  3\rho M\eta
			\end{bmatrix}.
		\end{equation*}
	\end{lemma}
	\begin{proof}
		By the update step of $\yb_{t+1}$ in Algorithm~\ref{alg:DAGD}, we have
		\begin{align*}
			\norm{\yb_{t+1} - \mathbf{1}\bby_{t+1}} 
			\le
			\frac{2}{1+\alpha}\norm{\xb_{t+1} - \mathbf{1}\bbx_{t+1}} + \frac{1-\alpha}{1+\alpha}\norm{\xb_t - \mathbf{1}\bbx_t}.
		\end{align*}
		Furthermore, by Eq.~\eqref{eq:x_y_s}, we have
		\begin{align*}
			\frac{1}{\rho}\norm{\xb_{t+1} - \mathbf{1}\bbx_{t+1}} \le \norm{\yb_t - \mathbf{1}\bby_t}+M\eta\cdot\frac{1}{M}\norm{\bs_t-\mathbf{1}\bbs_t}.
		\end{align*}
		Therefore, we can obtain that
		\begin{align*}
			\norm{\yb_{t+1} - \mathbf{1}\bby_{t+1}} 
			\le&
			\frac{2\rho}{1+\alpha} \norm{\yb_t - \mathbf{1}\bby_t}
			+
			\frac{1-\alpha}{1+\alpha}\norm{\xb_t - \mathbf{1}\bbx_t} 
			+
			\frac{2\rho \eta }{1+\alpha} \norm{\bs_t-\mathbf{1}\bbs_t}
			\\
			\le&
			2\rho\norm{\yb_t - \mathbf{1}\bby_t}
			+
			\rho\cdot\frac{1}{\rho}\norm{\xb_t - \mathbf{1}\bbx_t} 
			+
			2\rho M\eta\cdot\frac{1}{M} \norm{\bs_t-\mathbf{1}\bbs_t}.
		\end{align*}		
		Furthermore, by Eq.~\eqref{eq:sp}, we have
		\begin{align*}
			&\eta \norm{\bs_{t+1}-\mathbf{1}\bbs_{t+1}} 
			\\
			\le\,& 
			\eta \norm{\TB(\bs_t) -\mathbf{1}\bbs_t} 
			+ 
			\eta\norm{\nabla F(\yb_{t+1}) - \nabla F(\yb_t) -\mathbf{1}(\bbg_{t+1} - \bbg_t)} 
			+
			\norm{\TB(\yb_t) - \yb_t}
			\\
			\stackrel{\eqref{eq:xx_m}}{\le}&
			\eta \norm{\TB(\bs_t) -\mathbf{1}\bbs_t} 
			+ 
			\eta\norm{\nabla F(\yb_{t+1}) - \nabla F(\yb_t)} 
			+
			\norm{\TB(\yb_t) - \yb_t}
			\\
			\overset{\eqref{eq:M_sm}}{\le}&
			\rho\cdot \eta \norm{\bs_t - \mathbf{1}\bbs_t}
			+
			M\eta\norm{\yb_{t+1} - \yb_t}
			+\norm{\TB(\yb_t) - \yb_t}
			\\
			\le\,&
			\rho\cdot \eta \norm{\bs_t - \mathbf{1}\bbs_t}
			+
			M\eta\norm{\yb_{t+1} - \yb_t}
			+2\norm{\yb_t - \mathbf{1}\bby_t},
		\end{align*}
		where the last inequality is because of 
		\begin{align*}
			\norm{\TB(\yb_t) - \yb_t} = \norm{\TB(\yb_t) -\mathbf{1}\bby_t + \mathbf{1}\bby_t - \yb_t} 
			\leq (1+\rho) \norm{\yb_t - \mathbf{1}\bby_t}
			\leq 2\norm{\yb_t - \mathbf{1}\bby_t}.
		\end{align*}		
		By the update rule of $\yb_{t+1}$, we have
		\begin{align*}
		\small\begin{split}
		    \norm{\yb_{t+1} - \yb_t} 
			=& 
			\norm{\frac{2}{1+\alpha}\xb_{t+1} - \frac{1-\alpha}{1+\alpha}\xb_t  - \yb_t}
			\\
			\overset{\eqref{eq:x_y_s}}{=}&
			\norm{\frac{2}{1+\alpha}\TB(\yb_t -\eta \bs_t) - \frac{1-\alpha}{1+\alpha}\xb_t - \yb_t}
			\\
			\le&
			\frac{2}{1+\alpha}\norm{\TB(\yb_t) - \yb_t} + \frac{1-\alpha}{1+\alpha}\norm{\xb_t-\yb_t} + \frac{2\eta}{1+\alpha}\norm{\TB(\bs_t)}
			\\
			\le&
			\frac{4}{1+\alpha}\norm{\yb_t-\mathbf{1}\bby_t} + \frac{1-\alpha}{1+\alpha}(\norm{\xb_t-\mathbf{1}\bbx_t} + \norm{\yb_t-\mathbf{1}\bby_t}
			\\&+ \norm{\mathbf{1}(\bby_t-x^*)} + \norm{\mathbf{1}(\bbx_t - x^*)})
			+\frac{2\eta}{1+\alpha}(\norm{\TB(\bs_t) - \mathbf{1}\bbs_t} + \norm{\mathbf{1}\bbs_t}) 
			\\
			\overset{\eqref{eq:bs_1}}{\le}&
			\frac{5}{1+\alpha}\norm{\yb_t-\mathbf{1}\bby_t} + \frac{2\rho}{1+\alpha}\cdot \eta \norm{\bs_t-\mathbf{1}\bbs_t} + \frac{1-\alpha}{1+\alpha}(\norm{\xb_t -\mathbf{1}\bbx_t}) \\
			&+ \frac{1-\alpha}{1+\alpha}\left(\norm{\mathbf{1}(\bby_t - x^*)} + \norm{\mathbf{1}(\bbx_t - x^*)}\right) + \frac{2\eta\sqrt{m}}{1+\alpha}\norm{\bbg_t}.
		\end{split}	
		\end{align*}
		Furthermore, by Eq.~\eqref{eq:g_y}, we have
		\begin{align*}
			\norm{\bbg_t} \le \norm{\bbg_t - \nabla f(\bby_t)} + \norm{\nabla f(\bby_t)} \le \frac{M}{\sqrt{m}}\norm{\yb_t - \mathbf{1}\bby_t} + \norm{\nabla f(\bby_t)}.
		\end{align*}
		Therefore, we can obtain that
		\begin{align*}
        \small\begin{split}            
			&\frac{1}{M} \norm{\bs_{t+1}-\mathbf{1}\bbs_{t+1}}  
			\\
			\le&
			\rho(1+2\rho M\eta) \cdot \frac{1}{M} \norm{\bs_t-\mathbf{1}\bbs_t}
			+
			\left(\frac{5+2M\eta}{1+\alpha} + \frac{2}{M\eta}\right)\norm{\yb_t-\mathbf{1}\bby_t}
			\\
			& + \frac{1-\alpha}{1+\alpha}\norm{\xb_t-\mathbf{1}\bbx_t} 
			+ \frac{1-\alpha}{1+\alpha} \left(\norm{\mathbf{1}(\bby_t - x^*)} + \norm{\mathbf{1}(\bbx_t - x^*)}\right)
			+
			\frac{2\eta\sqrt{m}}{1+\alpha}\norm{\nabla f(\bby_t)}
			\\
			\le&
			\rho(1+2\rho  M\eta)\cdot\frac{1}{M}\norm{\bs_t-\mathbf{1}\bbs_t}
			+
			\left(7+2M\eta \right)\norm{\yb_t-\mathbf{1}\bby_t}
			+ \norm{\xb_t-\mathbf{1}\bbx_t} 
			\\
			&
			+ \norm{\mathbf{1}(\bby_t - x^*)} + \norm{\mathbf{1}(\bbx_t - x^*)} 
			+ 2\eta\sqrt{m}\norm{\nabla f(\bby_t)}
			\\
			\le&
			\rho \cdot 3 M\eta \cdot \frac{1}{M} \norm{\bs_t-\mathbf{1}\bbs_t}
			+
			9M\eta \cdot \norm{\yb_t-\mathbf{1}\bby_t}
			+
			\norm{\xb_t-\mathbf{1}\bbx_t} 
			\\
			&
			+ \norm{\mathbf{1}(\bby_t - x^*)} + \norm{\mathbf{1}(\bbx_t - x^*)} 
			+ 2\eta\sqrt{m}\norm{\nabla f(\bby_t)},
        \end{split}
		\end{align*}
		where the last two inequalities use $1<1+\alpha$, $\eta = {1}/{L}$ and $L\le M$.
		Furthermore, we have
		\begin{align*}
			&\norm{\mathbf{1}(\bby_t - x^*)} + \norm{\mathbf{1}(\bbx_t - x^*)} 
			+ 2\eta\sqrt{m}\norm{\nabla f(\bby_t)}
			\\
			\le&
			\norm{\mathbf{1}(\bby_t - x^*)} + \norm{\mathbf{1}(\bbx_t - x^*)} 
			+ 2L\eta\sqrt{m}\norm{\bby_t - x^*}
			\\
			\le&3\sqrt{m} \norm{\bby_t - x^*} + \sqrt{m}\norm{\bbx_t - x^*}
			\\
			\overset{\eqref{eq:y_x_V}}{\le}&3\sqrt{m}\sqrt{\frac{2V_t}{\mu}} + \sqrt{m}\norm{\bbx_t - x^*}
			\\
			\le&4\sqrt{m}\sqrt{\frac{2V_t}{\mu}}.
		\end{align*}
		The first inequality is because of the $L$-smoothness of $f(x)$.
		The second inequality follows from the step size $\eta = {1}/{L}$.
		The last inequality is due to the $\mu$-strong convexity.
		Thus, we can obtain that
		\begin{align*}
			& \frac{1}{M} \norm{\bs_{t+1}-\mathbf{1}\bbs_{t+1}}   \\
			\le &
			\rho \cdot 3M\eta \cdot\frac{1}{M}\norm{\bs_t-\mathbf{1}\bbs_t}
			+
			9M\eta \norm{\yb_t-\mathbf{1}\bby_t}
			+ \rho\cdot \frac{1}{\rho}\norm{\xb_t-\mathbf{1}\bbx_t} 
			+ 4\sqrt{m}\sqrt{\frac{2}{\mu}V_t}.
		\end{align*}
		By the definition of $\zb_t$, we can obtain that
		\begin{align*}
			\zb_{t+1} 
			=
			{\mathbf{A}}
			\zb_t 
			+
			\big[0,0,4\sqrt{2mV_t/\mu}\,\big]^\top.
		\end{align*}
	\end{proof}
	
	Next, we will prove the above two conditions which guarantee the convergence of $\norm{\zb_t}$. 
	In the following lemma, we show the properties of $\mathbf{A}$ and prove that the spectrum radius of $\mathbf{A}$ is less than~$\frac{1}{2}$ if $\rho$ is small enough.
	
	\begin{lemma}
		\label{lem:lam_max_1}
		Matrix  ${\mathbf{A}}$ defined in
		Lemma~\ref{lem:yvs_1} satisfies that
		\begin{equation*}
			0<\lambda_1(\mathbf{A}) \qquad \text{and} \qquad |\lambda_3(\mathbf{A})| \le |\lambda_2(\mathbf{A})|< \lambda_1(\mathbf{A}),
		\end{equation*}
		with $\lambda_i(\mathbf{A})$ being the $i$-th largest eigenvalue of $\mathbf{A}$. 
		Let $\eta = {1}/{L}$ and $\rho$ satisfy the condition that
		\begin{equation*}
			\rho \le \frac{1}{108 (M\eta)^3 + 288 (M\eta)^2 + 24 M\eta + 16},
		\end{equation*}
		then it holds that  
		\begin{equation*}
			\lambda_1({\mathbf{A}})\le \frac12,
		\end{equation*}
		and the eigenvector $\bv$ associated with $\lambda_1(\mathbf{A})$ is positive and its entries satisfy
		\begin{equation}
			\label{eq:v_v_1}
			\bv(1) \le \frac{\bv(3)}{18M\eta}, \quad \bv(2) \le \left(\frac{1}{18\sqrt{\rho}M\eta} + \frac{M\eta}{\sqrt{\rho}}\right)\bv(3),\quad 0<\bv(3),
		\end{equation}  
		where $\bv(i)$ is $i$-th entry of $\bv$.
	\end{lemma}
	\begin{proof}
		It is easy to check that $\mathbf{A}$ is non-negative and irreducible. 
		Furthermore, every diagonal entry of $\mathbf{A}$ is not zero.
		Thus, by Perron-Frobenius theorem and Corollary 8.4.7 of \cite{horn2012matrix}, $\mathbf{A}$ has a real-valued positive number $\lambda_1(\mathbf{A})$ which is algebraically simple and associated with a strictly positive eigenvector $\bv$. 
		It also holds that $\lambda_1(\mathbf{A})$ is strictly larger than $|\lambda_i(\mathbf{A})|$ with~$i=2,3$.

		We write down the characteristic polynomial $p(\zeta)$ of ${\mathbf{A}}$,
		\begin{align*}
			p(\zeta) = \zeta p_0(\zeta) - 9(M\eta)^2\rho + 3\rho^2M\eta,
		\end{align*}
		where
		\begin{equation*}
			p_0(\zeta) = \zeta^2 - \rho\left(2+3 M\eta\right) \zeta - \rho\left(18(M\eta)^2+ M\eta + 1 - 6\rho M\eta \right).
		\end{equation*}
		Let us denote 
		\begin{equation}
			\label{eq:Delta_1}
			\Delta = 4 \rho\left(18(M\eta)^2+ M\eta + 1 - 6\rho M\eta \right).
		\end{equation}
		It holds that
		\begin{align*}
			\frac{\Delta}{M\eta} 
			= &	4\rho (18 M\eta + 1 + \frac{1}{M\eta} - 6\rho)
			\ge 4\rho\left( 18  - 6 \right)
			> 0.
		\end{align*}
		Thus, two roots of $p_0(\zeta)$, $\zeta_1$ and $\zeta_2$ are
		\begin{equation*}
			\zeta_1 = \frac{ \rho(2+3 M\eta)+\sqrt{ (2+3 M\eta)^2\rho^2+\Delta}}{2}
		\end{equation*} 
		and
		\begin{equation*}
			\zeta_2 = \frac{ \rho(2+3 M\eta)-\sqrt{ (2+3 M\eta)^2\rho^2+\Delta}}{2}.
		\end{equation*} 
		Letting
		\begin{equation*}
			\zeta^* = \frac{2\rho\cdot\left(9(M\eta)^2+2\right)(2+3 M\eta)+\sqrt{\max\{\Delta,\frac{1}{4}\}}}{2},
		\end{equation*}
		we have
		\begin{align*}
			&p\left(\zeta^*\right)
			=
			\frac{2\rho\cdot\left(9(M\eta)^2+2\right)(2+3 M\eta)+\sqrt{\max\{\Delta,\frac{1}{4}\}}}{2}
			\\
			&\cdot
			\frac{2\rho\cdot\left(9(M\eta)^2+2\right)(2+3 M\eta)+\sqrt{\max\{\Delta,\frac{1}{4}\}} - \rho(2 + 3M\eta) - \sqrt{(2+3M\eta)^2 \rho^2 + \Delta}}{2}
			\\
			&\cdot
			\frac{2\rho\cdot\left(9(M\eta)^2+2\right)(2+3 M\eta)+\sqrt{\max\{\Delta,\frac{1}{4}\}} - \rho(2 + 3M\eta) + \sqrt{(2+3M\eta)^2 \rho^2 + \Delta}}{2}
			\\&
			- 9(M\eta)^2\rho + 3\rho^2M\eta
			\\
			&\ge
			\frac{2\rho\cdot\left(9(M\eta)^2+2\right)(2+3 M\eta)+\sqrt{\max\{\Delta,\frac{1}{4}\}}}{2}
			\\
			&\cdot
			\frac{\left(2\rho\cdot\left(9(M\eta)^2+1\right)(2+3 M\eta)+\sqrt{\max\{\Delta,\frac{1}{4}\}}\right)^2 - (\sqrt{(2+3M\eta)^2 \rho^2 + \Delta})^2}{2}
			\\&
			- 9(M\eta)^2\rho + 3\rho^2M\eta
			\\
			=&
			\frac{2\rho\cdot\left(9(M\eta)^2+2\right)(2+3 M\eta)+\sqrt{\max\{\Delta,\frac{1}{4}\}}}{2}\\
			&\cdot
			\bigg( \frac{\left(2\rho\cdot\left(9(M\eta)^2+1\right)(2+3 M\eta)\right)^2+\max\{\Delta,\frac{1}{4}\} - ((2+3M\eta)^2 \rho^2 + \Delta)}{2} 
			\\&+
			\left(2\rho\cdot\left(9(M\eta)^2+1\right)(2+3 M\eta)\right) \sqrt{\max\{\Delta,\frac{1}{4}\}} \bigg)
			\\
			&
			- 9(M\eta)^2\rho + 3\rho^2M\eta
			\\
			\ge&
			\frac{2\rho\cdot\left(9(M\eta)^2+2\right)(2+3 M\eta)+\sqrt{\max\{\Delta,\frac{1}{4}\}}}{2} \\
			& \cdot 
			\left(2\rho\cdot\left(9(M\eta)^2+1\right)(2+3 M\eta)\right) \sqrt{\max\{\Delta,\frac{1}{4}\}} \\ & - 9(M\eta)^2\rho
			\\
			>&
			\frac{\left(2\rho\cdot\left(9(M\eta)^2+1\right)(2+3 M\eta)\right)\cdot \max\{\Delta,\frac{1}{4}\} }{2}
			- 9(M\eta)^2\rho 
			\\
			\ge&
			\frac{2\rho(9(M\eta)^2 +1) \cdot 5}{8} - 9(M\eta)^2 \rho
			> 0.
		\end{align*}
		
		Note that $p(\zeta)$ is monotonely increasing in the range $\left[\zeta^*, \infty\right]$.
		Thus, $p(\zeta)$ does not have real roots in this range.
		This implies 
		$\lambda_1(\mathbf{A})\le \zeta^*$.
		By Eq.~\eqref{eq:Delta_1}, we can obtain that if $\rho$ satisfies \begin{align*}
			\rho \le \left(16 \cdot (18(M\eta)^2 + M\eta + 1)\right)^{-1},
		\end{align*}
		then it holds that
		$
		\Delta \le \frac{1}{4}
		$.
		If $\rho$ also satisfies the condition that
		\begin{align*}
			\rho \le \left(4 \cdot \left(9(M\eta)^2 + 2\right) (2+3M\eta)\right)^{-1},
		\end{align*}
		then we can obtain that
		\begin{align*}
			\lambda_1(\mathbf{A})
			\le \zeta^* \le
			\frac{\frac12+\sqrt{\max\{\Delta,\frac{1}{4}\}}}{2}
			=\frac{1}{2}.
		\end{align*}
		Combining the above conditions of $\rho$, we only need that
		\begin{equation*}
			\rho \le \frac{1}{108 (M\eta)^3 + 288 (M\eta)^2 + 24 M\eta + 16}.
		\end{equation*}		
		Now, we show that $\sqrt{\rho}<\lambda_1(\mathbf{A})$.
		We can conclude this result once it holds $p(\sqrt{\rho})<0$.
		This is because $p(\zeta)$ will have a root between $\sqrt{\rho}$ and $1/2$ and $\lambda_1(\mathbf{A})$ must be no less than this root.
		We have
		\begin{align*}\small
        \begin{split}            
			p(\sqrt{\rho}) 
			=&
			\sqrt{\rho}\,p_0(\sqrt{\rho}) - 9(M\eta)^2\rho + 3\rho M\eta
			\\
			=&
			\rho\left(\sqrt{\rho} - \rho(2+3 M\eta) - \frac{\Delta}{4\sqrt{\rho}} - 9 (M\eta)^2 + 3\rho M\eta\right)
			\\
			=&
			\rho\left(\sqrt{\rho} -2\rho  - \frac{\Delta}{4\sqrt{\rho}} - 9M^2\eta^2\right)
			\\
			=&
			\rho\left(-2\left(\sqrt{\rho} - \frac{1}{4}\right)^2 + \frac{1}{8} - \frac{\Delta}{4\sqrt{\rho}}  - 9M^2\eta^2\right)
			< 0,
        \end{split}
		\end{align*}
		where the  last inequality is because of $M\eta\ge 1$ (by Eq.~\eqref{eq:lg}).
		
		Since $\bv$ is the eigenvector associated with $\lambda_1(\mathbf{A})$, we can obtain that $\mathbf{A}\bv = \lambda_1(\mathbf{A})\bv$ and have the following equations
		\begin{align*}
			2\rho \bv(1) + \rho\bv(2) + 2\rho M\eta\bv(3) &= \lambda_1(\mathbf{A}) \bv(1),
			\\
			\bv(1) + M\eta\bv(3) &= \lambda_1(\mathbf{A})\bv(2), 
			\\
			9M\eta \bv(1) + \rho\bv(2) + 3\rho M\eta\bv(3) &= \lambda_1(\mathbf{A})\bv(3).
		\end{align*}
		Thus, combining with $\sqrt{\rho} \le \lambda_1(\mathbf{A}) \le \frac{1}{2}$, we can obtain that 
		\begin{align*}
			\bv(1) \le \frac{1}{9M\eta}\left(\lambda_1(\mathbf{A})\bv(3) - (\rho\bv(2) + 3\rho M\eta \bv(3))\right) < \frac{\bv(3)}{18M\eta},
		\end{align*}
		and 
		\begin{align*}
			\bv(2) = \frac{\bv(1)+M\eta\bv(3)}{\lambda_1(\mathbf{A})} \le \left(\frac{1}{18\sqrt{\rho}M\eta} + \frac{M\eta}{\sqrt{\rho}}\right)\bv(3).
		\end{align*}
	\end{proof}
	
	\begin{lemma}
		Letting $V_t$ be the Lyapunov function defined in Eq.~\eqref{eq:V_t} associated to Algorithm~\ref{alg:DAGD}, then it satisfies the following property 
		\begin{equation}
			\label{eq:V_up}
			V_{t+1} \le \left( 1 - \frac{3}{4}\alpha \right) V_t + \left( 1 + \frac{8}{\alpha^3} \right) \cdot \frac{M^2}{L} \cdot \frac{1}{m} \norm{\yb_t - \mathbf{1}\bby_t}^2.
		\end{equation}
	\end{lemma}
	\begin{proof}
		When $r(x) = 0$, $h(x)$ equals to $f(x)$. Thus, we use $f(x)$ directly instead of $h(x)$.
		By the update procedure of Algorithm~\ref{alg:DAGD}, we have
		\begin{equation}
			\label{eq:x_up}
			\begin{aligned}
				f(\bbx_{t+1}) 
				\le& 
				f(\bby_t) - \eta \dotprod{\nabla f(\bby_t), \bbg_t} + \frac{L\eta^2}{2}\norm{\bbg_t}^2
				\\
				=&
				f(\bby_t) - \eta \dotprod{\bbg_t, \bbg_t} + \eta \dotprod{\bbg_t,\bbg_t - \nabla f(\bby_t)}
				+ \frac{L\eta^2}{2}\norm{\bbg_t}^2
				\\
				=& f(\bby_t) - \frac{1}{2L} \norm{\bbg_t}^2 + \frac{1}{L}\dotprod{\bbg_t,\bbg_t - \nabla f(\bby_t)},
			\end{aligned}
		\end{equation}
		where the last equation is because $\eta = {1}/{L}$.
		Furthermore, by the definition of $V_t$, we have
		\begin{align*}
			V_{t+1} =\,& \frac{\mu}{2} \norm{\bbv_{t+1} - x^*}^2 + f(\bbx_{t+1}) - f(x^*)
			\\
			\overset{\eqref{eq:vv_p}}{=}&
			\frac{\mu}{2}\norm{(1-\alpha)\bbv_t + \alpha\bby_t - x^*}^2 - \frac{\mu}{L\alpha}\dotprod{\bbg_t, (1-\alpha)\bbv_t + \alpha \bby_t - x^*} 
			\\&+
			\frac{\mu}{2L^2\alpha^2}\norm{\bbg_t}^2 + f(\bbx_{t+1}) - f(x^*)
			\\
			\overset{\eqref{eq:x_up}}{\le}&
			\frac{\mu}{2}\norm{(1-\alpha)\bbv_t + \alpha\bby_t - x^*}^2 - \alpha \dotprod{\bbg_t, (1-\alpha)\bbv_t + \alpha \bby_t - x^*} 
			\\&+ f(\bby_t) - f(x^*) + \frac{1}{L}\dotprod{\bbg_t,\bbg_t - \nabla f(\bby_t)}.
		\end{align*}
		Furthermore, by Eq.~\eqref{eq:y_v}, we can obtain that $\bbv_t = \bby_t + \frac{1}{\alpha}(\bby_t - \bbx_t)$. 
		Then we can obtain
		\begin{align*}
			(1-\alpha)\bbv_t + \alpha\bby_t = \bby_t + \frac{1-\alpha}{\alpha} (\bby_t - \bbx_t).
		\end{align*}
		Hence, we have
		\begin{align*}
			&f(\bby_t) - \alpha \dotprod{\bbg_t, (1-\alpha)\bbv_t + \alpha \bby_t - x^*} -f(x^*)
			\\
			=&f(\bby_t) + \dotprod{\bbg_t, \alpha x^* + (1-\alpha)\bbx_t - \bby_t} - f(x^*)
			\\
			=&
			(\alpha + 1-\alpha) f(\bby_t) + \dotprod{\nabla f(\bby_t), \alpha (x^* - \bby_t) + (1-\alpha)(\bbx_t - \bby_t)} - f(x^*)
			\\&+ \dotprod{\bbg_t - \nabla f(\bby_t), \alpha x^* + (1-\alpha)\bbx_t - \bby_t}
			\\
			\le&
			(1-\alpha)(f(\bbx_t) - f(x^*)) - \frac{\alpha\mu}{2}\norm{x^* - \bby_t}^2+ \dotprod{\bbg_t - \nabla f(\bby_t), \alpha x^* + (1-\alpha)\bbx_t - \bby_t},
		\end{align*}
		where the last inequality is because $f(x)$ is $\mu$-strongly convex.
		Therefore, we can obtain that
		\begin{align*}
        \small\begin{split}            
			V_{t+1} 			
			\le&
			\frac{\mu}{2} \norm{(1-\alpha)\bbv_t + \alpha\bby_t - x^*}^2 + \frac{1}{L}\dotprod{\bbg_t,\bbg_t - \nabla f(\bby_t)}
			\\& + (1-\alpha)(f(\bbx_t) - f(x^*)) - \frac{\alpha\mu}{2}\norm{x^* - \bby_t}^2+ \dotprod{\bbg_t - \nabla f(\bby_t), \alpha x^* + (1-\alpha)\bbx_t - \bby_t}
			\\
			\le&
			\frac{\mu(1-\alpha)}{2} \norm{\bbv_t - x^*}^2 + \frac{\mu\alpha}{2}\norm{\bby_t - x^*}^2  + (1-\alpha)(f(\bbx_t) - f(x^*))
			\\& - \frac{\alpha\mu}{2}\norm{x^* - \bby_t}^2+ \dotprod{\bbg_t - \nabla f(\bby_t), \alpha x^* + (1-\alpha)\bbx_t - \bby_t + \frac{1}{L}\bbg_t}
			\\
			=&(1-\alpha) V_t  + \dotprod{\bbg_t - \nabla f(\bby_t), \alpha x^* + (1-\alpha)\bbx_t - \bby_t }+ \frac{1}{L}\norm{\bbg_t - \nabla f(\bby_t)}\norm{\bbg_t},
		\end{split}
        \end{align*}
		where the  second inequality is because of
		\begin{align*}
        \small\begin{split}            
			\norm{(1-\alpha)\bbv_t + \alpha\bby_t - x^*}^2 \leq \left((1-\alpha)\norm{\bbv_t - x^*} + \alpha \norm{\bby_t - x^*}\right)^2 
			\leq & (1-\alpha) \norm{\bbv_t - x^*}^2 + \alpha \norm{\bby_t - x^*}^2.
        \end{split}
		\end{align*}
		Furthermore, we have
		\begin{align*}
			\norm{\alpha x^* + (1-\alpha)\bbx_t - \bby_t} 
			\leq
			(1-\alpha)\norm{\bbx_t - x^*} + \alpha\norm{\bby_t - x^*}
			{\le}
			\max\left\{\sqrt{\frac{2}{\mu}V_t}, \sqrt{\frac{2}{\mu}V_t}\right\} 
			\le
			\sqrt{\frac{2V_t}{\mu}}.
		\end{align*}
		Therefore, we have
		\begin{align*}
        \small\begin{split}            
        	V_{t+1} 
			\leq& 
			(1-\alpha) V_t + \frac{1}{L}\norm{\bbg_t - \nabla f(\bby_t)}\norm{\bbg_t} + \sqrt{\frac{2V_t}{\mu}} \norm{\bbg_t - \nabla f(\bby_t)}
			\\
			\le&(1-\alpha) V_t +  \frac{1}{L}\norm{\bbg_t - \nabla f(\bby_t)}^2 + \frac{1}{L}\norm{\bbg_t - \nabla f(\bby_t)}\norm{\nabla f(\bby_t)} 
			+ \sqrt{\frac{2V_t}{\mu}} \norm{\bbg_t - \nabla f(\bby_t)}
			\\
			{\le}&
			(1-\alpha) V_t +  \frac{1}{L}\norm{\bbg_t - \nabla f(\bby_t)}^2 + 2\sqrt{\frac{2V_t}{\mu}} \norm{\bbg_t - \nabla f(\bby_t)}
			\\
			\le& (1-\alpha) V_t +  \frac{1}{L}\norm{\bbg_t - \nabla f(\bby_t)}^2 + \frac{\alpha}{4} V_t + \frac{8}{\alpha^3} \cdot \frac{1}{L}\norm{\bbg_t - \nabla f(\bby_t)}^2
			\\
			\stackrel{\eqref{eq:g_y}}{\le}&
			\left( 1 - \frac{3}{4}\alpha \right) V_t + \left( 1 + \frac{8}{\alpha^3} \right) \cdot \frac{M^2}{L} \cdot \frac{1}{m} \norm{\yb_t - \mathbf{1}\bby_t}^2.
		\end{split}
		\end{align*}
	\end{proof}

    Now, we provide the proof of Theorem~\ref{thm:main_1}.
	\begin{proof} Let the eigenvector $\bv$ be defined in Lemma~\ref{lem:lam_max_1} and set $\bv(3)=1$. 
		Combining with the fact that first two entries of $\zb_0$ are zero, we can obtain that,
		\begin{equation*}
			\zb_0 \le \norm{\zb_0}\bv \qquad \mbox{and}\qquad [0,0, 1]^\top \le \bv.
		\end{equation*} 
		By Eq.~\eqref{eq:zaz_1}, we can obtain that
		\begin{equation}
			\label{eq:nm_z_1}
			\begin{aligned}
				\zb_{t+1} 
				\le& 
				\norm{\zb_0} \cdot\mathbf{A}^{t+1}\bv 
				+ 
				4\sqrt{\frac{2m}{\mu}}\cdot\sum_{i=0}^t\sqrt{V_i}\cdot\mathbf{A}^{t-i}\bv
				\\
				=&
				\norm{\zb_0}\lambda_1(\mathbf{A})^{t+1}\bv 
				+
				4\sqrt{\frac{2m}{\mu}}\cdot\sum_{i=0}^t\sqrt{V_i}\cdot\lambda_1(\mathbf{A})^{t-i}\bv 
				\\
				\le&
				\norm{\zb_0}\left(\frac{1}{2}\right)^{t+1}\cdot\bv 
				+
				4\sqrt{\frac{2m}{\mu}}\cdot\sum_{i=0}^t\left(\frac{1}{2}\right)^{t-i}\sqrt{V_i}\cdot\bv,
			\end{aligned}
		\end{equation}
		where the first equality is because $\bv$ is the eigenvector associated with $\lambda_1(\mathbf{A})$ and the last inequality is because of Lemma~\ref{lem:lam_max_1}.
		
		Next, we will prove our result by induction. 
		We have $\norm{\bbs_0 - \eta\nabla f(\bby_0)} = 0$, 
		because the initial values $\xb_0(i,:)$ are equal to each other.
		Then by Eq.~\eqref{eq:V_up}, we have
		\begin{equation*}
			V_1 \leq \left(1-\frac{3\alpha}{4}\right) V_0 \leq \left(1 - \frac{\alpha}{2}\right) \left(V_0+\frac{\mu}{288m}\norm{\zb_0}^2\right).
		\end{equation*}
		Next, we assume that for $i= 1,\dots, t$,  it holds that 
		\begin{equation*}
			V_i \leq \left(1 - \frac{\alpha}{2}\right)^i \left(V_0+\frac{\mu}{288m}\norm{\zb_0}^2\right).
		\end{equation*}
		Combining with Eq.~\eqref{eq:nm_z_1}, we can obtain that
		\begin{equation}
			\label{eq:z_1}
			\begin{aligned}
				\zb_{t-1}
				\le&
				\bv\cdot\left(4 \sqrt{\frac{2m}{\mu}} \sum_{j=0}^{t-2} 2^{-(t-2-j)} \sqrt{V_j}
				+
				2^{-(t-1)}\norm{\zb_0}
				\right)
				\\
				\le&
				\bv \cdot
				\left(
				4 \sqrt{\frac{2m}{\mu}}\sum_{j=0}^{t-2} 2^{-(t-2-j)} \left(\sqrt{1 - \frac{\alpha}{2}}\,\right)^j \sqrt{V_0+\frac{\mu}{288m}\norm{\zb_0}^2}
				+
				2^{-(t-1)}\norm{\zb_0}
				\right)
				\\
				=&
				\bv\cdot
				\left(4\sqrt{\frac{2m}{\mu}}\frac{2\left(\sqrt{1 - {\alpha}/{2}}\,\right)^{t-1}- 2^{-(t-2)}}{2\sqrt{1-{\alpha}/{2}}-1} \sqrt{V_0+\frac{\mu}{288m}\norm{\zb_0}^2}
				+
				2^{-(t-1)}\norm{\zb_0}
				\right)
				\\
				\le&
				\bv\cdot
				\left(
				12\sqrt{\frac{2m}{\mu}}\left(\sqrt{1 - \frac{\alpha}{2}}\,\right)^{t-1} \sqrt{V_0+\frac{\mu}{288m}\norm{\zb_0}^2}
				+
				2^{-(t-1)}\norm{\zb_0}
				\right).
			\end{aligned}
		\end{equation}
		
		Now we upper bound the value of $\norm{\bbs_t - \nabla f(\bby_t)}$.
		First, by Lemma~\ref{lem:yvs_1}, we can obtain that
		\begin{align*}
			\begin{split}
				&\norm{\yb_t - \mathbf{1}\bby_t} \\
				\le\,&
				\dotprod{[2\rho, \rho, 2\rho M\eta], \zb_{t-1}}
				\\
				\overset{\eqref{eq:z_1}}{\le}& \rho\left(2\bv(1) + \bv(2) + 2M\eta\right) \cdot
				\left(
				12\sqrt{\frac{2m}{\mu}}\left(\sqrt{1 - \frac{\alpha}{2}} \right)^{t-1} \sqrt{V_0+\frac{\mu}{288m}\norm{\zb_0}^2}
				+
				2^{-(t-1)}\norm{\zb_0}
				\right)
				\\
				\stackrel{\eqref{eq:v_v_1}}{\le}&
				\rho\cdot \left( \frac{2}{18 M\eta } +  \frac{1 }{18\sqrt{\rho} M\eta} + \frac{M\eta}{\sqrt{\rho}} + 2M\eta\right)  \\
				& \cdot
				\left(
				12\sqrt{\frac{2m}{\mu}}\left(\sqrt{1 - \frac{\alpha}{2}} \right)^{t-1} \sqrt{V_0+\frac{\mu}{288m}\norm{\zb_0}^2}
				+
				2^{-(t-1)}\norm{\zb_0}
				\right)
				\\
				\le\,&\sqrt{\rho} \cdot 2 M\eta \cdot 
				\left(
				12\sqrt{\frac{2m}{\mu}}\left(\sqrt{1 - \frac{\alpha}{2}} \right)^{t-1} \sqrt{V_0+\frac{\mu}{288m}\norm{\zb_0}^2}
				+
				2^{-(t-1)}\norm{\zb_0}
				\right) .
			\end{split}
		\end{align*}
		Combining the inductive hypothesis with Eq.~\eqref{eq:V_up}, we have
		\begin{equation}
			\begin{aligned}
				& V_{t+1} \\
				\overset{\eqref{eq:V_up}}{\le}&
				\left( 1 - \frac{3}{4}\alpha \right) V_t + \left( 1 + \frac{8}{\alpha^3} \right) \cdot \frac{M^2}{L} \cdot \frac{1}{m} \norm{\yb_t - \mathbf{1}\bby_t}^2
				\\
				\le&
				\left(1 - \frac{3\alpha}{4}\right)\left(1-\frac{\alpha}{2}\right)^{t}\left( V_0 +  \frac{\mu}{288m}\norm{\zb_0}^2\right)
				\\
				&+ 2\rho \cdot \left( 1 + \frac{8}{\alpha^3} \right) \cdot \frac{M^2}{L}\cdot   (2 M\eta)^2 \cdot 
				\left( \frac{288m}{\mu} \left( 1 - \frac{\alpha}{2} \right)^{t-1} \left( V_0 +  \frac{\mu}{288m}\norm{\zb_0}^2\right) + 4^{-(t-1)} \norm{\zb_0}^2 \right)
				\\
				\le&
				\left(1 - \frac{3\alpha}{4}\right)\left(1-\frac{\alpha}{2}\right)^{t}\left( V_0 +  \frac{\mu}{288m}\norm{\zb_0}^2\right)
				\\&
				+ 8 \cdot 288\cdot \rho \cdot \left( 1 + \frac{8}{\alpha^3} \right) \cdot \frac{M^2}{L^2} \cdot \frac{L}{\mu}\cdot   (2 M\eta)^2 \cdot 
				\left( {1-\frac{\alpha}{2}}\right) ^t\left(  {V_0}  
				+
				\frac{\mu}{288m}\norm{\zb_0}^2\right)
				\\
				\le&
				\left(1-\frac{\alpha}{2}\right)^{t+1}\left(V_0+\frac{\mu}{288m}\norm{\zb_0}^2\right),
			\end{aligned} 
		\end{equation}
		where the last inequality is because of  
		\begin{align*}
			\rho \le \frac{1}{4^3\cdot9\cdot 288 } \cdot \left(\frac{L}{M}\right)^4 \kappa_g^{-3}.
		\end{align*}		
		Therefore, we can obtain that at the  $(t+1)$-th iteration, it also holds that
		\begin{equation*}
			V_{t+1} \le \left(1-\frac{\alpha}{2}\right)^{t+1} \left(V_0+\frac{\mu}{288m}\norm{\zb_0}^2\right).
		\end{equation*}		
		Furthermore, 
		\begin{align*}
			&\frac{1}{\rho}\norm{\xb_t - \mathbf{1}\bbx_t} 
			\stackrel{\eqref{eq:zaz_1}}{\le} \dotprod{[1, 0, M\eta], z_{t-1}} \\
			\stackrel{\eqref{eq:z_1}}{\le}& \dotprod{[1, 0, M\eta], \bv} \cdot \left(
			12\sqrt{\frac{2m}{\mu}}\left(\sqrt{1 - \frac{\alpha}{2}} \right)^{t-1} \sqrt{V_0+\frac{\mu}{288m}\norm{\zb_0}^2}
			+
			2^{-(t-1)}\norm{\zb_0}
			\right)
			\\
			\stackrel{\eqref{eq:v_v_1}}{\le}& 2 M\eta \cdot \left(
			12\sqrt{\frac{2m}{\mu}}\left(\sqrt{1 - \frac{\alpha}{2}} \right)^{t-1} \sqrt{V_0+\frac{\mu}{288m}\norm{\zb_0}^2}
			+
			2^{-(t-1)}\norm{\zb_0}
			\right).
		\end{align*}
		Thus, we can obtain that 
		\begin{align*}
			\norm{\xb_t - \mathbf{1}\bbx_t} 
			\le & \rho \cdot  2 M\eta \cdot \left(
			12\sqrt{\frac{2m}{\mu}}\left(\sqrt{1 - \frac{\alpha}{2}\,} \right)^{t-1} \sqrt{V_0+\frac{\mu}{288m}\norm{\zb_0}^2}
			+
			2^{-(t-1)}\norm{\zb_0}
			\right) \\
			=& \cO\left(\sqrt{\frac{m\epsilon}{\mu}}\,\right).
		\end{align*}
		This finishes our proof.

	\end{proof}
 
	\bibliography{ref}		

\begin{thebibliography}{50}
\providecommand{\natexlab}[1]{#1}
\providecommand{\url}[1]{\texttt{#1}}
\expandafter\ifx\csname urlstyle\endcsname\relax
  \providecommand{\doi}[1]{doi: #1}\else
  \providecommand{\doi}{doi: \begingroup \urlstyle{rm}\Url}\fi

\bibitem[Alghunaim et~al.(2019)Alghunaim, Yuan, and
  Sayed]{alghunaim2019linearly}
Sulaiman~A. Alghunaim, Kun Yuan, and Ali~H. Sayed.
\newblock A linearly convergent proximal gradient algorithm for decentralized
  optimization.
\newblock \emph{NeurIPS}, 2019.

\bibitem[Alghunaim et~al.(2020)Alghunaim, Ryu, Yuan, and
  Sayed]{alghunaim2020decentralized}
Sulaiman~A. Alghunaim, Ernest Ryu, Kun Yuan, and Ali~H. Sayed.
\newblock Decentralized proximal gradient algorithms with linear convergence
  rates.
\newblock \emph{IEEE Transactions on Automatic Control}, 2020.

\bibitem[Allen-Zhu(2018)]{allen2018katyusha}
Zeyuan Allen-Zhu.
\newblock Katyusha {X}: Simple momentum method for stochastic sum-of-nonconvex
  optimization.
\newblock In \emph{ICML}, 2018.

\bibitem[Berahas et~al.(2018)Berahas, Bollapragada, Keskar, and
  Wei]{berahas2018balancing}
Albert~S. Berahas, Raghu Bollapragada, Nitish~Shirish Keskar, and Ermin Wei.
\newblock Balancing communication and computation in distributed optimization.
\newblock \emph{IEEE Transactions on Automatic Control}, 64\penalty0
  (8):\penalty0 3141--3155, 2018.

\bibitem[Bullo et~al.(2009)Bullo, Cortes, and Martinez]{bullo2009distributed}
Francesco Bullo, Jorge Cortes, and Sonia Martinez.
\newblock \emph{Distributed control of robotic networks: a mathematical
  approach to motion coordination algorithms}, volume~27.
\newblock Princeton University Press, 2009.

\bibitem[Chang and Lin(2011)]{chang2011libsvm}
Chih-Chung Chang and Chih-Jen Lin.
\newblock {LIBSVM}: a library for support vector machines.
\newblock \emph{ACM Transactions on Intelligent Systems and Technology},
  2\penalty0 (3):\penalty0 1--27, 2011.

\bibitem[Di~Lorenzo and Scutari(2015)]{di2015distributed}
Paolo Di~Lorenzo and Gesualdo Scutari.
\newblock Distributed nonconvex optimization over networks.
\newblock In \emph{Workshop on CAMSAP}, 2015.

\bibitem[Di~Lorenzo and Scutari(2016)]{di2016next}
Paolo Di~Lorenzo and Gesualdo Scutari.
\newblock Next: In-network nonconvex optimization.
\newblock \emph{IEEE Transactions on Signal and Information Processing over
  Networks}, 2\penalty0 (2):\penalty0 120--136, 2016.

\bibitem[Erseghe et~al.(2011)Erseghe, Zennaro, Dall'Anese, and
  Vangelista]{erseghe2011fast}
Tomaso Erseghe, Davide Zennaro, Emiliano Dall'Anese, and Lorenzo Vangelista.
\newblock Fast consensus by the alternating direction multipliers method.
\newblock \emph{IEEE Transactions on Signal Processing}, 59\penalty0
  (11):\penalty0 5523--5537, 2011.

\bibitem[Garber et~al.(2016)Garber, Hazan, Jin, Kakade, Musco, Netrapalli, and
  Sidford]{garber2016robust}
Dan Garber, Elad Hazan, Chi Jin, Sham~M. Kakade, Cameron Musco, Praneeth
  Netrapalli, and Aaron Sidford.
\newblock Robust shift-and-invert preconditioning: Faster and more sample
  efficient algorithms for eigenvector computation.
\newblock In \emph{ICML}, 2016.

\bibitem[Hong et~al.(2017)Hong, Hajinezhad, and Zhao]{hong2017prox}
Mingyi Hong, Davood Hajinezhad, and Ming-Min Zhao.
\newblock Prox-{PDA}: The proximal primal-dual algorithm for fast distributed
  nonconvex optimization and learning over networks.
\newblock In \emph{ICML}, 2017.

\bibitem[Horn and Johnson(2012)]{horn2012matrix}
Roger~A. Horn and Charles~R. Johnson.
\newblock \emph{Matrix analysis}.
\newblock Cambridge university press, 2012.

\bibitem[Jakoveti{\'c}(2018)]{jakovetic2018unification}
Du{\v{s}}an Jakoveti{\'c}.
\newblock A unification and generalization of exact distributed first-order
  methods.
\newblock \emph{IEEE Transactions on Signal and Information Processing over
  Networks}, 5\penalty0 (1):\penalty0 31--46, 2018.

\bibitem[Jakoveti{\'c} et~al.(2014)Jakoveti{\'c}, Xavier, and
  Moura]{jakovetic2014fast}
Du{\v{s}}an Jakoveti{\'c}, Joao Xavier, and Jos{\'e}~M.F. Moura.
\newblock Fast distributed gradient methods.
\newblock \emph{IEEE Transactions on Automatic Control}, 59\penalty0
  (5):\penalty0 1131--1146, 2014.

\bibitem[Kairouz et~al.(2021)Kairouz, McMahan, Avent, Bellet, Bennis, Bhagoji,
  Bonawitz, Charles, Cormode, Cummings, et~al.]{kairouz2019advances}
Peter Kairouz, H.~Brendan McMahan, Brendan Avent, Aur{\'e}lien Bellet, Mehdi
  Bennis, Arjun~Nitin Bhagoji, Kallista Bonawitz, Zachary Charles, Graham
  Cormode, Rachel Cummings, et~al.
\newblock Advances and open problems in federated learning.
\newblock \emph{Foundations and Trends{\textregistered} in Machine Learning},
  14\penalty0 (1--2):\penalty0 1--210, 2021.

\bibitem[Khan et~al.(2009)Khan, Kar, and Moura]{khan2009diland}
Usman~A. Khan, Soummya Kar, and Jos{\'e}~M.F. Moura.
\newblock Diland: An algorithm for distributed sensor localization with noisy
  distance measurements.
\newblock \emph{IEEE Transactions on Signal Processing}, 58\penalty0
  (3):\penalty0 1940--1947, 2009.

\bibitem[Kovalev et~al.(2020)Kovalev, Salim, and
  Richt{\'a}rik]{kovalev2020optimal}
Dmitry Kovalev, Adil Salim, and Peter Richt{\'a}rik.
\newblock Optimal and practical algorithms for smooth and strongly convex
  decentralized optimization.
\newblock In \emph{NeurIPS}, 2020.

\bibitem[Lan et~al.(2020)Lan, Lee, and Zhou]{lan2018communication}
Guanghui Lan, Soomin Lee, and Yi~Zhou.
\newblock Communication-efficient algorithms for decentralized and stochastic
  optimization.
\newblock \emph{Mathematical Programming}, 180\penalty0 (1-2):\penalty0
  237--284, 2020.

\bibitem[Li et~al.(2020{\natexlab{a}})Li, Cen, Chen, and
  Chi]{li2019communication}
Boyue Li, Shicong Cen, Yuxin Chen, and Yuejie Chi.
\newblock Communication-efficient distributed optimization in networks with
  gradient tracking and variance reduction.
\newblock \emph{Journal of Machine Learning Research}, 21:\penalty0 1--51,
  2020{\natexlab{a}}.

\bibitem[Li and Lin(2020)]{li2020revisiting}
Huan Li and Zhouchen Lin.
\newblock Revisiting {EXTRA} for smooth distributed optimization.
\newblock \emph{SIAM Journal on Optimization}, 30\penalty0 (3):\penalty0
  1795--1821, 2020.

\bibitem[Li and Lin(2021)]{li2021accelerated}
Huan Li and Zhouchen Lin.
\newblock Accelerated gradient tracking over time-varying graphs for
  decentralized optimization.
\newblock \emph{arXiv preprint arXiv:2104.02596}, 2021.

\bibitem[Li et~al.(2020{\natexlab{b}})Li, Fang, Yin, and Lin]{li2018sharp}
Huan Li, Cong Fang, Wotao Yin, and Zhouchen Lin.
\newblock Decentralized accelerated gradient methods with increasing penalty
  parameters.
\newblock \emph{IEEE transactions on Signal Processing}, 68:\penalty0
  4855--4870, 2020{\natexlab{b}}.

\bibitem[Li et~al.(2019)Li, Shi, and Yan]{li2019decentralized}
Zhi Li, Wei Shi, and Ming Yan.
\newblock A decentralized proximal-gradient method with network independent
  step-sizes and separated convergence rates.
\newblock \emph{IEEE Transactions on Signal Processing}, 67\penalty0
  (17):\penalty0 4494--4506, 2019.

\bibitem[Liu and Morse(2011)]{liu2011accelerated}
Ji~Liu and A.~Stephen Morse.
\newblock Accelerated linear iterations for distributed averaging.
\newblock \emph{Annual Reviews in Control}, 35\penalty0 (2):\penalty0 160--165,
  2011.

\bibitem[Lopes and Sayed(2008)]{lopes2008diffusion}
Cassio~G. Lopes and Ali~H. Sayed.
\newblock Diffusion least-mean squares over adaptive networks: Formulation and
  performance analysis.
\newblock \emph{IEEE Transactions on Signal Processing}, 56\penalty0
  (7):\penalty0 3122--3136, 2008.

\bibitem[Mokhtari and Ribeiro(2016)]{mokhtari2016dsa}
Aryan Mokhtari and Alejandro Ribeiro.
\newblock {DSA}: Decentralized double stochastic averaging gradient algorithm.
\newblock \emph{Journal of Machine Learning Research}, 17\penalty0
  (1):\penalty0 2165--2199, 2016.

\bibitem[Nedic and Ozdaglar(2009)]{nedic2009distributed}
Angelia Nedic and Asuman Ozdaglar.
\newblock Distributed subgradient methods for multi-agent optimization.
\newblock \emph{IEEE Transactions on Automatic Control}, 54\penalty0
  (1):\penalty0 48--61, 2009.

\bibitem[Nedic et~al.(2017)Nedic, Olshevsky, and Shi]{nedic2017achieving}
Angelia Nedic, Alex Olshevsky, and Wei Shi.
\newblock Achieving geometric convergence for distributed optimization over
  time-varying graphs.
\newblock \emph{SIAM Journal on Optimization}, 27\penalty0 (4):\penalty0
  2597--2633, 2017.

\bibitem[Nesterov(2018)]{nesterov2018lectures}
Yurii Nesterov.
\newblock \emph{Lectures on convex optimization}, volume 137.
\newblock Springer, 2018.

\bibitem[Qu and Li(2017)]{qu2017harnessing}
Guannan Qu and Na~Li.
\newblock Harnessing smoothness to accelerate distributed optimization.
\newblock \emph{IEEE Transactions on Control of Network Systems}, 5\penalty0
  (3):\penalty0 1245--1260, 2017.

\bibitem[Qu and Li(2019)]{qu2019accelerated}
Guannan Qu and Na~Li.
\newblock Accelerated distributed {Nesterov} gradient descent.
\newblock \emph{IEEE Transactions on Automatic Control}, 2019.

\bibitem[Rabbat and Nowak(2004)]{rabbat2004distributed}
Michael Rabbat and Robert Nowak.
\newblock Distributed optimization in sensor networks.
\newblock In \emph{IPSN}, 2004.

\bibitem[Ribeiro(2010)]{ribeiro2010ergodic}
Alejandro Ribeiro.
\newblock Ergodic stochastic optimization algorithms for wireless communication
  and networking.
\newblock \emph{IEEE Transactions on Signal Processing}, 58\penalty0
  (12):\penalty0 6369--6386, 2010.

\bibitem[Scaman et~al.(2017)Scaman, Bach, Bubeck, Lee, and
  Massouli{\'e}]{seaman2017optimal}
Kevin Scaman, Francis Bach, S{\'e}bastien Bubeck, Yin~Tat Lee, and Laurent
  Massouli{\'e}.
\newblock Optimal algorithms for smooth and strongly convex distributed
  optimization in networks.
\newblock In \emph{ICML}, 2017.

\bibitem[Scaman et~al.(2018)Scaman, Bach, Bubeck, Massouli{\'e}, and
  Lee]{scaman2018optimal}
Kevin Scaman, Francis Bach, S{\'e}bastien Bubeck, Laurent Massouli{\'e}, and
  Yin~Tat Lee.
\newblock Optimal algorithms for non-smooth distributed optimization in
  networks.
\newblock In \emph{NeurIPS}, 2018.

\bibitem[Scaman et~al.(2019)Scaman, Bach, Bubeck, Lee, and
  Massouli{\'e}]{scaman2019optimal}
Kevin Scaman, Francis Bach, S{\'e}bastien Bubeck, Yin Lee, and Laurent
  Massouli{\'e}.
\newblock Optimal convergence rates for convex distributed optimization in
  networks.
\newblock \emph{Journal of Machine Learning Research}, 20:\penalty0 1--31,
  2019.

\bibitem[Shi et~al.(2014)Shi, Ling, Yuan, Wu, and Yin]{shi2014linear}
Wei Shi, Qing Ling, Kun Yuan, Gang Wu, and Wotao Yin.
\newblock On the linear convergence of the admm in decentralized consensus
  optimization.
\newblock \emph{IEEE Transactions on Signal Processing}, 62\penalty0
  (7):\penalty0 1750--1761, 2014.

\bibitem[Shi et~al.(2015{\natexlab{a}})Shi, Ling, Wu, and Yin]{ShiLWY15}
Wei Shi, Qing Ling, Gang Wu, and Wotao Yin.
\newblock A proximal gradient algorithm for decentralized composite
  optimization.
\newblock \emph{IEEE Transactions on Signal Processing}, 63\penalty0
  (22):\penalty0 6013--6023, 2015{\natexlab{a}}.

\bibitem[Shi et~al.(2015{\natexlab{b}})Shi, Ling, Wu, and Yin]{shi2015extra}
Wei Shi, Qing Ling, Gang Wu, and Wotao Yin.
\newblock {EXTRA}: An exact first-order algorithm for decentralized consensus
  optimization.
\newblock \emph{SIAM Journal on Optimization}, 25\penalty0 (2):\penalty0
  944--966, 2015{\natexlab{b}}.

\bibitem[Song et~al.(2023)Song, Shi, Pu, and Yan]{song2021optimal}
Zhuoqing Song, Lei Shi, Shi Pu, and Ming Yan.
\newblock Optimal gradient tracking for decentralized optimization.
\newblock \emph{Mathematical Programming}, pages 1--53, 2023.

\bibitem[Sun et~al.(2022)Sun, Scutari, and Daneshmand]{sun2019convergence}
Ying Sun, Gesualdo Scutari, and Amir Daneshmand.
\newblock Distributed optimization based on gradient tracking revisited:
  Enhancing convergence rate via surrogation.
\newblock \emph{SIAM Journal on Optimization}, 32\penalty0 (2):\penalty0
  354--385, 2022.

\bibitem[Terelius et~al.(2011)Terelius, Topcu, and
  Murray]{terelius2011decentralized}
H{\aa}kan Terelius, Ufuk Topcu, and Richard~M. Murray.
\newblock Decentralized multi-agent optimization via dual decomposition.
\newblock \emph{IFAC proceedings volumes}, 44\penalty0 (1):\penalty0
  11245--11251, 2011.

\bibitem[Tsianos et~al.(2012)Tsianos, Lawlor, and Rabbat]{tsianos2012consensus}
Konstantinos~I. Tsianos, Sean Lawlor, and Michael~G. Rabbat.
\newblock Consensus-based distributed optimization: Practical issues and
  applications in large-scale machine learning.
\newblock In \emph{Allerton}, 2012.

\bibitem[Uribe et~al.(2020)Uribe, Lee, Gasnikov, and Nedi{\'c}]{uribe2018dual}
C{\'e}sar~A Uribe, Soomin Lee, Alexander Gasnikov, and Angelia Nedi{\'c}.
\newblock A dual approach for optimal algorithms in distributed optimization
  over networks.
\newblock In \emph{ITA Workshop}, 2020.

\bibitem[Xiao and Boyd(2004)]{xiao2004fast}
Lin Xiao and Stephen Boyd.
\newblock Fast linear iterations for distributed averaging.
\newblock \emph{Systems \& Control Letters}, 53\penalty0 (1):\penalty0 65--78,
  2004.

\bibitem[Xu et~al.(2015)Xu, Zhu, Soh, and Xie]{xu2015augmented}
Jinming Xu, Shanying Zhu, Yeng~Chai Soh, and Lihua Xie.
\newblock Augmented distributed gradient methods for multi-agent optimization
  under uncoordinated constant stepsizes.
\newblock In \emph{CDC}, 2015.

\bibitem[Xu et~al.(2021)Xu, Tian, Sun, and Scutari]{xu2021distributed}
Jinming Xu, Ye~Tian, Ying Sun, and Gesualdo Scutari.
\newblock Distributed algorithms for composite optimization: unified framework
  and convergence analysis.
\newblock \emph{IEEE Transactions on Signal Processing}, 69:\penalty0
  3555--3570, 2021.

\bibitem[Ye et~al.(2020)Ye, Zhou, Luo, and Zhang]{Ye2020}
Haishan Ye, Ziang Zhou, Luo Luo, and Tong Zhang.
\newblock Decentralized accelerated proximal gradient descent.
\newblock In \emph{NeurIPS}, 2020.

\bibitem[Yuan et~al.(2016)Yuan, Ling, and Yin]{yuan2016convergence}
Kun Yuan, Qing Ling, and Wotao Yin.
\newblock On the convergence of decentralized gradient descent.
\newblock \emph{SIAM Journal on Optimization}, 26\penalty0 (3):\penalty0
  1835--1854, 2016.

\bibitem[Zhu and Mart{\'\i}nez(2010)]{zhu2010discrete}
Minghui Zhu and Sonia Mart{\'\i}nez.
\newblock Discrete-time dynamic average consensus.
\newblock \emph{Automatica}, 46\penalty0 (2):\penalty0 322--329, 2010.

\end{thebibliography}

\end{document}